\newif\ifarxiv
\def\eqref#1{equation~\ref{#1}}
\def\1{\bm{1}}
\DeclareMathAlphabet{\mathsfit}{\encodingdefault}{\sfdefault}{m}{sl}
\SetMathAlphabet{\mathsfit}{bold}{\encodingdefault}{\sfdefault}{bx}{n}
\DeclareMathOperator*{\argmax}{arg\,max}
\renewcommand{\cite}{\citep}
\renewcommand{\eqref}[1]{\textup{(\ref{#1})}}
\newcommand*{\obj}{\ensuremath{f}} 
\newcommand*{\noisysample}{y} 
\newcommand{\queryresult}{R}
\newcommand{\boiter}{\ensuremath{t}}
\newcommand{\obsnoise}{\ensuremath{\epsilon}}
\newcommand{\bodata}{\mathcal{D}}
\newcommand{\params}{\boldsymbol{x}}
\newcommand{\descp}{\boldsymbol{z}} 
\newcommand{\optim}{\mathcal{O}}
\newcommand{\DS}{Q}
\def\plist@algorithm{Alg.\space}
\algnewcommand{\LineComment}[1]{\State \(\triangleright\) #1}
\theoremstyle{plain}
\newtheorem{theorem}{Theorem}
\newtheorem{lemma}{Lemma}
\newtheorem{corollary}{Corollary}
\theoremstyle{definition}
\newtheorem{definition}{Definition}
\newtheorem{assumption}{Assumption}
\theoremstyle{remark}
\newtheorem{remark}{Remark}
\title{Local Entropy Search over Descent Sequences for Bayesian Optimization}
\author{$\textbf{David Stenger}^{1},\ \textbf{Armin Lindicke}^{1}\ ,\ \textbf{Alexander von Rohr}^{2}\ ,\ \textbf{Sebastian Trimpe}^{1}$\\
$^{1}$Institute for Data Science in Mechanical Engineering, RWTH Aachen University, Germany\\
$^{2}$Learning Systems and Robotics Lab, Technical University of Munich, Germany\\
$^{1}$\texttt{\{david.stenger,trimpe\}@dsme.rwth-aachen.de}, $^{2}$\texttt{\{alex.von.rohr\}@tum.de}\\ }
\begin{document}

\maketitle

\begin{abstract}
Searching large and complex design spaces for a global optimum can be infeasible and unnecessary. A practical alternative is to iteratively refine the neighborhood of an initial design using local optimization methods such as gradient descent. We propose local entropy search (LES), a Bayesian optimization paradigm that explicitly targets the solutions reachable by the descent sequences of iterative optimizers. The algorithm propagates the posterior belief over the objective through the optimizer, resulting in a probability distribution over descent sequences. It then selects the next evaluation by maximizing mutual information with that distribution, using a combination of analytic entropy calculations and Monte-Carlo sampling of descent sequences.
Empirical results on high-complexity synthetic objectives and benchmark problems show that LES achieves strong sample efficiency compared to existing local and global Bayesian optimization methods.
\end{abstract}

\section{Introduction}

Many practical optimization problems can be solved to the desired accuracy by relying solely on iterative search strategies such as gradient descent, quasi-Newton methods, or evolutionary algorithms. These methods do not necessarily discover a global minimizer, but refine the current solution. Local optimization has repeatedly demonstrated its effectiveness in finding good solutions particularly in high-dimensional and complex search spaces. Indeed, gradient-based methods remain state-of-the-art for solving extreme-scale problems such as training deep neural networks with billions of parameters \cite{chowdhery2023palm}. However, those local optimization methods cannot directly be applied to expensive-to-evaluate black-box functions due to their poor sample-efficiency.

Bayesian optimization (BO) \cite{garnett2023bayesian} methods are popular for expensive-to-evaluate black-box functions, yet they typically aim to minimize regret relative to the global optimum. Global search requires reducing uncertainty across the entire domain, which can be intractable in large and high-dimensional spaces \cite{hvarfner2024vanilla,xu2025standard}. Thus, the emphasis on global search in BO stands in contrast to the demonstrated effectiveness of local optimization for complex problems.

We introduce local entropy search (LES)
\footnote{Code available at \href{https://github.com/Data-Science-in-Mechanical-Engineering/local-entropy-search}{https://github.com/Data-Science-in-Mechanical-Engineering/local-entropy-search.}},
an information-theoretic framework for local BO that transfers the idea of iterative local optimizers to the expensive-to-evaluate black-box setting. LES explicitly targets the solution obtainable by an iterative optimizer starting from an initial design. LES propagates the uncertainty of a Gaussian process (GP) surrogate through the local optimizer, yielding a distribution over the descent sequence and the local optimum (see~Fig.~\ref{fig:push_forward}). At each iteration, LES chooses the next evaluation to maximize the mutual information with the distribution over the descent sequence, thereby reducing its uncertainty.

While several recent efforts (see Sec.~\ref{sec:related_work}) have brought local strategies to BO framework, acquisition functions have so far provided only partial use of the iterative descent structure or have focused on special cases. Our approach builds directly on entropy search principles \cite{hennig2012entropy} but shifts the focus from the global optimum to the reachable local optimum as defined by the optimizer's descent trajectory.

To ensure tractability, LES combines analytic predictive entropy calculations with Monte-Carlo conditioning over sampled descent sequences. We evaluate LES against state-of-the-art local and global BO methods on higher-dimensional synthetic objectives with varying complexity and policy search tasks. LES achieves lower simple and cumulative regret with fewer evaluations especially in high-complexity tasks.
In summary, our contributions are:
\begin{compactitem}
    \item We formulate \emph{local entropy search} (LES) as an information-theoretic Bayesian optimization paradigm that targets the terminal iterate of an arbitrary optimizer, making LES the first entropy-based approach explicitly focusing on local optima rather than the global optimum.
    \item We present a computationally lightweight instantiation of LES as an active learning problem over descent sequences by propagating the posterior belief through the optimizer via Monte-Carlo approximation based on efficient GP sampling \cite{wilson2020efficiently}.
    \item We provide empirical evidence that this LES instantiation surpasses both global entropy-search baselines and existing local BO methods on high-complexity benchmarks.
\end{compactitem}
In addition, we adapt a recent stopping criterion for BO \cite{wilson2024stopping} to the LES setting, guaranteeing a probabilistic \emph{local regret} bound (see Appx.~\ref{appdx:stopping}). 

\begin{figure*}
  \centering
    \includegraphics[width = \textwidth]{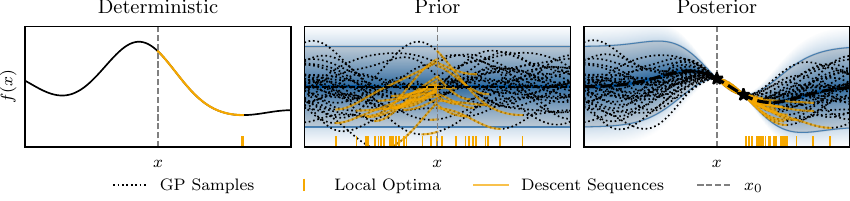}
    \caption{\textbf{Distribution over Descent Sequences:} \emph{Left:} Local optimization on the unknown objective function. \emph{Middle:} The prior over the objective function induces a belief over descent sequences. \emph{Right:} After sampling data points the distributions over descent sequences and local optimum approach the deterministic ones. In LES the next query minimizes the entropy of the descent sequences.} 
  \label{fig:push_forward}
\end{figure*}

\section{Preliminaries}

In this section, we discuss the related work on local BO and entropy search and briefly introduce GPs and an analytical approximation of their sampling paths.

\subsection{Related Work}\label{sec:related_work}

Bayesian optimization is a popular method for many challenging real-world applications such as AutoML \cite{barbudo2023eight}, drug discovery \cite{colliandre2023bayesian}, and policy search \cite{paulson2023tutorial}. For a recent introduction and overview see \cite{garnett2023bayesian}.
Below we present related work in entropy search and local BO -- the two BO subfields most relevant to LES.   

\paragraph{Entropy Search}   
Global entropy search acquisition functions use an information-theoretic perspective to select the next BO query point (See Appx.~\ref{apdx:entropysearch} for a detailed introduction). They find a point that maximizes the expected information gain about properties of the global optimum. 
The original entropy search (ES) \cite{hennig2012entropy} and predictive entropy search (PES) \cite{hernandezlobato2014predictive} maximize the information gain about the \emph{location} of the optimum. 
In contrast, max-value entropy search (MES) \cite{wang2017max} maximize the information gain about the \emph{function value} of the optimum. MES is computationally more efficient than both ES and PES. 
Joint entropy search \cite{hvarfner2022joint, tu2022joint} maximizes the information gain about the joint distribution of \emph{location} and \emph{function value} of the optimum. 
Entropy search has been extended to constrained \cite{perrone2019constrained}, multi-objective \cite{belakaria2020max}, and multi-fidelity \cite{marco2017virtual} optimization.
With LES we propose the first local version of entropy search by targeting a local optimum instead of a global one.

\paragraph{Local BO} There are two main approaches commonly used for local BO, \emph{trust regions} \cite{akrour17local,frohlich2019bayesian, eriksson2019scalable} and line search \cite{kirschner2019adaptive, muller2021local, nguyen2022local, wu2023behavior, fan2024minimizing}.
Trust-region BO such as TuRBO \cite{eriksson2019scalable} explicitly maintain a subset of the search space $\mathcal{X}$ and restricts queries to be within this subset.
In contrast, BO methods based on line search maintain an incumbent solution and iteratively choose a search direction and step size to improve the candidate. For example, gradient information BO (GIBO) \cite{muller2021local} leverages the GP model of the objective's gradient $\nabla f$ to find its search direction and step size. 
Similar to these methods, LES starts its search from an incumbent solution but in contrast to GIBO and its variants does not use the learned gradient to update it but instead learns about the entire descent sequence. The local BO methods above have been shown to outperform global BO methods in high-complexity tasks.

Recent works showed that vanilla Bayesian optimization can solve high-dimensional problems if model complexity, specifically length scales of the GP kernel, is chosen appropriately \cite{hvarfner2024vanilla,xu2025standard}.
While assumed model complexity is an important consideration for BO, the design of suitable prior assumption can benefit  
local and global BO methods and the acquisition strategy is mostly orthogonal.  
In Section~\ref{sec:results}, we show empirically that local BO and especially LES outperforms global BO for complex and higher-dimensional tasks.

\subsection{GPs and Efficient Posterior GP Sampling} \label{sec:GP}

In this paper, BO uses a Gaussian Process (GP) \cite{rasmussen2006gaussian} as a fast-to-evaluate probabilistic surrogate for the unknown scalar function $\obj(\params)$. We denote a GP as
\begin{equation}\label{eq:gp_prior}
    p(f)=\mathcal{G} \mathcal{P}(f ; \mu, k),  
\end{equation}
 where $\mu: \mathcal{X} \to \mathbb{R}$ is the prior mean and $k: \mathcal{X} \times \mathcal{X} \to \mathbb{R}$ is the prior covariance function, with 
\begin{equation} 
\mu(\params)=\mathbb{E}[\obj (\params) \mid \params], \quad k(\params, \params')= \mathbb{E}[(\obj(\params)-\mu(\params))(\obj(\params')-\mu(\params')],
\end{equation}
where $\mathbb{E}$ is the expectation. Without loss of generality, we assume $\mu(\params) = 0$. 
Given a dataset of noisy observations $y(\params) = f(\params) + \epsilon_t$, where the noise realization $\obsnoise_\boiter$ is modeled as a draw from a normal distribution $\mathcal{N} (0, \sigma_n^2)$, denoted as 
$\bodata_\boiter=\{(\params_1, \noisysample_1), ...,(\params_\boiter, \noisysample_\boiter)\}$
the posterior belief over a function value $f(\params)$ is a normal distribution denoted as
\begin{equation}
    p(f(\params) \mid \bodata_\boiter ) = \mathcal{N}\left(f(\params) ; \mu(\params \mid \bodata_\boiter),k(\params, \params \mid \bodata_\boiter )\right).
\end{equation}
Here, we denote $\mu(\params \mid \bodata_\boiter)$ as the posterior mean and $k(\params, \params' \mid \bodata_\boiter)$ as the posterior covariance between two points $\params$ and $\params'$. The predictive variance for a noisy observation is $\sigma^2_y(\params \mid \bodata_\boiter) = k(\params, \params \mid \bodata_\boiter) + \sigma^2_n$.  See \cite{rasmussen2006gaussian} for details on GP regression.

To draw samples from the posterior distribution of objective functions $p(f\mid \bodata_k)$ we follow the analytical approximation proposed by \cite{wilson2020efficiently} that relies on Matheron’s rule \cite{journel1978mining} (see Appx. \ref{apdx:sampling}).
We denote $f^l_\boiter$ as a sample from $p(f\mid \bodata_\boiter)$:
\begin{equation} \label{eq:willson_sampling}
f^l_\boiter (\cdot) = \underbrace{\sum_{i=1}^{I} w^l_i \phi_i(\cdot)}_{\text {weight-space prior }}+\underbrace{\sum_{j=1}^\boiter v_j k\left(\cdot, \params_j\right)}_{\text { function-space update }},
\end{equation}
where $w^l_i$ are randomly drawn weights, $\phi_i(\cdot)$ are basis functions and $v_j$ is calculated from the training-data covariance matrix. 
The sample approximations of \eqref{eq:willson_sampling} are analytic which means we can easily differentiate with respect to $\params$ and apply iterative optimizers such as gradient descent.
LES can be applied to other probabilistic models from which we can efficiently draw posterior samples, e.g., variational Bayesian last layer models \cite{brunzema2024variational}.

\section{Problem Statement} \label{sec:problem}

The local black-box optimization problem is to find the best reachable solution from a given initial design $\params_0$
\begin{equation}\label{eq:probState}
\mathrm{given} \,\, \params_0 \in \mathcal{X} \text {, find } \params^*=\underset{\params \in \mathcal{X}\left(\params_0\right) \subseteq \mathcal{X}}{\arg \min } \obj(\params),
\end{equation}
where $f: \mathcal{X} \rightarrow \mathbb{R}$ is the black-box objective function and $\mathcal{X}\left(\params_0\right)$ is some neighborhood around $\params_0$.

We consider local optimization where an iterative (and possibly stochastic) optimization routine $\mathcal{O}$ generates a sequence of iterates -- the \emph{descent sequence} -- converging to a local optimum $\params^*$ as
\begin{equation}\label{eq:sequence}
\optim_{\params_0}: \mathcal{F}(\mathcal X)\longrightarrow\mathcal X^{N},
\qquad
\optim_{\params_0}(f):=(\descp_{0} = \params_0,\descp_{1},\dots) \subset \mathcal{X}
\end{equation}
where 
\begin{equation}\label{eq:ds_lim}
    \params^* = \lim_{n \to \infty} \descp_n,
\end{equation}
or $\params^* = \descp_N$ for finite descent sequences.
We assume that the chosen optimizer converges for all sample paths of the GP prior \eqref{eq:gp_prior}.
For instance, under suitable assumptions, gradient descent with an appropriate step size $\eta$ produces the convergent sequence
\begin{equation}\label{eq:gd_seq}
    \mathrm{GD}_{\params_0}(f)
    :=\bigl\{\descp_{0}=\params_0,\,\descp_{1} = \descp_{0}-\eta\nabla f(\descp_{0}),\descp_{2} = \descp_{1}-\eta\nabla f(\descp_{1}),\dots\bigr\}.
\end{equation}

When $f$ is known we can directly apply $\optim$. However, we are in the standard BO black-box setting, where we cannot directly create this sequence. We instead consider the bandit setting with sequential queries to an expensive-to-evaluate black-box function with noisy zero-order evaluations $\noisysample_\boiter = \obj(\params_{\boiter}) + \obsnoise_\boiter$ at locations $\params_{\boiter}$. 
Therefore, the objective of this work is to find a practical strategy that best approximates the solution to \eqref{eq:probState} in a data-efficient manner by learning about the descent sequence. 

\section{Entropy Minimization of Uncertain Descent Sequences} \label{sec:entropy_minimization}

\paragraph{Overview.}
We aim to apply the entropy search principle not to the global optimum but to the \emph{descent sequence} started from $\params_0$.
\textbf{Goal:} Choose the next query $\params$ that most reduces uncertainty about the entire descent sequence starting from $\params_0$, i.e., the sequence generated by an iterative optimizer $\mathcal{O}$. Once we know the descent sequence, we also know the local optimum $\params^*$.
\textbf{Idea:} Treat the descent sequence itself as the object of interest and apply the entropy-search principle to it: select $\params$ to maximize information gain about that sequence. We target the mutual information between $(\params, y(\params))$ and the random descent sequence induced by the GP posterior and $\mathcal{O}$.
The remainder of this section formalizes this idea. After the formalization we give a practical algorithm in Sect.~\ref{sec:LES}.


Given $p(f)$ as a distribution over functions, applying an optimizer $O$ from $\params_0$ induces a random descent sequence with observations $\queryresult$, as
\begin{equation}\label{eq:observation_sequence}
    \DS_{\params_0} = \left((\descp_0, \queryresult_0), (\descp_1, \queryresult_1), \dots \right).
\end{equation}
where $\descp_n$ are iterates and $\queryresult_n$ are some observation of the objective function $f$ at $\params$ depending on $O$. Examples are function values $\queryresult_n \coloneq f(\descp_n)$ (e.g., hill climbing) or gradient information $\queryresult_n \coloneq \nabla f(\descp_n)$ (e.g. gradient descent).
Reducing the uncertainty about $\DS_{x_0}$ also reduces uncertainty about the local optimum $\params^*$ (see Fig.~\ref{fig:push_forward}).

To apply the entropy search framework \cite{hennig2012entropy} to this problem, we minimize the entropy $\mathrm{H}(\DS_{\params_0})$ of the descent sequence. Minimizing the entropy with a new observation is equivalent to maximizing the mutual information between the new observation $(\params, \noisysample(\params))$ and the descent sequence $\DS_{\params_0}$ conditioned on the current data $\mathcal{D}_\boiter$.
Therefore, in LES we reduce entropy over $Q_{\params_0}$ by selecting queries that maximize mutual information with this distribution, as
\begin{equation}\label{eq:IES}
\alpha_{\mathrm{LES}}(\boldsymbol{x}) = I\left((\params, \noisysample(\params)) ; \, \DS_{\params_0} \mid \mathcal{D}_\boiter\right), 
\end{equation}
which we reformulate into a tractable form in Sect.~\ref{sec:LES}. Note, that the observations $\queryresult$ needs to contain the information the iterative optimizer requires to determine the next iterate. Therefore, \eqref{eq:IES} requires knowledge about the inner workings of the iterative optimizer to determine $Q$. We will discuss the case of gradient descent sequences as an example at the end of this section.

\begin{remark}
An alternative formulation for LES is to define the random variable $O^*_{\params_0}$ directly over the \emph{local optimum} $\params^*$ of $\optim_{\params_0}(f)$.
The distribution of $O_{\params_0}^*$ is the push-forward of $p(f)$ under $\optim_{\params_0}$ and taking the limit as in \eqref{eq:ds_lim}. 
Unfortunately, the mutual information
\begin{equation}\label{eq:mi_local_optimum}
    I\left((\params, \noisysample(\params)); \, O^*_{\params_0} \mid \mathcal{D}_\boiter\right). 
\end{equation}
is not tractable in the general case since we would need to condition a GP on the event $O^*_{\params_0} = \params^*$ which involves the entire descent sequences (see Appx. \ref{apdx:exactmaxgeneral}).
We propose and evaluate an alternative approximation in Appx. \ref{apdx:alternative_conditioning}.

A tractable special case is gradient descent that terminates after the first step. The resulting acquisition function is GIBO \cite{muller2021local} with entropy instead of the trace of the covariance matrix (see Appx. \ref{apdx:exactmaxgibo}).
\end{remark}

\paragraph{Local Entropy Search with Gradient Descent Sequences.}
Before we move on to the next section, we discuss the LES acquisition function for the gradient descent algorithm.
The descent sequence is defined through the initial design $\params_0$ and the gradient of the function $\nabla \obj$ at the locations $\descp_n$ as $\params^* = \lim_{n \to \infty} \sum_{i=0}^{n} \descp_i - \eta \nabla f(\descp_i)$.

Since we can easily condition a GP on gradient information \cite{rasmussen2006gaussian} a LES acquisition function for local optimization via gradient descent is
\begin{equation}\label{eq:IES_GD}
\alpha_{\mathrm{LES-GD}}(\boldsymbol{x}) = I\left((\params, \noisysample(\params)) ; \DS_{\params_0} \mid \mathcal{D}_\boiter\right), 
\end{equation}
with $\DS_{\params_0} = \left( (\descp_0 = \params_0, \nabla f(\descp_0)), (\descp_1, \nabla f(\descp_1)), \dots \right)$. In words: We are looking for the query whose outcome will reveal the most information about \emph{gradients} of the function at the (distribution of) locations of the descent sequence.
Generally, the design of the sequence $\DS_{\params_0}$ is dependent on the optimizer choice $\optim$ and the properties of the GP samples (see Appx.~\ref{apdx:differentiability}). The performance and computational burden of LES depends on the design choices. We investigate alternative choices for gradient descent in Section~\ref{sec:results}.

\begin{remark} 
For GPs with a squared exponential kernel the push-forward of gradient-descent sequences is dense in $\mathcal{X}$. This means, in principle, there can be sequences arbitrarily close to any point in the domain and no region is a priori “forbidden.” In addition, all descent sequences are possible under this prior. For details, see Appx.~\ref{apdx:gds_se}.
\end{remark}

\section{Local Entropy Search} \label{sec:LES}

\begin{figure*}
\includegraphics[width = \textwidth]{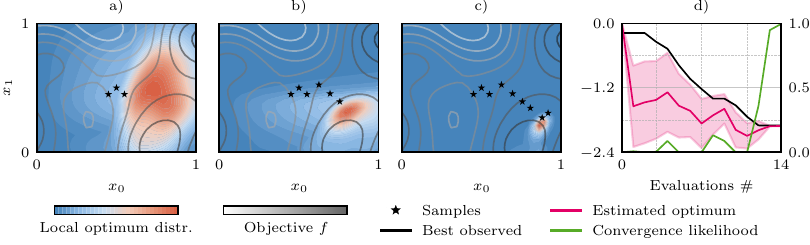}
\caption{\textbf{Illustration of LES on a 2D example:} \emph{a)} After three initial evaluations, the distribution over reachable local optima is wide. \emph{b, c)} As LES selects new points, evaluations concentrate near the descent sequence, and the distribution of the local optimum narrows. \emph{d)} Convergence behavior of LES. After 14 evaluations, the convergence criterion (see Appx.~\ref{sec:stopping}) stops the optimization. \label{fig:overview_fig}} 
\end{figure*}

This section casts the general local entropy search paradigm (Sec.~\ref{sec:entropy_minimization}) into a practical algorithm; see Fig.~\ref{fig:overview_fig} for an illustrative example and Algorithm~\ref{algo:ESBO}. We first derive the acquisition function (Sec.~\ref{sec:LESac}), afterwards we describe how to approximate the distribution of gradient descent sequences and how to condition on them~(Sec.~\ref{sec:ImpHyp}).  

\begin{algorithm}[b]
\small
\caption{Local Entropy Search with stopping rule}\label{algo:ESBO}
\begin{algorithmic}[1]
    \State \textbf{Input}: initial design and corresponding observation $\mathcal{D}_1 = (\params_0, \noisysample_0)$, local optimizer $\optim$
    \For{$\boiter \in 1,2,...$}
    \State $\mathcal{GP}_\boiter \leftarrow $ fit GP model of $\obj(\params)$ using $\mathcal{D}_\boiter$ with MAP hyperparameter optimization %
    \State $\hat{\params}_\boiter^*, \hat{\obj}^*_\boiter \leftarrow$ identify current optimum from $\mathcal{D}_\boiter$
    \State $f^1,\dots,f^L \leftarrow$ draw $L$ samples from $\mathcal{GP}_\boiter$ \Comment{cf.~\eqref{eq:willson_sampling}}   
    \State \textbf{for} $l \in 1,...,L$ \textbf{do} $\DS^{l} \leftarrow$ apply $\optim$ to $f^l$ starting from $\hat{\params}_\boiter^*$ 
    
    \State $\params_{\boiter+1} = \textrm{argmax}_{\params \in \DS^1, ..., \DS^L}
    \alpha_{\mathrm{LES}} (\params)$ 
    \Comment{Maximize LES acquisition function cf. Sec.~\ref{sec:LESac}}  
   
    \State \textbf{for} $l \in 1,...,L$ \textbf{do} $r_t^l = \obj^l(\hat{\params}^*_\boiter) - \obj^l(\params^{l,*})  $ \Comment{Estimate local regret}     
    \State \textbf{if} $\sum_{l=1}^L \mathds{1}\left(r_t^l \leq \epsilon\right) \geq k_{\mathrm{max}}$ \textbf{do}  stop \Comment{Stopping rule for probabilistic regret. See Appx. \ref{appdx:stopping}.}    
    \State $\noisysample_{\boiter+1} \leftarrow$ evaluate objective with $\params_{\boiter+1}$ 
    \State $\mathcal{D}_{\boiter+1} \leftarrow \mathcal{D}_{\boiter}  \cup \{(\params_{\boiter+1}, \noisysample_{\boiter+1})\}$ 
    \EndFor
    \State \Return $\hat{\params}^*$
\end{algorithmic}
\end{algorithm}

\subsection{The LES Acquisition Function} \label{sec:LESac}

The LES acquisition function follows the entropy-search principle: ask where an observation $(\params, y(\params))$ would tell us most about the optimizer's descent sequence $\DS_{\params_0}$ -- their mutual information. 
In practice this means comparing two entropies: (i) the predictive entropy at $x$, which measures the overall uncertainty about $y(\params)$ under the GP, and (ii) the expected entropy that remains if we condition on how $\params$ would influence the descent sequences drawn from the GP posterior. 
\begin{equation}
\begin{aligned}
& \alpha_{\mathrm{LES}}(\boldsymbol{x})  =I\left((\params, \noisysample(\params)) ;\DS_{\params_{0}} \mid \mathcal{D}_\boiter \right) \\
& =\underbrace{\mathrm{H}[y(\params) \mid \mathcal{D}_\boiter]}_{\text{predictive entropy}} 
 -\mathbb{E}_{f} \underbrace{\left[\mathrm{H}\left[y(\params) \mid \mathcal{D}_\boiter, \DS_{\params_{0}}  \right]\right]}_{\text{conditional entropy}}. \\
\end{aligned} \label{eq:LES}
\end{equation}
The predictive entropy (see Fig.~\ref{fig:methods_fig}, b)) can be calculated in closed form, as
\begin{equation}
    \mathrm{H} [y(\params) \mid \mathcal{D}_\boiter] = \frac{1}{2} \log \left(2 \pi \, e \, \sigma_y^2(\params \mid \mathcal{D}_\boiter) \right).   \label{eq:pred_entropy}
\end{equation}
The expectation over the conditional entropy is more challenging. We do not have a closed form expression of the distribution of $\DS_{\params_0}$ as this would mean to apply the optimization routine $\optim$ to the distribution $p(f|\bodata_\boiter)$. Thus, we cannot analytically calculate the expectation from it. Therefore, we approximate it by $L$ Monte-Carlo samples: 
\begin{equation}
\begin{aligned}
 & \mathbb{E}_{f} \left[\mathrm{H}\left[p\left(y(\params) \mid \bodata_\boiter, \DS_{\params_0}\right)\right]\right] \approx \frac{1}{L}\sum_{l=1}^{L} \mathrm{H}\left[p\left(y(\params) \mid \bodata_\boiter, \DS^{l}_{\params_0}\right) \right]. \\
\end{aligned} \label{eq:monta_carlo_approx}
\end{equation}

To approximate $\DS_{\params_0}$, we first draw $L$ sample paths from the posterior GP according to \eqref{eq:willson_sampling} to retrieve samples $f^1_\boiter,...,f^l_\boiter$. Then we apply the optimization routine $\optim$ to each sample to retrieve the descent sequences $\DS^l_{\params_0} = \left( (\params_0, \queryresult_0^l), (\params^l_1, \queryresult^l_1),... \right)$ and the local optima $\params^{l,*}$.   

For example, in the case of gradient descent, we get $\DS_{\params_0}^l = \left((\descp_0^l= \params_0, \queryresult_0^l =\nabla f^l(\params_0)),(\descp_1^l = \params_0 - \nabla f^l(\params_0) , \queryresult_1^l = \nabla f^l(\descp_1^l)),\dots \right)$.
%

%

\begin{figure*}
\includegraphics[width = \textwidth]{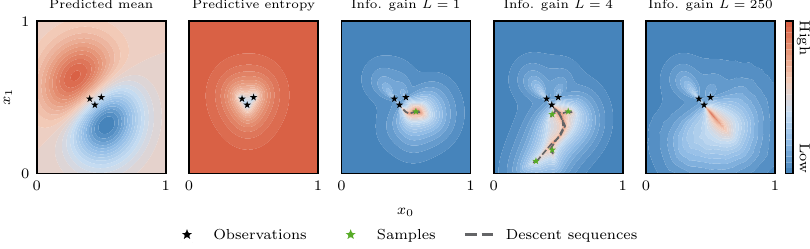}
\caption{
\textbf{Illustration of the LES acquisition function after three evaluations:} 
\emph{Left:} The GP posterior mean after conditioning on the observations. 
\emph{Second:} The predictive entropy is high in regions with large posterior variance. 
\emph{Third to fifth:} The information gain between sampled descent sequences and query locations in $\mathcal{X}$ (the LES acquisition function) is high at points that are far from existing observations and aligned with likely descent sequences.}
\label{fig:methods_fig} 
\end{figure*}

To condition on the parameter observation pairs $\DS^{l}_{\params_0}$, we add them to the already existing observations and analytically compute the predictive posterior variance and entropy.
Note that we approximate $\DS^l_{\params_0}$ with a finite sequence to compute \eqref{eq:monta_carlo_approx} (see Sec. \ref{sec:ImpHyp}). By inserting \eqref{eq:pred_entropy} and the approximation \eqref{eq:monta_carlo_approx} into \eqref{eq:LES} we get the final LES acquisition function (see Fig.~\ref{fig:methods_fig}):
\begin{equation}
\begin{aligned}
& \alpha_{\mathrm{LES}}(\boldsymbol{x})  \approx \frac{1}{2} \log \left(2 \, \pi \, e \, \sigma_y^2(\params \mid \bodata_\boiter)\right) - \frac{1}{L}\sum_{l=1}^{L} \frac{1}{2} \log \left(2 \, \pi \, e \, \sigma^2_y \left(\params \mid  \bodata_\boiter  \cup \DS_{\params_0}^{l}\right) \right). \\
\end{aligned} \label{eq:LES_full}
\end{equation}
Extension of the LES acquisition function to the batch case is straight forward (see Appx.~\ref{apdx:qles}).

\subsection{Additional Approximations and Implementation Details} \label{sec:ImpHyp}

To make LES a practical algorithm we introduce additional approximations and design choices. For hyperparameter values, we refer to Appx.~\ref{apdx:les_params}.
%
To solve the original problem (see Sec.~\ref{sec:problem}) we need to apply the local optimizer from a fixed initial design $\params_0$. However, in practice, we always start the descent sequence from the current best guess $\hat{\params}^*_\boiter$.
%
Additionally, instead of ensuring convergence of the local optimizer, we simply stop all the iterative optimizers after finitely many steps. Still, conditioning on all elements in $\DS^l_{\params_0}$ can be prohibitively expensive. Thus, we choose to only condition on $P$ equally spaced elements along the interpolated descent sequence. Additionally, we condition on function values instead of gradient observations which reduces runtime while achieving similar performance (see Appx. \ref{apdx:discretization}). 
%
%
Optimizing any acquisition function is challenging in high-dimensions as it is non-convex (see Fig. \ref{fig:methods_fig}). 
Fortunately, the approximation of $\DS_{\params_0}$ gives us access to promising candidates. Thus we optimize the acquisition function under the finite candidate set $\DS^1_{\params_0},...,\DS^L_{\params_0}$ with $L$ times $P$ candidate points. 

\paragraph{Practical LES in one paragraph} (i) Draw $L$ posterior GP samples. (ii) For each sample, run $\optim$ for a finite number of steps starting at the current incumbent $\hat{\params}_\boiter^\ast$ to obtain a descent sequence; discretize each sequence to $P$ support points. (iii) Compute the predictive entropy at candidate $\params$ and the average conditional entropy after conditioning on those discretized sequences; their difference is the acquisition function in \eqref{eq:LES_full} (Fig.~\ref{fig:overview_fig}). (iv) To keep optimization tractable, maximize the acquisition over the finite candidate set given by the union of all discretized sequences. (v) Evaluate, update the GP, and repeat (Alg.~1).

\section{Empirical Results}\label{sec:results}

In this section, we empirically evaluate LES and compare it against other local and global BO variants (additional results in Appx. \ref{apdx:empiricalevaluation}).
As objectives, we use GP-samples with varying lengthscales to increase complexity as well as synthetic and application-oriented benchmarks.

\subsection{Ablations and Benchmark Algorithms} \label{sec:ablations}

\begin{figure*}
\includegraphics[width = \textwidth]{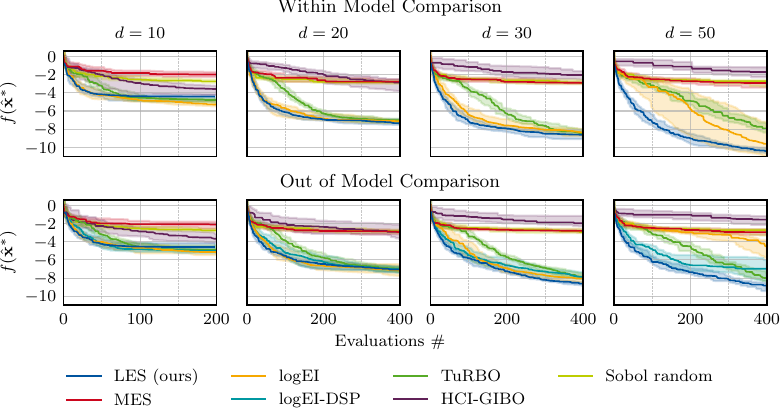}
\caption{\label{fig:gp_target_comparison} 
\textbf{Optimizing Gaussian Process Samples:} Median, 25-, and 75-percent quantiles for the best function values found for 20 sampled objective functions with medium complexity (see Tab. \ref{tab:ranks_results_table}). LES outperforms baselines as dimensionality increases.}  
\end{figure*}

\paragraph{LES Variants} We evaluate LES with three local optimization algorithms: gradient descent, ADAM \cite{kingma2015adam}, and covariance matrix adaptive evolutionary search (CMA-ES) \cite{hansen2001completely}. Results show that LES-ADAM and LES-GD perform best depending on problem complexity, while LES-CMA-ES shows a more global search behavior which is beneficial for low dimensional problems (see Appx. \ref{apdx:other_les}). Additionally, we show that the more accurate the acquisition function approximation (higher $L$ and $P$), the better the performance at the cost of higher runtime. Below, we investigate LES-ADAM with a medium approximation accuracy resulting in an average runtime of $17.8 \, \mathrm{sec}$ per iteration in $50$ dimensions excluding hyperparameter optimization (see Appx.~\ref{apdx:discretization} for runtime comparisons).    

\paragraph{Baselines} We compare LES-ADAM to two other local BO paradigms: TuRBO \cite{eriksson2019scalable} is based on trust-regions and high-confidence improvement Bayesian optimization (HCI-GIBO) 
is a gradient-based approach. HCI-GIBO \cite{he2024simulation} is a recent GIBO \cite{muller2021local} extension.  
As a global ES alternative we choose max-value entropy search (MES) \cite{wang2017max}. Additionally, we compare to logEI \cite{ament2023unexpected} and Sobol random sampling. 

\paragraph{Other Local Information-Theoretic Strategies} We propose and evaluate two other search strategies as special cases of the LES paradigm: local Thompson sampling and conditioning only on the final point of the descent sequence. They empirically perform worse confirming that conditioning on the whole sequences and Monte-Carlo sampling (see Appx. \ref{apdx:alternative_conditioning}).

\subsection{Gaussian Process Samples} \label{sec:res_gpsample}

\begin{table}
\centering
\caption{\textbf{Within-Model Comparisons:} Median final objective for LES, logEI, TuRBO, and Sobol random sampling. Bold indicates performance not statistically significantly below the best. *Hyperprior as in \cite{hvarfner2024vanilla}.}
\begin{tabular}{l l c c c c c}
\hline
Complexity & Method & $d = 5$ & $d = 10$ & $d = 20$ & $d = 30$ & $d = 50$ \\
\hline
\multirow{4}{*}{\makecell{\textbf{high}: $\mathrm{p}(l;d) =$ \\   $\mathrm{logn}(-2.5\sqrt{2} + \mathrm{log} \sqrt{d},\sqrt{3}/5)$ \\ $\mathrm{E}[p(l;50)]= 0.25$ }} & LES (ours) & $-2.8$ & $\boldsymbol{-4.8}$ & $\boldsymbol{-7.4}$ & $\boldsymbol{-9.0}$ & $\boldsymbol{-10.8}$ \\
  & logEI & $\boldsymbol{-3.9}$ & $-4.4$ & $-2.9$ & $-3.2$ & $-2.9$ \\
  & TuRBO & $-3.5$ & $\boldsymbol{-4.8}$ & $\boldsymbol{-7.2}$ & $-8.0$ & $-7.1$ \\
  & Sobol random & $-2.4$ & $-2.7$ & $-2.8$ & $-3.2$ & $-3.0$ \\
\hline
\multirow{4}{*}{\makecell{\textbf{medium}: $\mathrm{p}(l;d) =$ \\   $\mathrm{logn}(-2.0\sqrt{2} + \mathrm{log} \sqrt{d},\sqrt{3}/4)$ \\ $\mathrm{E}[p(l;50)]= 0.52$ }} & LES (ours) & $-2.9$ & $-4.4$ & $\boldsymbol{-7.3}$ & $\boldsymbol{-8.6}$ & $\boldsymbol{-10.4}$ \\
  & logEI & $\boldsymbol{-3.6}$ & $\boldsymbol{-5.3}$ & $\boldsymbol{-7.1}$ & $\boldsymbol{-8.4}$ & $-9.6$ \\
  & TuRBO & $-3.1$ & $-4.8$ & $\boldsymbol{-7.0}$ & $\boldsymbol{-8.5}$ & $-7.9$ \\
  & Sobol random & $-2.5$ & $-2.7$ & $-2.8$ & $-2.9$ & $-2.7$ \\
\hline
\multirow{4}{*}{\makecell{\textbf{low}: $\mathrm{p}(l;d) =$ \\   $\mathrm{logn}(-1.0\sqrt{2} + \mathrm{log} \sqrt{d},\sqrt{3}/2)$ \\ $\mathrm{E}[p(l;50)]= 2.65$}} & LES (ours) & $-2.1$ & $-3.7$ & $-5.5$ & $\boldsymbol{-6.8}$ & $\boldsymbol{-8.8}$ \\
  & logEI & $\boldsymbol{-2.9}$ & $\boldsymbol{-4.1}$ & $\boldsymbol{-5.8}$ & $\boldsymbol{-6.6}$ & $\boldsymbol{-8.5}$ \\
  & TuRBO & $-2.4$ & $-3.6$ & $-5.5$ & $\boldsymbol{-6.5}$ & $-8.1$ \\
  & Sobol random & $-2.0$ & $-2.3$ & $-2.9$ & $-2.8$ & $-3.0$ \\
\hline
\multirow{4}{*}{\makecell{\textbf{extremely low}*: $\mathrm{p}(l;d) =$ \\   $\mathrm{logn}(1.0\sqrt{2} + \mathrm{log} \sqrt{d},\sqrt{3})$ \\ $\mathrm{E}[p(l;50)]= 96.2$ }} & LES (ours) & $\boldsymbol{-0.6}$ & $-0.8$ & $\boldsymbol{-1.2}$ & $\boldsymbol{-1.5}$ & $\boldsymbol{-3.0}$ \\
  & logEI & $\boldsymbol{-0.6}$ & $\boldsymbol{-0.9}$ & $\boldsymbol{-1.3}$ & $\boldsymbol{-1.5}$ & $\boldsymbol{-3.0}$ \\
  & TuRBO & $-0.6$ & $-0.8$ & $-1.2$ & $-1.5$ & $-2.8$ \\
  & Sobol random & $-0.5$ & $-0.7$ & $-1.0$ & $-1.0$ & $-1.5$ \\
\hline
\end{tabular}
\label{tab:ranks_results_table}
\end{table}

\begin{table}
\centering
\caption{\textbf{Out-Of-Model Comparisons:} Median final objective for LES, logEI-DSP, TuRBO, and Sobol random sampling. Bold indicates performance not statistically significantly below the best.}
\begin{tabular}{l l c c c c c}
\hline
Complexity & Method & $d = 5$ & $d = 10$ & $d = 20$ & $d = 30$ & $d = 50$ \\
\hline
\multirow{4}{*}{\makecell{\textbf{high}: $\mathrm{p}(l;d) =$ \\   $\mathrm{logn}(-2.5\sqrt{2} + \mathrm{log} \sqrt{d},\sqrt{3}/5)$  \\ $\mathrm{E}[p(l;50)]= 0.25$}} & LES & $-2.9$ & $\boldsymbol{-5.0}$ & $\boldsymbol{-7.2}$ & $\boldsymbol{-8.5}$ & $\boldsymbol{-7.8}$ \\
  & TuRBO & $-3.7$ & $\boldsymbol{-5.0}$ & $\boldsymbol{-7.1}$ & $-8.2$ & $-7.1$ \\
  & logEI-DSP & $\boldsymbol{-3.7}$ & $-4.1$ & $-4.0$ & $-4.0$ & $-4.1$ \\
  & Sobol & $-2.4$ & $-2.7$ & $-2.8$ & $-3.2$ & $-3.0$ \\
\hline
\multirow{4}{*}{\makecell{\textbf{medium}: $\mathrm{p}(l;d) =$ \\   $\mathrm{logn}(-2.0\sqrt{2} + \mathrm{log} \sqrt{d},\sqrt{3}/4)$  \\ $\mathrm{E}[p(l;50)]= 0.52$ }} & LES & $-3.0$ & $-4.6$ & $\boldsymbol{-7.1}$ & $\boldsymbol{-8.6}$ & $\boldsymbol{-8.8}$ \\
  & TuRBO & $-3.1$ & $-4.9$ & $\boldsymbol{-7.0}$ & $-7.9$ & $-8.0$ \\
  & logEI-DSP & $\boldsymbol{-3.6}$ & $\boldsymbol{-5.0}$ & $\boldsymbol{-7.0}$ & $-7.9$ & $-7.0$ \\
  & Sobol & $-2.5$ & $-2.7$ & $-2.8$ & $-2.9$ & $-2.7$ \\
\hline
\multirow{4}{*}{\makecell{\textbf{low}: $\mathrm{p}(l;d) =$ \\   $\mathrm{logn}(-1.0\sqrt{2} + \mathrm{log} \sqrt{d},\sqrt{3}/2)$  \\ $\mathrm{E}[p(l;50)]= 2.65$ }} & LES & $-2.1$ & $-3.7$ & $-5.2$ & $\boldsymbol{-6.6}$ & $\boldsymbol{-8.5}$ \\
  & TuRBO & $\boldsymbol{-2.4}$ & $-3.7$ & $-5.4$ & $\boldsymbol{-6.4}$ & $-7.7$ \\
  & logEI-DSP & $\boldsymbol{-2.9}$ & $\boldsymbol{-4.1}$ & $\boldsymbol{-5.7}$ & $\boldsymbol{-6.6}$ & $-8.1$ \\
  & Sobol & $-2.0$ & $-2.3$ & $-2.9$ & $-2.8$ & $-3.0$ \\
\hline
\multirow{4}{*}{\makecell{\textbf{extremely low}: $\mathrm{p}(l;d) =$ \\   $\mathrm{logn}(1.0\sqrt{2} + \mathrm{log} \sqrt{d},\sqrt{3})$  \\ $\mathrm{E}[p(l;50)]= 96.2$ }} & LES & $\boldsymbol{-0.6}$ & $\boldsymbol{-0.8}$ & $\boldsymbol{-1.2}$ & $\boldsymbol{-1.5}$ & $\boldsymbol{-2.9}$ \\
  & TuRBO & $\boldsymbol{-0.6}$ & $\boldsymbol{-0.8}$ & $\boldsymbol{-1.2}$ & $\boldsymbol{-1.5}$ & $\boldsymbol{-2.9}$ \\
  & logEI-DSP & $\boldsymbol{-0.6}$ & $\boldsymbol{-0.9}$ & $\boldsymbol{-1.3}$ & $\boldsymbol{-1.5}$ & $\boldsymbol{-3.0}$ \\
  & Sobol & $\boldsymbol{-0.5}$ & $\boldsymbol{-0.7}$ & $-1.0$ & $-1.0$ & $-1.5$ \\
\hline
\end{tabular}

\label{tab:simple_regret_short}
\end{table}

\paragraph{Model Complexity}
Recent work by \cite{hvarfner2024vanilla} highlights model complexity as a key factor in high-dimensional BO performance. The assumed difficulty of the problem -- encoded through the model complexity in the form of length scale priors -- determines global BO performance. Smaller length scales yield more complex functions with more local optima \cite[Chapt.~6]{adler2010geometry}. Building on this insight, we construct benchmark functions with varying problem difficulty by sampling functions from a GP with different model complexities. Specifically, we generate functions by scaling the log-normal length scale prior $p(l)$ proposed in \cite{hvarfner2024vanilla}.  
We consider four complexity levels across 5 to 50 dimensions (see Tab.\ref{tab:ranks_results_table}) where the lowest model complexity corresponds to the one used in \cite{hvarfner2024vanilla}.

\paragraph{Within Model Comparison} 
To assess the performance of the proposed acquisition function independently of the effects of hyperparameter optimization, we  
first use known hyperparameters in all evaluated BO algorithms.
Results on medium model complexity (Fig. \ref{fig:gp_target_comparison}) show that, after 400 evaluations, LES outperforms all baselines as dimensionality increases. The asymptotical performance benefits of searching globally with logEI is only seen for $d = 10$. MES, the global entropy search benchmark is not competitive. For other problem complexities the same trends can be observed (see Tab.\ref{tab:ranks_results_table} and Appx. \ref{apdx:gpsamples}). Performance differences increase with increasing problem complexity with LES clearly outperforming other methods for high complexity tasks in higher dimensions (see Appx. \ref{apdx:withinmdl}).

\paragraph{Out-Of-Model Comparison} 
The out-of-model setup is identical to the within-model one except that GP hyperparameters are now estimated via MAP. All methods use the hyperprior that is used to sample from the GP, except for logEI-DSP, which was recently proposed by \citet{hvarfner2024vanilla} and assumes a hyperprior with longer length scales. 
Appx.~\ref{apdx:oomdl} reports additional detailed results.

The results mirror the within-model case. Interestingly, logEI performs better when using the wrong (less complex) hyperprior, as in logEI-DSP which supports the claims made in \cite{hvarfner2024vanilla}.

\paragraph{Cumulative Regret} 
The local search behavior inherent to LES leads to a substantially lower empirical cumulative regret for all problems except for $d=5$ and the lowest complexity  (Appx. \ref{apdx:gpsamples}).  

\paragraph{A Stopping Criterion for Local Entropy Search} 

We incorporate a probabilistic stopping rule for LES that assesses whether the current solution is locally optimal. Following the Monte-Carlo test of \citet{wilson2024stopping}, we estimate the probability that the local simple regret falls below a user-specified tolerance and terminate once this criterion is met. The full formulation, assumptions, and theoretical guarantees, along with empirical stopping-time results, are provided in Appx.~\ref{appdx:stopping}.
When compared to the results in \cite{wilson2024stopping} these results show that the local optimization needs fewer samples before stopping, reinforcing the intuition that reaching a local optimum is easier

\subsection{Synthetic and Application-Oriented Objective Functions}

We evaluate LES on additional analytic benchmarks (Fig.~\ref{fig:analytic_target_comparison}) and application-oriented tasks, with further results in Appx.~\ref{apdx:additionalres}. On single-optimum functions (square), all methods reliably identify the solution, though LES and TuRBO achieve lower cumulative regret, highlighting the conservative exploration behavior inherent to local search. On the 30-d Ackley function, LES and TuRBO outperform global methods but LES shows high run-to-run variance due to the many local optima. For the rover \cite{wang2018batched} and Mopta08 \cite{jones2008mopta} tasks, LES, logEI, and TuRBO perform similarly, with LES best on rover and logEI best on Mopta08.

Additional experiments, including benchmarks designed to expose weaknesses of local search are presented in Appx.~\ref{apdx:additionalres}. In these cases, LES frequently gets trapped in local optima, leading to high run-to-run variance and overall poor performance relative to global methods. These results highlight the limitations of LES on highly multimodal landscapes with hard-to-escape local optima.

\begin{figure*}[t]
\includegraphics[width = \textwidth]{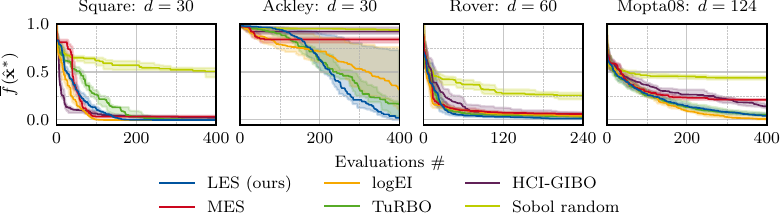}
\caption{\label{fig:analytic_target_comparison}
\textbf{Synthetic and Application-Oriented Objective Functions:} Median, 25-, and 75-percent quantiles for the best (normalized) function values found. Additional results in Appx.~\ref{apdx:additionalres}.} 
\end{figure*}

\section{Limitations}\label{apdx:limitations}

We show that LES is beneficial especially in the high-dimensionality and high-complexity case. However, the focus on locality is also its most obvious constraint. Once the algorithm commits to a basin, it possesses no intrinsic mechanism for escape; achieving coverage of the full design space therefore requires globalization strategies. Furthermore, in real-world use cases it may be difficult to determine the problem complexity a-priori. Therefore, multi-starts, more advanced incumbent search, e.g., based on \cite{adebiyi2025optimizing}, or switching heuristics between local and global optimization based on estimated objective complexity are interesting directions for future research. The stopping rule introduced in Appx.~\ref{sec:stopping} can be used to trigger a restart instead of stopping.

LES inherits the strengths and weaknesses of its surrogate.
All acquisition decisions are driven by posterior samples; if the model is misspecified, the algorithm will exploit the wrong belief. 
Moreover, approximating the mutual information in \eqref{eq:LES_full} at each iteration requires drawing and optimizing posterior samples, with the number of samples increasing with dimensionality and model complexity.

The present formulation tackles unconstrained, single-objective optimization. LES is entropy-search based and therefore future work can naturally extend LES to more complex settings, such as multi-fidelity, constrained, batch (see Appx. \ref{apdx:qles}), or asynchronous optimization.

Finally, LES currently lacks finite-time regret guarantees of the GP-UCB type \cite{srinivas2012information}, and the current theoretical claims are limited to the high-probability certificate of local optimality offered by the stopping test. In addition, a bound on the difference between local and global regret for GP sample paths would bridge the current theoretical gap between local and global BO.

\section{Discussion and Conclusion}

This paper introduces LES, the first entropy-search paradigm tailored to local optimization. By propagating the GP posterior through the optimizer's descent sequence, LES selects each evaluation to maximize mutual information with that sequence, thereby reducing uncertainty over possible descent sequences.

Empirically, LES delivers strong sample efficiency. Across high-complexity GP samples and policy-search benchmarks, the ADAM-based variant consistently attains lower simple regret than global entropy-search baselines and other local BO strategies, especially as dimensionality and complexity grows. Additionally, we show in Appx.~\ref{apdx:additionalres} that local search has a more conservative exploration behavior than global BO. This can be a great asset when optimizing outside of simulated environments, as e.g., in real-world robot learning. A probabilistic stopping rule guarantees bounded local regret by detecting convergence to a local optimum without additional overhead by reusing the samples from the acquisition step~(Appx.~\ref{appdx:stopping}).



\newpage
\section*{Acknowledgments}
We thank Johanna Menn, Paul Brunzema, and Friedrich Solowjow for the many helpful discussions. This work has been supported by the Robotics Institute Germany, funded by BMFTR grant 16ME0997K, and by the Deutsche Forschungsgemeinschaft (DFG, German Research Foundation) under Germany's Excellence Strategy - EXC-2023 Internet of Production - 390621612.
Computations were performed with computing resources granted by RWTH Aachen University under project rwth1723. 

\section*{Reproducibility Statement}

All code necessary to reproduce the results is available in the supplementary material and will be published upon publication. 
Additionally, we elaborate on approximations and implementation details in Sec.~\ref{sec:ImpHyp}. We report hyperparameter values in Appx.~\ref{apdx:empiricalevaluation}.

\bibliography{references}
\bibliographystyle{iclr2026_conference}


\newpage

\appendix
\begin{center}
    {\Large \bf Supplementary Material for ``Local Entropy Search over Descent Sequences for Bayesian Optimization''}
\end{center}
\setcounter{tocdepth}{2}  
\startcontents
\printcontents{}{1}{}
\setcounter{secnumdepth}{2}

\vspace{0.1cm}
\rule{\textwidth}{0.4pt} 
Following is the technical appendix. 
Note that all citations here are in the bibliography of the main document and similarly for many of the cross-references.
\appendix
\onecolumn

\section{Introduction to Bayesian Optimization with Entropy Search} \label{apdx:entropysearch}

Because evaluating $\obj(\params)$ is costly, BO leverages all data obtained until iteration $\boiter$ to choose the next parametrization $\params_{\boiter+1}$ in Domain $\mathcal{X}$. After $\boiter_\mathrm{init}$ random samples are evaluated, 
BO takes two steps to maximize the utility of the next experiment. First, a GP is trained on all past observations to approximate $\obj(\params)$. 
Second, this model is used in an acquisition function to balance exploration and exploitation. The acquisition function $\alpha$ uses the probabilistic GP predictions to calculate the utility of an experiment. It is maximized 
to find the next query: 
\begin{equation} \label{eq:accFunArgMax}
    \params_{\boiter+1} = \textrm{argmax}_{\mathcal{X}}
\alpha (\mathcal{GP}_k).
\end{equation}
Approximately solving \eqref{eq:accFunArgMax} is much easier than the original problem because only the fast-to-evaluate GP model needs to be evaluated.
This new query is evaluated, 
new data is received, and the next iteration is started by again updating the GP model. This way, the GP model is iteratively refined in promising regions.   
 We refer to \cite{garnett2023bayesian} for a detailed introduction to BO.     

Global entropy search methods use an information-theoretic perspective to select where to evaluate. They find a query point that maximizes the expected information gain about the global optimum $\params_\mathrm{g}^* = \argmax{\params \in  \mathcal{X}} \obj(\params)$ whose value $\obj^* = \obj (\params^*)$ achieves the global maximum of the function $\obj$.

The original entropy search (ES) \cite{hennig2012entropy} and predictive entropy search (PES) \cite{hernandezlobato2014predictive} maximize the information gain about the \emph{location} of the optimum: 
\begin{equation}
\begin{aligned}
& \alpha_\boiter(\params)=I\left(\{\params, y(\params)\} ; \params_\mathrm{g}^* \mid D_\boiter\right) \\
\end{aligned} \label{eq:ES}
\end{equation}

The random variable $y(\params)$ denotes the predictive distribution of the noisy observation at the query location $\params$ and $\boldsymbol{x}_{\mathrm{g}}^*$ denotes the estimated distribution of the global optimizer. The information gain can be expressed as the difference between predictive entropy of noisy observation at the query location, $\mathrm{H}\left(p\left(\boldsymbol{x}^*_g \mid D_t\right)\right)$ and the expectation of the predictive conditioned on the distribution of minimizers $\mathbb{E}\left[\mathrm{H}\left(p\left(\boldsymbol{x}_* \mid D_t \cup\{\boldsymbol{x}, y\}\right)\right)\right]$. Calculating those terms requires expensive approximations that do not scale well especially in high dimensions.       

Max-value entropy search (MES) \cite{wang2017max} maximize the information gain about the \emph{function value} of the optimum: 
\begin{equation}
\begin{aligned}
& \alpha_t(x)=I\left(\{\boldsymbol{x}, y\} ; \obj_g^* \mid D_t\right) \\
\end{aligned}
\end{equation}

This approach is computationally significantly more efficient than ES and PES, because the expectation and entropy need to be only calculated over the one dimensional distribution of optimum values.

Joint entropy search (JES) \cite{hvarfner2022joint, tu2022joint} maximizes the information gain about the joint distribution of location \emph{location} and \emph{function value} of the optimizer 

\begin{equation}
\begin{aligned}
\alpha_{\mathrm{JES}}(\boldsymbol{x}) & =I\left((\boldsymbol{x}, y) ;\left(\params_g^*, \obj_g^*\right) \mid \mathcal{D}_n\right) \\
\end{aligned}
\end{equation}

This paper applies the entropy search paradigm to local BO.
Up to here, we only discussed entropy search for single objective optimization. It can be extended to other BO paradigms such as  constrained \cite{perrone2019constrained}, multi-objective \cite{belakaria2020max}, and multi-fidelity  \cite{marco2017virtual} optimization.  

\newpage
\section{Exact Information Gain Maximization of the Local Optimizer}

In LES, we do not directly minimize the entropy of the solution of the local optimization algorithm but instead minimize the entropy of the descent sequence. This section shows why directly minimizing the entropy of the solution is not possible in general. Furthermore, we show that the entropy search version of GIBO is a special case for one step gradient descent, where it actually is possible.

\subsection{Impracticability in the General Case} \label{apdx:exactmaxgeneral}

Suppose we want to directly minimize the entropy of the local optimizer:
\begin{equation}
    \max_{x\in\mathcal{X}}I\left((\params, \noisysample(\params)); \, O^*_{\params_0} \mid \mathcal{D}_\boiter\right). 
\end{equation}
One way of going forward is to reformulate it in the standard entropy search way:
\begin{equation}
\begin{aligned}
& I\left((\params, \noisysample(\params)) ;  O^*_{\params_{0}} \mid \mathcal{D}_\boiter \right) \\
& =\underbrace{\mathrm{H}[y(\params) \mid \mathcal{D}_\boiter]}_{\text{predictive entropy}} 
 -\mathbb{E}_{f} \underbrace{\left[\mathrm{H}\left[y(\params) \mid \mathcal{D}_\boiter, O^*_{\params_{0}}  \right]\right]}_{\text{conditional entropy}}.  \\
\end{aligned} 
\end{equation}
After Monte-Carlo approximation we get:
\begin{equation}
\begin{aligned}
 & \mathbb{E}_{f} \left[\mathrm{H}\left[p\left(y(\params) \mid \bodata_\boiter, O_{\params_{0}}^*\right)\right]\right] \approx \frac{1}{L} \sum_{l=1}^{L} \mathrm{H}\left[p\left(y(\params) \mid \bodata_\boiter, O_{\params_{0}}^{*,l}\right) \right]. \\
\end{aligned} \label{eq:monta_carlo_approx_e3xact}
\end{equation}
Unfortunately, we can condition a GP efficiently only on point-wise observations of, for example, function values or gradients by adding them to the original data set as virtual points. It is not possible to condition a GP directly on $O_{\params_0}^{*,l}$ being a location that can be reached by gradient descent from $\params_0$. In Appendix \ref{apdx:alternative_conditioning} we show how we can condition on $O_{\params_0}^{*,l}$ being a local optimum with zero gradient and positive Hessian. However, this loses information about the sequence to the local optimum. 
An alternative approach would be the following reformulation:
\begin{equation}
\begin{aligned}
& \alpha(\boldsymbol{x})  =I\left((\params, \noisysample(\params)) ;O^*_{\params_{0}} \mid \mathcal{D}_\boiter \right) \\
& =\underbrace{\mathrm{H}[O^*_{\params_{0}}  \mid \mathcal{D}_\boiter]}_{\text{predictive entropy}} 
 -\mathbb{E}_{y(\params)} \underbrace{\left[\mathrm{H}\left[O^*_{\params_{0}}  \mid \mathcal{D}_\boiter, y(\params) \right]\right]}_{\text{conditional entropy}} .  \\
\end{aligned} \label{eq:LES_opt_cond}
\end{equation}

In principle, we can approximate the conditional entropy in \eqref{eq:LES_opt_cond} with Monte-Carlo approximation because we only have to condition a GP on realizations of $y(\params)$. However, this is prohibitively expensive because it would require a new Monte-Carlo approximation of $O^*_{\params_{0}}$ to evaluate $\alpha$ at a new query location $\params$. Therefore, we do not consider this possibility any further. 

\subsection{Exact Information Gain Maximization in Entropy-Based GIBO} \label{apdx:exactmaxgibo}

This paragraph shows that the information theoretic version CAGES \cite{tang2024cages} of the GIBO \cite{muller2021local} acquisition function can be seen as a special case of LES \eqref{eq:mi_local_optimum}.  
This special case arises when considering a one step gradient descent algorithm that produces the following descent sequence 
\begin{equation}
    \mathrm{GS}_{\params_{0}}(f)
    :=\bigl\{\descp_{0}=\params_{0},\,\descp_{1} = \descp_{0}-\eta\nabla f(\descp_{0})\}.
\end{equation}
Now suppose that the local optimum is the last element of that sequence $O^*_{\params_{0}}= \params_0 - \eta\nabla f(\params_{0})$. Inserting in equation \eqref{eq:LES_opt_cond} yields: 
\begin{equation}
\begin{aligned}
\alpha_{\mathrm{GES}}(\boldsymbol{x})   &=\mathrm{H}[O^*_{\params_{0}}  \mid \mathcal{D}_\boiter]
 -\mathbb{E}_{y(\params)} \left[\mathrm{H}\left[O^*_{\params_{0}}  \mid \mathcal{D}_\boiter, y(\params) \right]\right]\\
 &=\mathrm{H}[\params_0 - \eta\nabla f(\params_{0})  \mid \mathcal{D}_\boiter)]
 -\mathbb{E}_{y(\params)} \left[\mathrm{H}\left[\params_0 - \eta\nabla f(\params_{0})  \mid \mathcal{D}_\boiter, y(\params) \right]\right] 
 \end{aligned}
\end{equation}
Since $\params_0$ and $\eta$ are non-random variables, we get:
\begin{equation}
\alpha_{\mathrm{GES}}(\boldsymbol{x})   
 =\mathrm{H}[\nabla f(\params_{0})  \mid \mathcal{D}_\boiter]
 -\mathbb{E}_{y(\params)} \left[\mathrm{H}\left[\nabla f(\params_{0})  \mid \mathcal{D}_\boiter \cup (\params, y(\params) \right]\right] 
\end{equation}
Inserting the entropy of a multivariate normal distribution, we get the original GES acquisition function \cite[Eq. (8)]{tang2024cages}:
\begin{equation} \label{eq:cages_final}
\alpha_{\mathrm{GES}}\left(\boldsymbol{x} \right)=\frac{1}{2} \log \left|\Sigma^{\prime}\left(\boldsymbol{x}_0 \mid \mathcal{D}_\boiter\right)\right|-\frac{1}{2} \log \left|\Sigma^{\prime}\left(\boldsymbol{x}_0 \mid\mathcal{D}_\boiter \cup (\params, y(\params)\right)\right|
\end{equation}
The covariance of the gradient at location $\params_0$ given data $\bodata_\boiter$ is denoted as $\Sigma^{\prime}\left(\boldsymbol{x}_0 \mid \mathcal{D}_\boiter\right)$. Note that the expectation over $y(\params)$ can be ignored because the predictive variance of the gradient is independent of $y(\params)$ and only depends on $\params$. The small difference between the CAGES and the GIBO acquisition function is that the GIBO acquisition function uses the trace operator instead of the determinant operator $\mid \cdot \mid$ in \eqref{eq:cages_final}. 

This observation highlights that GIBO-style acquisition functions learn about one step of the gradient descent sequence, whereas LES maximizes the information gain about the whole descent sequence.

\newpage
\section{Additional Details on LES} 

\subsection{GP-Sample Approximation Strategies} \label{apdx:sampling}

In this work, we approximate GP sample paths using the decoupled method of \cite{wilson2020efficiently} (Eq.~\eqref{eq:willson_sampling}), which we regard as state-of-the-art and computationally attractive. It has been shown to outperform alternatives, is readily available in TensorFlow and PyTorch, and is still recommended in recent tutorials on GP sampling \cite{do2025sampling}. We therefore do not empirically investigate alternative sampling strategies for LES.

By contrast, the GP-sample benchmarks in GIBO \cite{muller2021local} and follow-up work \cite{nguyen2022local} adopt a different approach: they sample the posterior at fixed locations and then re-interpolate these points with a GP. In our preliminary experiments, this led to overly smooth sample functions in low dimensions.

Exploring how different sampling strategies affect LES remains an interesting direction for future work, both to assess robustness and to better understand potential biases introduced by approximation schemes. 

\subsection{Relationship Between Properties of Target Function, Model Samples and Local Optimizer} \label{apdx:differentiability}

The choice of local optimizer $\optim$ is constrained by the properties of the GP sample paths and practically by their approximations. For instance, if $\optim$ is gradient-based, the sample paths $f^l$ must be at least once differentiable. More generally, the samples need to be differentiable to the same order required by $\optim$. LES, however, is not limited to gradient-based methods: zeroth-order optimizers such as hill climbing or pattern search can be used when samples are non-differentiable. We illustrate this in Appx.~\ref{apdx:other_les} with LES-CMAES, which employs a zeroth-order evolutionary optimizer.

Different kernels also practically affect optimizer choice. For example, Matérn-$1/2$ or Matérn-$3/2$ kernels often generate sample paths with many shallow local minima. In such cases, an optimizer that incorporates momentum (e.g., Adam) may help to avoid undesired convergence to these minima.

Crucially, these requirements apply to the GP sample paths, not to the true objective. Indeed, GPs can approximate sub-gradients of non-differentiable functions, as shown in \cite{wu2023behavior}.

\subsection{qLES: Batched Local Entropy Search} \label{apdx:qles}

The LES paradigm can be straightforwardly extended to the batch case. This serves as an example of the versatility of entropy search methods and their potential for local optimization. We simply maximize the mutual information between the joint predictive distribution of multiple samples $\noisysample(\params_1),...,\noisysample(\params_q)$ and the distribution of descent sequences. Equation \eqref{eq:LES} becomes: 
\begin{equation}
\begin{aligned}
& \alpha_{\mathrm{qLES}}(\boldsymbol{x}_1,...,\boldsymbol{x}_q)  =I\left(\params_1,...,\params_q,\noisysample(\params_1),...,\noisysample(\params_q)) ;\DS_{\params_{0}} \mid \mathcal{D}_\boiter \right) \\
& =\underbrace{\mathrm{H}[\noisysample(\params_1),...,\noisysample(\params_q) \mid \mathcal{D}_\boiter)]}_{\text{predictive entropy}} 
 -\mathbb{E}_{f} \underbrace{\left[\mathrm{H}\left[\noisysample(\params_1),...,\noisysample(\params_q) \mid \mathcal{D}_\boiter, \DS_{\params_{0}}  \right]\right]}_{\text{conditional entropy}}. \\
\end{aligned} \label{eq:qLES}
\end{equation}

With this, the only change to the acquisition function calculation is the entropy calculation. The entropy of the predictive and conditional entropies can still be calculated in closed form and omitting the intermediate steps, the acquisition function \eqref{eq:LES_full} becomes:
\begin{equation}
\begin{aligned}
 \alpha_{\mathrm{qLES}}(\boldsymbol{x}_1,...,\boldsymbol{x}_q)  &\approx \frac{1}{2} \log  \mathrm{det} \left(\Sigma_y(\params_1,...,\params_q \mid \bodata_\boiter)\right)  \\ &- \frac{1}{L}\sum_{l=1}^{L} \frac{1}{2} \log \mathrm{det} \left(\Sigma_y \left(\params_1,...,\params_q \mid  \bodata_\boiter  \cup \DS_{\params_0}^{l}\right) \right). 
\end{aligned} \label{eq:qLES_full}
\end{equation}
The term $\Sigma_y(\params_1,...,\params_q \mid \bodata_\boiter)$ denotes the predictive covariance matrix of the posterior GP at the query locations $\params_1,...,\params_q$ given data $\bodata_\boiter$. 

Optimizing \eqref{eq:qLES_full} becomes more computationally expensive as $q$ increases. However, we expect this increase to be relatively minor. The most expensive parts of the LES formalisms, i.e., generating $L$ GP samples, locally optimizing them and then conditioning $L$ GPs on the descent sequences have to be done only once, independently of $q$.

\section{Empirical Evaluation} \label{apdx:empiricalevaluation}

This section gives additional results and details on the empirical evaluation. Most notably, we show that the local exploration behavior results in reduced cumulative cost (\ref{apdx:gpsamples},\ref{apdx:additionalres}) and show extended results on GP-samples with varying problem complexity for the within and out of model comparison setting (\ref{apdx:gpsamples}). Additional results are the impact of the discretizations on runtime and quality (\ref{apdx:discretization}) and the impact of the local optimizer choice (\ref{apdx:other_les}). 

\subsection{LES Algorithm hyperparameter} \label{apdx:les_params}

Table \ref{tab: LES hyperparameter} summarizes the main LES hyperparameter.   
In summary, LES has four types of hyper parameters: (a) GP-hyperparameter as any other BO algorithm (see Tab. \ref{tab:GPhyperparams_GP_sample}, \ref{tab:GPhyperparams_other}). (b) $L$ and $P$ that govern the accuracy of the acquisition function approximation. They should be chosen as large as computational resources permit (see Appx. \ref{apdx:discretization}) (c) parameter of the stopping rule of the stopping rule (see Appx.~\ref{appdx:stopping}) (d) the local optimizer and its parameters (see Appx. \ref{apdx:other_les}).

\begin{table}[H]
  \caption{Default hyperparameter of LES}
  \label{tab: LES hyperparameter}
  \centering
  \begin{tabular}{lll}
    \toprule
    Name     & Description     & Value \\
    \midrule
     & Number of initial uniform random samples & $2$ \\
    
    \midrule
    $L$   & Monte-Carlo samples of gradient descent sequences & $250$     \\
    $P$   & discretizations of the gradient descent sequences  & $8$      \\
    $M$   & prior features of the GP posterior sampling  & $1024$      \\
    \midrule
    $\epsilon$  & stopping criterion optimality & $0.1, 0.01$   \\
    $\delta$    & total risk & $0.05$  \\
    $\delta_\mathrm{est}$    & estimation risk & $0.0025$\\    $T_{\mathrm{dec}}$  & samples between each decision & $25$\\
    \midrule 
    & Local Optimizer & ADAM \\
    & Number of Local Optimization Steps & $500$ \\
    & Learning Rate & $0.002$ \\
    & Momentum Parameters & Keras Default\\
    \bottomrule
  \end{tabular}
\end{table}

\subsection{Benchmark Algorithm Hyperparameter}

For HCI-GIBO we choose $\alpha = 0.9$ and perform hyperparameter optimization after each gradient step. We use the BoTorch \cite{balandat2020botorch} implementations of MES, logEI, and TuRBO. MES uses a candidate set of 5000 points and both MES and TuRBO use default parameters of the BoTorch implementation. Again, all algorithm use identical GP parameters with the exception of logEI-DSP, where the hyperprior of \cite{hvarfner2024vanilla} is chosen.  

\FloatBarrier

\subsection{Additional Details and Results on GP Samples} \label{apdx:gpsamples}

\paragraph{General Setup.}
Table \ref{tab:GPhyperparams_GP_sample} summarizes the GP hyperparameters used in the GP-sample experiments. Following \cite{hvarfner2024vanilla}, we employ a log-normal hyperprior $p(l)$ for the length scales and assume a constant, known measurement noise distribution. Test functions are generated by first sampling length scales from the hyperprior and then drawing functions according to \eqref{eq:willson_sampling}. To vary problem difficulty, we scale the log-normal hyperpriors of \cite{hvarfner2024vanilla} (see Sec.~\ref{sec:res_gpsample}); expected length scales are reported in Table \ref{tab:length_scales}. Note that the original hyperprior assumes very large average length scales, whereas in the high-complexity scenario the expected length scale at $d=50$ is $\mathrm{E}[p(l)]=0.25$, which is still reasonable.

Across all experiments, data is not standardized, each algorithm is evaluated on 20 random seeds, and the two initial points are chosen randomly.

\begin{table}[H]
  \caption{GP hyperparameters for out of model comparison on GP samples}
  \label{tab:GPhyperparams_GP_sample}
  \centering
  \begin{tabular}{lll}
    \toprule
    Name     & Description     & Value \\
    \midrule
    $k(\cdot,\cdot)$    & kernel & SE-ARD  \\
    $p(l;d)$              & length-scale hyper prior & Log Normal (see Tab. \ref{tab:length_scales})  \\
    $\sigma_{\mathrm{n}}$  & observation noise & fixed at $0.002$\\ 
    $\sigma_{\mathrm{k}}$  &GP output scale & variable - init at $1$ \\
    $l_{\mathrm{init}}$ & length scale initialization & $\mathrm{E} [\mathrm{p}(l;d)]$\\
     & hyperparameter optimization frequency & after every sample\footnote{Except for the GIBO variants, where we optimize the hyperparameter only after each step.}\\
      & standardize data & yes\\
    \bottomrule
  \end{tabular}
\end{table}

\footnotetext[2]{Except for the GIBO variants, where we optimize the hyperparameter only after each step.}

\begin{table}[H]
\centering

\caption{Expected length scales $\mathrm{E} [\mathrm{p}(l;d)]$ for the different hyperpriors.}
\begin{tabular}{l l c c c c}
\hline
Complexity & $d = 5$ & $d = 10$ & $d = 20$ & $d = 30$ & $d = 50$ \\
\hline
\multirow{3}{*}{\makecell{\textbf{high}: $\mathrm{p}(l;d) =$ \\  
$\mathrm{logn}(-2.5\sqrt{2} + \mathrm{log} \sqrt{d},\sqrt{3}/5)$}}    & & & & & \\
  &  $0.08$ & $0.11$ & $0.15$ & $0.19$ & $0.25$ \\
 & & & & & \\

\hline
\multirow{3}{*}{\makecell{\textbf{medium}: $\mathrm{p}(l;d) =$ \\   $\mathrm{logn}(-2.0\sqrt{2} + \mathrm{log} \sqrt{d},\sqrt{3}/4)$ }}
  & & & & & \\
  & $0.16$ & $0.23$ & $0.33$ & $0.4$ & $0.52$ \\
 & & & & & \\
\hline
\multirow{3}{*}{\makecell{\textbf{low}: $\mathrm{p;d}(l) =$ \\   $\mathrm{logn}(-1.0\sqrt{2} + \mathrm{log} \sqrt{d},\sqrt{3}/2)$ }} 
  & & & & & \\
  & $0.83$ & $1.19$ & $1.67$ & $2.05$ & $2.65$ \\
 & & & & & \\
\hline
\multirow{3}{*}{\makecell{\textbf{extremely low} \cite{hvarfner2024vanilla}: $\mathrm{p}(l;d) =$ \\   $\mathrm{logn}(1.0\sqrt{2} + \mathrm{log} \sqrt{d},\sqrt{3})$ }} 
  & & & & & \\
& $21.86$ & $30.92$ & $34.73$ & $53.56$ & $69.15$ \\
 & & & & & \\
\hline
\end{tabular}
\label{tab:length_scales}
\end{table}

\paragraph{Within-Model Comparison.} \label{apdx:withinmdl}
In this setting, all BO algorithms are given access to the sampled ground-truth hyperparameters (length scales, output scale, and noise variance). Tables \ref{tab:ranks_results_within} and \ref{tab:ranks_results_within_cum} summarize the results, with statistical significance determined by the signed rank test. Entries not in bold are statistically significantly worse than the best-performing algorithm. Figures \ref{fig:apdx_withinmdl_1}–\ref{fig:apdx_withinmdl_last} show the convergence curves.

LES achieves statistically significant improvements in higher-dimensional, high-complexity settings, particularly in terms of cumulative regret. Global BO methods only outperform LES in high-complexity, low-dimensional cases.

\begin{table}
\centering
\caption{Best achieved function value after full budget for within model comparison on GP samples. Entries not in bold are statistically significantly worse than the best preforming algorithm. Smaller is better. }
\begin{tabular}{l l c c c c c}
\hline
Complexity & Method & $d = 5$ & $d = 10$ & $d = 20$ & $d = 30$ & $d = 50$ \\
\hline
\multirow{6}{*}{\makecell{\textbf{high}}} & LES (ours) & $-2.8$ & $\boldsymbol{-4.8}$ & $\boldsymbol{-7.4}$ & $\boldsymbol{-9.0}$ & $\boldsymbol{-10.8}$ \\
  & MES & $-2.4$ & $-2.6$ & $-3.0$ & $-2.9$ & $-2.8$ \\
  & logEI & $\boldsymbol{-3.9}$ & $-4.4$ & $-2.9$ & $-3.2$ & $-2.9$ \\
  & TuRBO & $-3.5$ & $\boldsymbol{-4.8}$ & $\boldsymbol{-7.2}$ & $-8.0$ & $-7.1$ \\
  & HCI-GIBO & $-2.7$ & $-2.0$ & $-2.3$ & $-1.8$ & $-0.7$ \\
  & Sobol random & $-2.4$ & $-2.7$ & $-2.8$ & $-3.2$ & $-3.0$ \\
\hline
\multirow{6}{*}{\makecell{\textbf{medium} }} & LES (ours) & $-2.9$ & $-4.4$ & $\boldsymbol{-7.3}$ & $\boldsymbol{-8.6}$ & $\boldsymbol{-10.4}$ \\
  & MES & $\boldsymbol{-3.4}$ & $-2.0$ & $-2.8$ & $-2.9$ & $-2.9$ \\
  & logEI & $\boldsymbol{-3.6}$ & $\boldsymbol{-5.3}$ & $\boldsymbol{-7.1}$ & $\boldsymbol{-8.4}$ & $-9.6$ \\
  & TuRBO & $-3.1$ & $-4.8$ & $\boldsymbol{-7.0}$ & $\boldsymbol{-8.5}$ & $-7.9$ \\
  & HCI-GIBO & $-2.9$ & $-3.6$ & $-2.9$ & $-2.0$ & $-1.7$ \\
  & Sobol random & $-2.5$ & $-2.7$ & $-2.8$ & $-2.9$ & $-2.7$ \\
\hline
\multirow{6}{*}{\makecell{\textbf{low}}} & LES (ours) & $-2.1$ & $-3.7$ & $-5.5$ & $\boldsymbol{-6.8}$ & $\boldsymbol{-8.8}$ \\
  & MES & $\boldsymbol{-2.9}$ & $-4.0$ & $-4.9$ & $-4.6$ & $-3.9$ \\
  & logEI & $\boldsymbol{-2.9}$ & $\boldsymbol{-4.1}$ & $\boldsymbol{-5.8}$ & $\boldsymbol{-6.6}$ & $\boldsymbol{-8.5}$ \\
  & TuRBO & $-2.4$ & $-3.6$ & $-5.5$ & $\boldsymbol{-6.5}$ & $-8.1$ \\
  & HCI-GIBO & $-2.1$ & $-3.4$ & $-5.0$ & $-5.9$ & $-7.3$ \\
  & Sobol random & $-2.0$ & $-2.3$ & $-2.9$ & $-2.8$ & $-3.0$ \\
\hline
\multirow{6}{*}{\makecell{\textbf{extremely low}}} & LES (ours) & $\boldsymbol{-0.6}$ & $-0.8$ & $\boldsymbol{-1.2}$ & $\boldsymbol{-1.5}$ & $\boldsymbol{-3.0}$ \\
  & MES & $\boldsymbol{-0.6}$ & $-0.9$ & $-1.3$ & $-1.5$ & $-2.7$ \\
  & logEI & $\boldsymbol{-0.6}$ & $\boldsymbol{-0.9}$ & $\boldsymbol{-1.3}$ & $\boldsymbol{-1.5}$ & $\boldsymbol{-3.0}$ \\
  & TuRBO & $-0.6$ & $-0.8$ & $-1.2$ & $-1.5$ & $-2.8$ \\
  & HCI-GIBO & $\boldsymbol{-0.6}$ & $\boldsymbol{-0.8}$ & $-1.1$ & $-1.3$ & $-2.7$ \\
  & Sobol random & $-0.5$ & $-0.7$ & $-1.0$ & $-1.0$ & $-1.5$ \\
\hline
\end{tabular}

\label{tab:ranks_results_within}
\end{table}

\begin{table}
\centering
\caption{Cumulative observed function values after full budget for within model comparison on GP samples. Entries not in bold are statistically significantly worse than the best preforming algorithm. Smaller is better. }
\begin{tabular}{l l c c c c c}
\hline
Complexity & Method & $d = 5$ & $d = 10$ & $d = 20$ & $d = 30$ & $d = 50$ \\
\hline
\multirow{6}{*}{\makecell{\textbf{high} }} & LES (ours) & $\boldsymbol{-260.1}$ & $\boldsymbol{-842.1}$ & $\boldsymbol{-2515.1}$ & $\boldsymbol{-2938.4}$ & $\boldsymbol{-3214.3}$ \\
  & MES & $35.8$ & $22.3$ & $0.1$ & $1.1$ & $4.0$ \\
  & logEI & $-176.6$ & $-230.6$ & $-17.1$ & $-0.8$ & $6.5$ \\
  & TuRBO & $\boldsymbol{-247.9}$ & $-672.5$ & $-1783.3$ & $-1503.0$ & $-977.9$ \\
  & HCI-GIBO & $-59.5$ & $-28.3$ & $-16.6$ & $-19.8$ & $-24.7$ \\
  & Sobol random & $1.5$ & $-1.7$ & $-7.2$ & $7.2$ & $3.7$ \\
\hline
\multirow{6}{*}{\makecell{\textbf{medium} }} & LES (ours) & $\boldsymbol{-272.8}$ & $\boldsymbol{-811.4}$ & $\boldsymbol{-2464.3}$ & $\boldsymbol{-2861.7}$ & $\boldsymbol{-3159.1}$ \\
  & MES & $-97.1$ & $167.3$ & $288.4$ & $6.5$ & $-0.6$ \\
  & logEI & $-163.7$ & $-645.5$ & $-2111.3$ & $-2071.3$ & $-1378.3$ \\
  & TuRBO & $\boldsymbol{-246.7}$ & $-697.6$ & $-1876.0$ & $-1924.9$ & $-1375.1$ \\
  & HCI-GIBO & $-114.5$ & $-121.3$ & $-51.4$ & $-27.4$ & $-4.2$ \\
  & Sobol random & $6.4$ & $3.0$ & $0.6$ & $5.3$ & $1.3$ \\
\hline
\multirow{6}{*}{\makecell{\textbf{low} }} & LES (ours) & $\boldsymbol{-192.9}$ & $\boldsymbol{-685.7}$ & $\boldsymbol{-2059.5}$ & $\boldsymbol{-2415.1}$ & $\boldsymbol{-2969.3}$ \\
  & MES & $-97.6$ & $-358.9$ & $-682.8$ & $-276.7$ & $337.6$ \\
  & logEI & $-88.1$ & $-434.0$ & $-1679.0$ & $-2144.8$ & $-2795.6$ \\
  & TuRBO & $\boldsymbol{-201.8}$ & $-637.6$ & $-1916.6$ & $-2048.7$ & $-2390.1$ \\
  & HCI-GIBO & $-169.8$ & $-581.9$ & $-1638.9$ & $-1874.3$ & $-2224.6$ \\
  & Sobol random & $7.1$ & $1.7$ & $-42.8$ & $-0.2$ & $-1.5$ \\
\hline
\multirow{6}{*}{\makecell{\textbf{extremely low}}} & LES (ours) & $\boldsymbol{-56.8}$ & $\boldsymbol{-160.5}$ & $\boldsymbol{-471.9}$ & $\boldsymbol{-591.0}$ & $\boldsymbol{-1142.4}$ \\
  & MES & $-13.0$ & $-7.5$ & $-132.2$ & $-69.1$ & $-252.3$ \\
  & logEI & $-51.0$ & $-122.5$ & $-357.4$ & $-320.6$ & $-581.0$ \\
  & TuRBO & $-57.1$ & $-152.3$ & $-456.3$ & $-516.8$ & $-896.0$ \\
  & HCI-GIBO & $-54.3$ & $-145.2$ & $-420.7$ & $-496.1$ & $-913.7$ \\
  & Sobol random & $-1.9$ & $21.1$ & $-74.3$ & $61.9$ & $-88.9$ \\
\hline
\end{tabular}
\label{tab:ranks_results_within_cum}
\end{table}

\FloatBarrier

\begin{figure}[h]
\includegraphics[width = \textwidth]{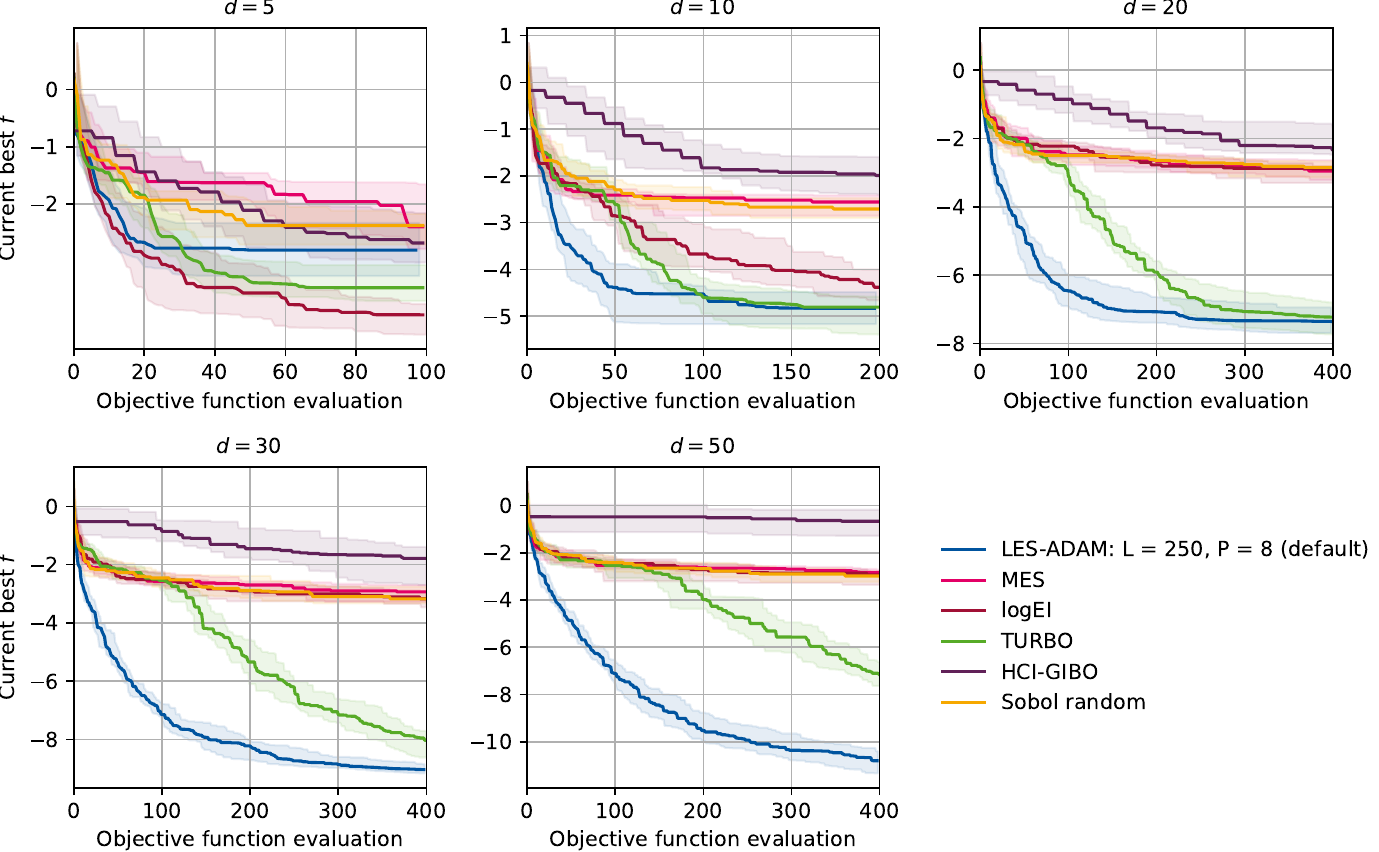}
\caption{\label{fig:apdx_withinmdl_1} \textbf{Within Model Comparison, Complexity - high:} Median, 25-, and 75-percent quantiles - detailed results} 
\end{figure}

\begin{figure}[h]
\includegraphics[width = \textwidth]{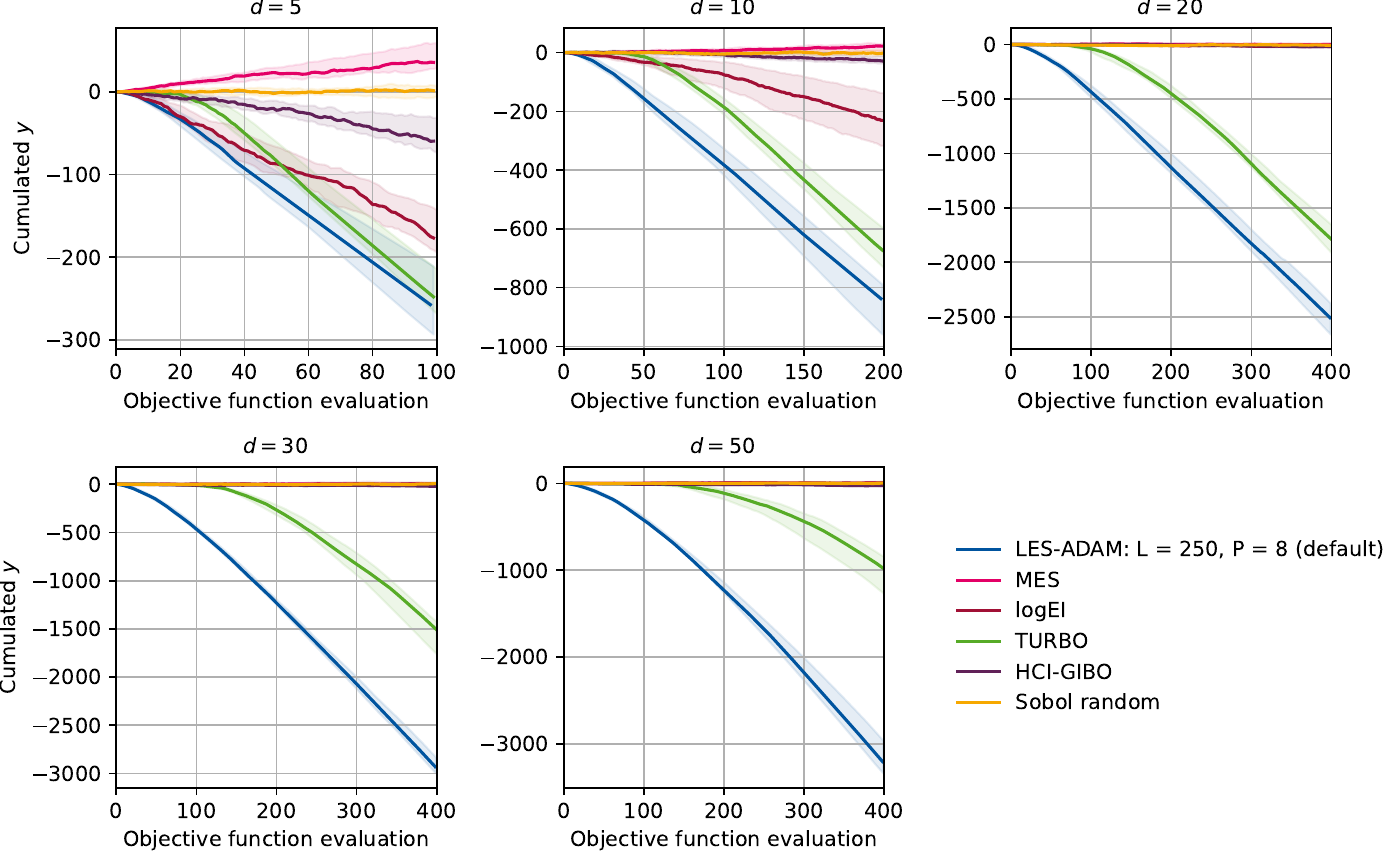}
\caption{\label{fig:apdx_withinmdl_2} \textbf{Within Model Comparison, Complexity - high:} Median, 25-, and 75-percent quantiles - detailed results} 
\end{figure}

\begin{figure}[h]
\includegraphics[width = \textwidth]{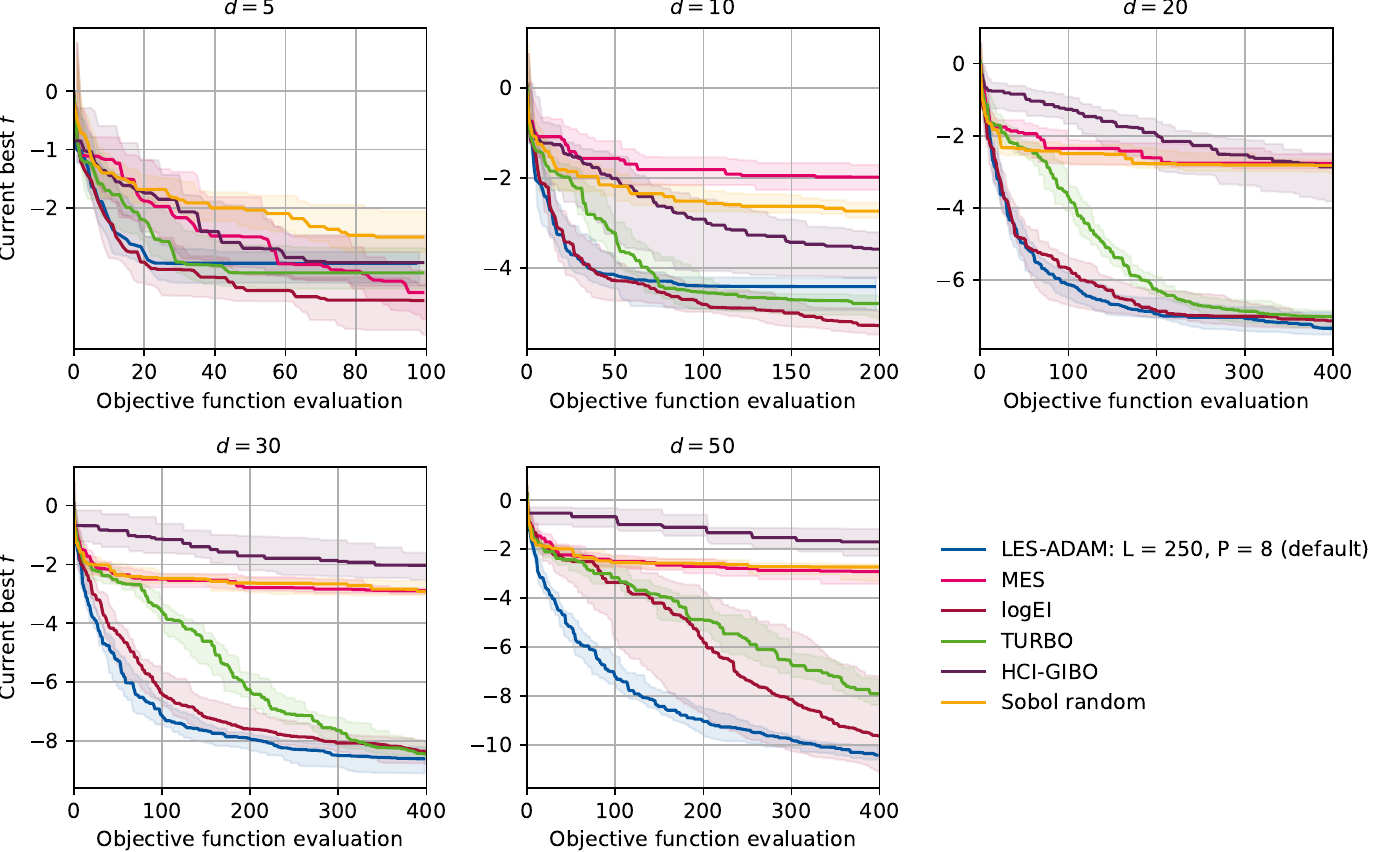}
\caption{\label{fig:apdx_withinmdl_3} \textbf{Within Model Comparison, Complexity - medium:} Median, 25-, and 75-percent quantiles - detailed results} 
\end{figure}

\begin{figure}[h]
\includegraphics[width = \textwidth]{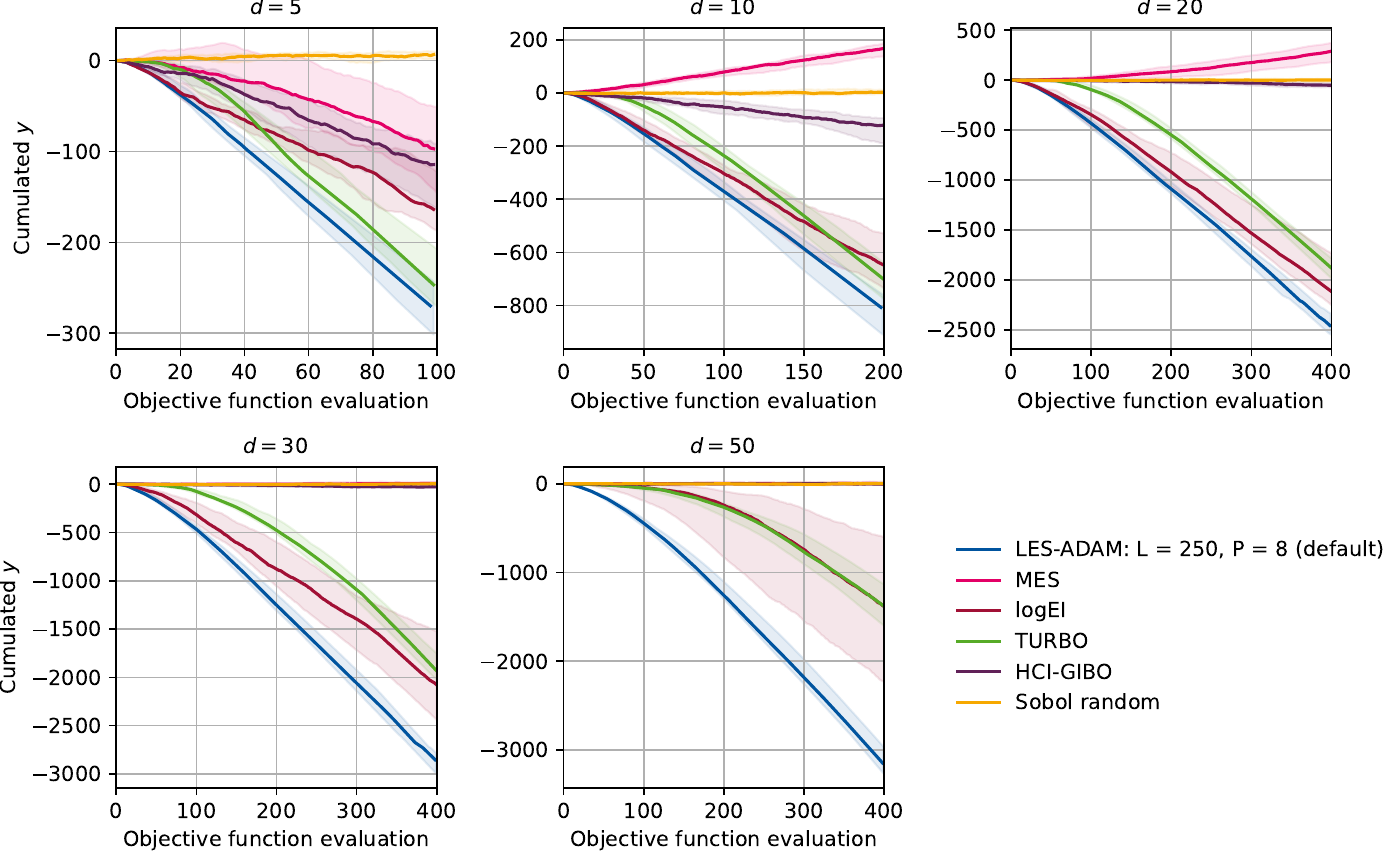}
\caption{\label{fig:apdx_withinmdl_4} \textbf{Within Model Comparison, Complexity - medium:} Median, 25-, and 75-percent quantiles - detailed results} 
\end{figure}

\begin{figure}[h]
\includegraphics[width = \textwidth]{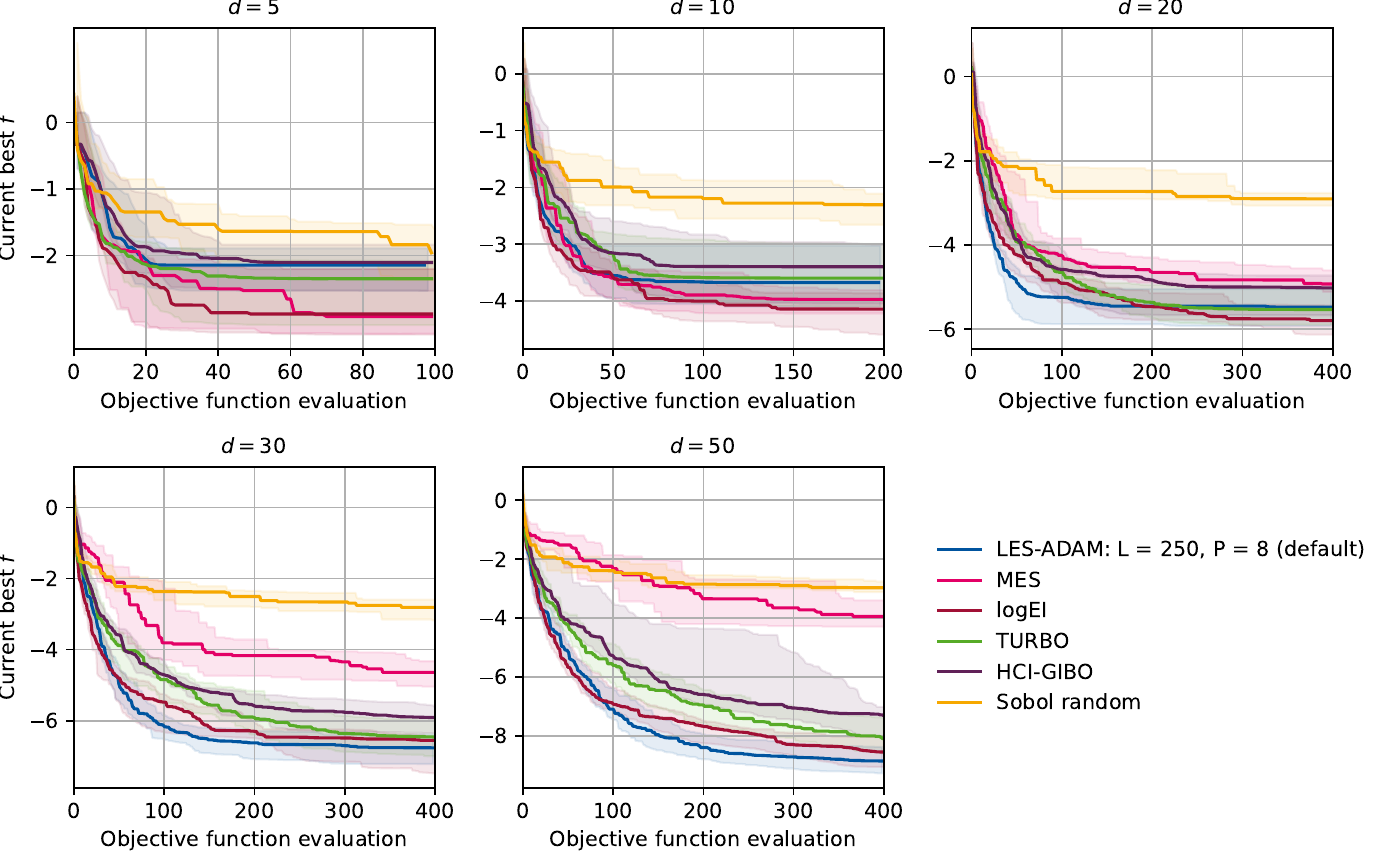}
\caption{\label{fig:apdx_withinmdl_5} \textbf{Within Model Comparison, Complexity - low:} Median, 25-, and 75-percent quantiles - detailed results} 
\end{figure}

\begin{figure}[h]
\includegraphics[width = \textwidth]{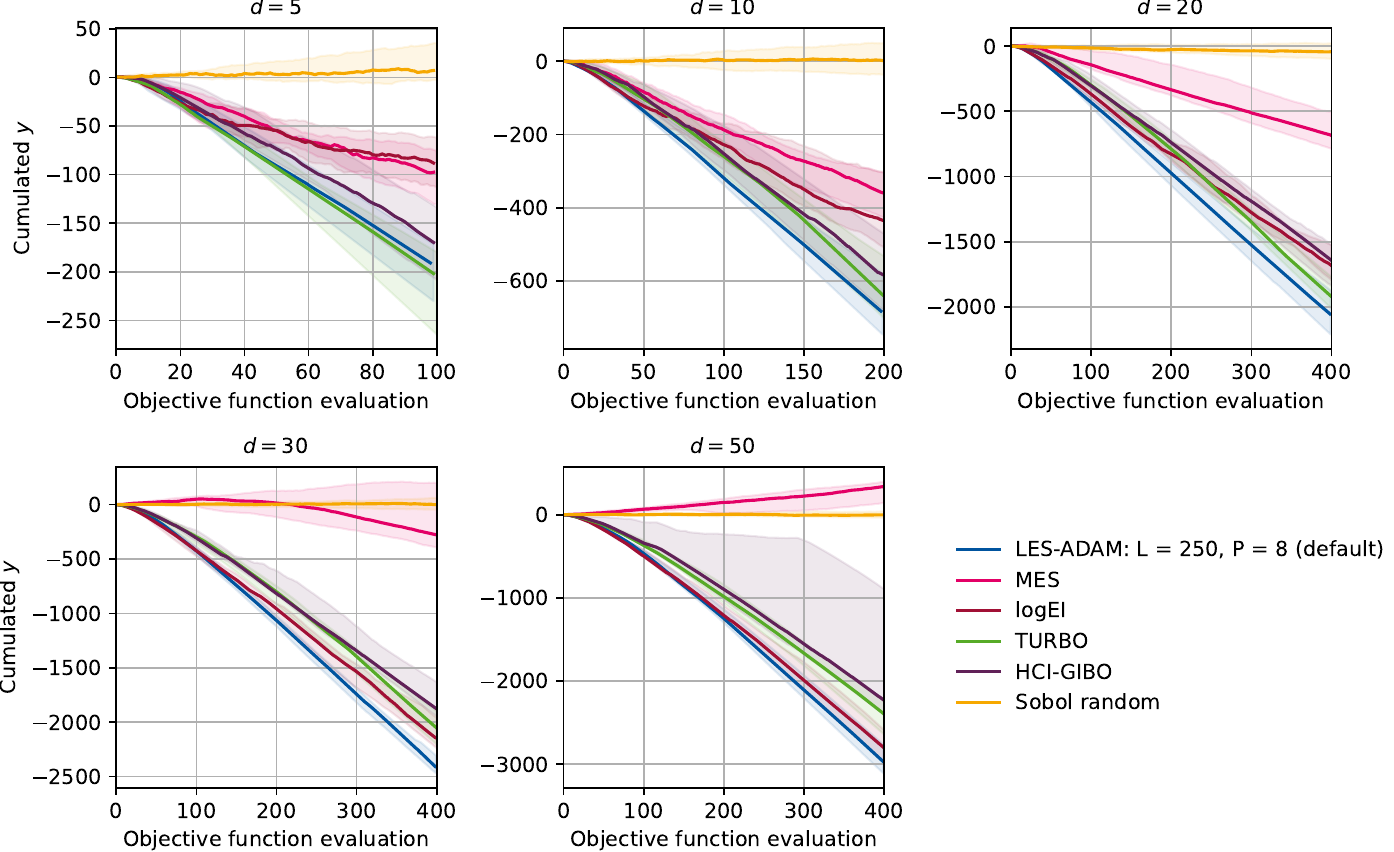}
\caption{\label{fig:apdx_withinmdl_6} \textbf{Within Model Comparison, Complexity - low:} Median, 25-, and 75-percent quantiles - detailed results} 
\end{figure}

\begin{figure}[h]
\includegraphics[width = \textwidth]{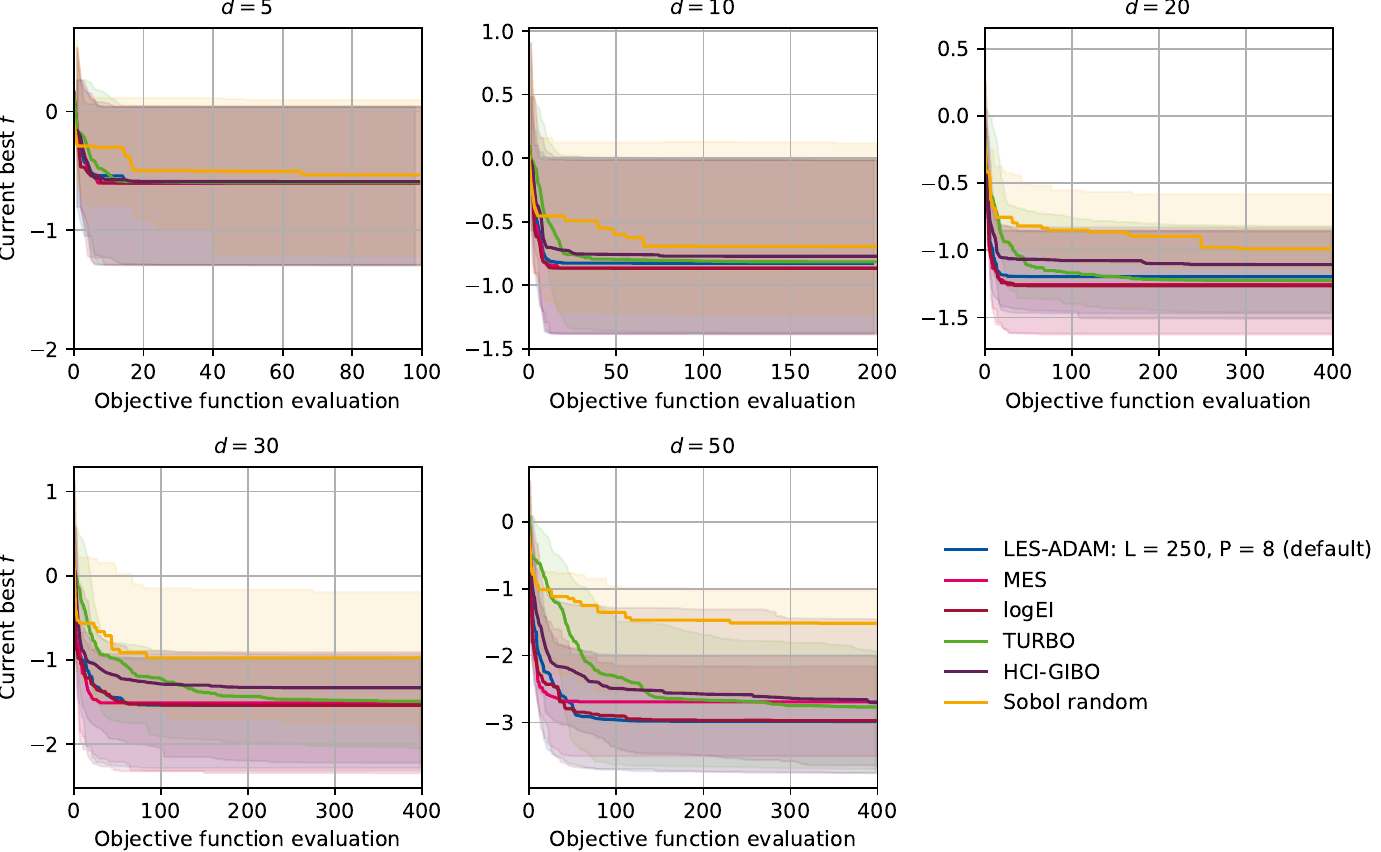}
\caption{\label{fig:apdx_withinmdl_7} \textbf{Within Model Comparison, Complexity - extremely low:} Median, 25-, and 75-percent quantiles - detailed results} 
\end{figure}

\begin{figure}[h]
\includegraphics[width = \textwidth]{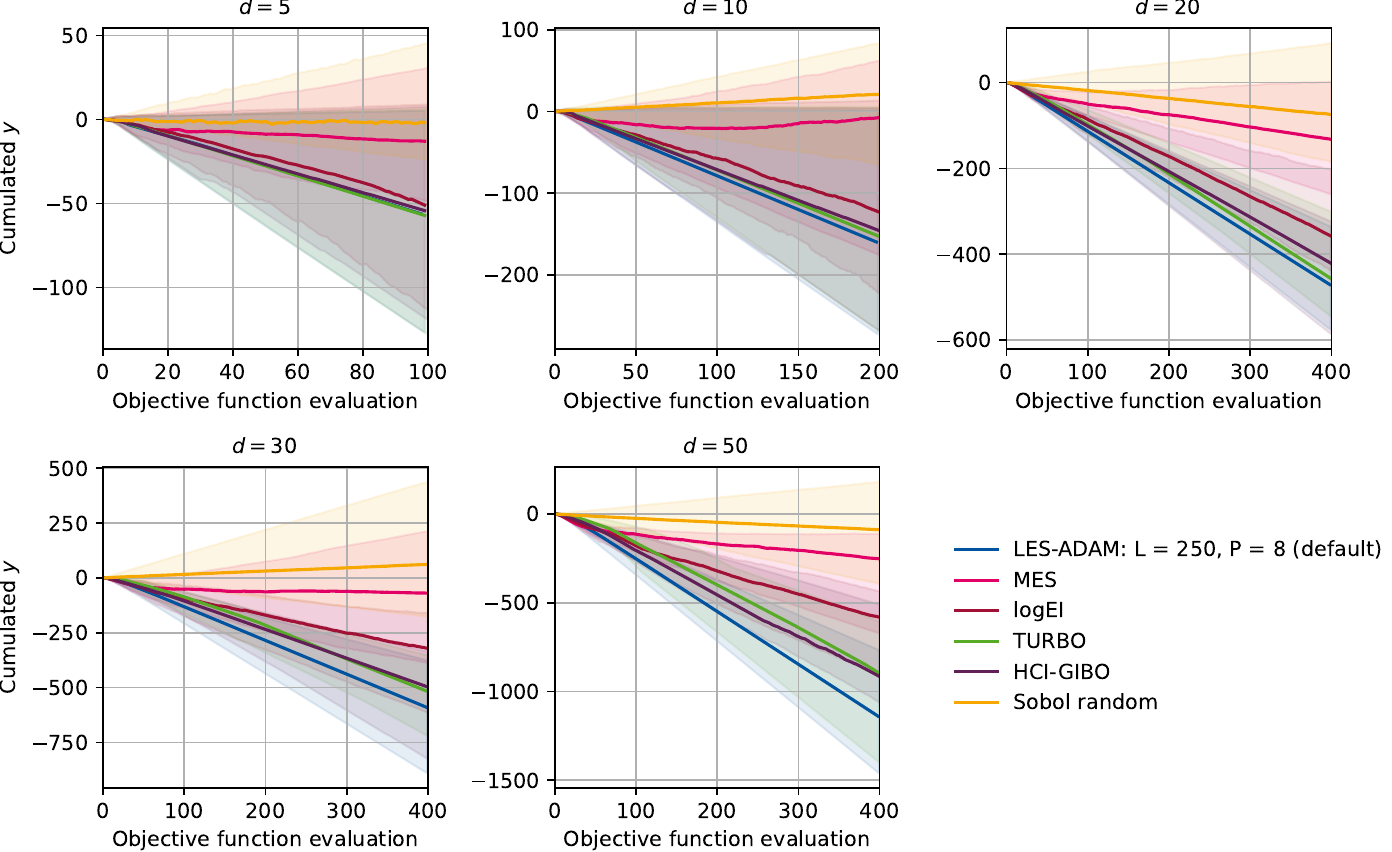}
\caption{\label{fig:apdx_withinmdl_last} \textbf{Within Model Comparison, Complexity - extremely low:} Median, 25-, and 75-percent quantiles - detailed results} 
\end{figure}

\FloatBarrier

\paragraph{Out of Model Comparison.}\label{apdx:oomdl} 

The out-of-model setup is identical to the within-model one except that GP hyperparameters are now estimated via MAP according to Table \ref{tab:GPhyperparams_GP_sample}. Tables \ref{tab:ranks_results_oom} and \ref{tab:ranks_results_oom_cum} report best and cumulative evaluations, and Figures \ref{fig:apdx_oom_1}–\ref{fig:apdx_oom_last} show the corresponding progress in optimization over the number of queries.

\begin{table}[htb]
\centering
\caption{Best achieved function value after full budget for out-of-model comparison on GP samples. Entries not in bold are statistically significantly worse than the best preforming algorithm. Smaller is better. }
\begin{tabular}{l l c c c c c}
\hline
Complexity & Method & $d = 5$ & $d = 10$ & $d = 20$ & $d = 30$ & $d = 50$ \\
\hline
\multirow{7}{*}{\makecell{\textbf{high}}} & LES (ours) & $-2.9$ & $\boldsymbol{-5.0}$ & $\boldsymbol{-7.2}$ & $\boldsymbol{-8.5}$ & $\boldsymbol{-7.8}$ \\
  & MES & $-1.9$ & $-2.5$ & $-2.8$ & $-2.8$ & $-3.0$ \\
  & logEI & $\boldsymbol{-4.0}$ & $-4.3$ & $-2.8$ & $-2.9$ & $-3.0$ \\
  & logEI-DSP & $\boldsymbol{-3.7}$ & $-4.1$ & $-4.0$ & $-4.0$ & $-4.1$ \\
  & TuRBO & $-3.7$ & $\boldsymbol{-5.0}$ & $\boldsymbol{-7.1}$ & $\boldsymbol{-8.2}$ & $-7.1$ \\
  & HCI-GIBO & $-2.3$ & $-2.2$ & $-1.9$ & $-2.0$ & $-1.8$ \\
  & Sobol random & $-2.4$ & $-2.7$ & $-2.8$ & $-3.2$ & $-3.0$ \\
\hline
\multirow{7}{*}{\makecell{\textbf{medium} }} & LES (ours) & $-3.0$ & $-4.6$ & $\boldsymbol{-7.1}$ & $\boldsymbol{-8.6}$ & $\boldsymbol{-8.8}$ \\
  & MES & $\boldsymbol{-3.6}$ & $-2.1$ & $-2.9$ & $-2.8$ & $-2.9$ \\
  & logEI & $\boldsymbol{-3.6}$ & $\boldsymbol{-5.1}$ & $\boldsymbol{-7.2}$ & $-8.1$ & $-4.5$ \\
  & logEI-DSP & $\boldsymbol{-3.6}$ & $\boldsymbol{-5.0}$ & $\boldsymbol{-7.0}$ & $-7.9$ & $-7.0$ \\
  & TuRBO & $-3.1$ & $-4.9$ & $\boldsymbol{-7.0}$ & $-7.9$ & $-8.0$ \\
  & HCI-GIBO & $-2.9$ & $-3.7$ & $-2.9$ & $-2.0$ & $-1.6$ \\
  & Sobol random & $-2.5$ & $-2.7$ & $-2.8$ & $-2.9$ & $-2.7$ \\
\hline
\multirow{7}{*}{\makecell{\textbf{low} }} & LES (ours) & $-2.1$ & $-3.7$ & $-5.2$ & $\boldsymbol{-6.6}$ & $\boldsymbol{-8.5}$ \\
  & MES & $\boldsymbol{-2.9}$ & $\boldsymbol{-4.0}$ & $-5.1$ & $-5.4$ & $-3.7$ \\
  & logEI & $\boldsymbol{-2.9}$ & $\boldsymbol{-4.1}$ & $\boldsymbol{-5.7}$ & $\boldsymbol{-6.4}$ & $\boldsymbol{-8.4}$ \\
  & logEI-DSP & $\boldsymbol{-2.9}$ & $\boldsymbol{-4.1}$ & $\boldsymbol{-5.7}$ & $\boldsymbol{-6.6}$ & $-8.1$ \\
  & TuRBO & $-2.4$ & $-3.7$ & $-5.4$ & $\boldsymbol{-6.4}$ & $-7.7$ \\
  & HCI-GIBO & $-2.2$ & $-3.5$ & $-4.9$ & $-5.9$ & $-7.5$ \\
  & Sobol random & $-2.0$ & $-2.3$ & $-2.9$ & $-2.8$ & $-3.0$ \\
\hline
\multirow{7}{*}{\makecell{\textbf{extremely low}  }} & LES (ours) & $-0.6$ & $-0.8$ & $-1.2$ & $-1.5$ & $\boldsymbol{-2.9}$ \\
  & MES & $-0.6$ & $-0.9$ & $-1.2$ & $-1.4$ & $-2.5$ \\
  & logEI & $\boldsymbol{-0.6}$ & $\boldsymbol{-0.9}$ & $\boldsymbol{-1.3}$ & $\boldsymbol{-1.5}$ & $\boldsymbol{-3.0}$ \\
  & logEI-DSP & $\boldsymbol{-0.6}$ & $\boldsymbol{-0.9}$ & $\boldsymbol{-1.3}$ & $-1.5$ & $\boldsymbol{-3.0}$ \\
  & TuRBO & $-0.6$ & $-0.8$ & $-1.2$ & $-1.5$ & $-2.9$ \\
  & HCI-GIBO & $-0.6$ & $-0.7$ & $\boldsymbol{-1.3}$ & $-1.5$ & $-2.5$ \\
  & Sobol random & $-0.5$ & $-0.7$ & $-1.0$ & $-1.0$ & $-1.5$ \\
\hline
\end{tabular}
\label{tab:ranks_results_oom}
\end{table}

\begin{table}[htb]
\centering
\caption{Cumulative observed function values after full budget for out of model comparison on GP samples. Entries not in bold are statistically significantly worse than the best preforming algorithm. Smaller is better. }
\begin{tabular}{l l c c c c c}
\hline
Complexity & Method & $d = 5$ & $d = 10$ & $d = 20$ & $d = 30$ & $d = 50$ \\
\hline
\multirow{7}{*}{\makecell{\textbf{high}}} & LES (ours) & $\boldsymbol{-259.0}$ & $\boldsymbol{-867.2}$ & $\boldsymbol{-2298.3}$ & $\boldsymbol{-2482.5}$ & $\boldsymbol{-2081.9}$ \\
  & MES & $56.9$ & $13.6$ & $11.6$ & $-0.2$ & $10.7$ \\
  & logEI & $-172.9$ & $-197.7$ & $0.4$ & $3.4$ & $2.8$ \\
  & logEI-DSP & $-130.4$ & $-169.4$ & $-128.1$ & $-127.2$ & $-119.3$ \\
  & TuRBO & $\boldsymbol{-257.0}$ & $-649.4$ & $-1789.6$ & $-1565.5$ & $-927.9$ \\
  & HCI-GIBO & $-50.6$ & $-24.5$ & $-27.7$ & $-18.1$ & $-8.9$ \\
  & Sobol random & $1.5$ & $-1.7$ & $-7.2$ & $7.2$ & $3.7$ \\
\hline
\multirow{7}{*}{\makecell{\textbf{medium}}} & LES (ours) & $\boldsymbol{-269.6}$ & $\boldsymbol{-819.0}$ & $\boldsymbol{-2267.5}$ & $\boldsymbol{-2494.5}$ & $\boldsymbol{-2427.1}$ \\
  & MES & $-111.1$ & $176.1$ & $34.3$ & $-3.5$ & $-5.4$ \\
  & logEI & $-172.8$ & $-667.3$ & $-1949.7$ & $-1597.4$ & $-110.1$ \\
  & logEI-DSP & $-150.1$ & $-524.4$ & $-1912.4$ & $-1644.4$ & $-652.1$ \\
  & TuRBO & $\boldsymbol{-250.9}$ & $-673.5$ & $-1900.0$ & $-1737.7$ & $-1352.6$ \\
  & HCI-GIBO & $-160.1$ & $-109.0$ & $-40.8$ & $-15.1$ & $-19.9$ \\
  & Sobol random & $6.4$ & $3.0$ & $0.6$ & $5.3$ & $1.3$ \\
\hline
\multirow{7}{*}{\makecell{\textbf{low}}} & LES (ours) & $\boldsymbol{-182.3}$ & $\boldsymbol{-662.1}$ & $\boldsymbol{-1883.7}$ & $\boldsymbol{-2183.6}$ & $\boldsymbol{-2572.6}$ \\
  & MES & $-102.5$ & $-373.0$ & $-666.8$ & $-443.0$ & $317.9$ \\
  & logEI & $-101.7$ & $-448.3$ & $-1639.7$ & $\boldsymbol{-2049.2}$ & $\boldsymbol{-2691.8}$ \\
  & logEI-DSP & $-88.9$ & $-380.8$ & $-1563.4$ & $-1966.1$ & $\boldsymbol{-2604.0}$ \\
  & TuRBO & $\boldsymbol{-204.1}$ & $-620.1$ & $-1764.8$ & $-1848.2$ & $-2151.1$ \\
  & HCI-GIBO & $-148.5$ & $-456.3$ & $-1454.9$ & $-1783.2$ & $-2128.8$ \\
  & Sobol random & $7.1$ & $1.7$ & $-42.7$ & $-0.1$ & $-1.5$ \\
\hline
\multirow{7}{*}{\makecell{\textbf{extremely low}}} & LES (ours) & $\boldsymbol{-54.2}$ & $\boldsymbol{-146.5}$ & $\boldsymbol{-475.7}$ & $\boldsymbol{-580.9}$ & $\boldsymbol{-1063.6}$ \\
  & MES & $-22.8$ & $-10.7$ & $-147.1$ & $-84.6$ & $-243.6$ \\
  & logEI & $-43.6$ & $-102.3$ & $-338.4$ & $-318.6$ & $-629.5$ \\
  & logEI-DSP & $-40.7$ & $-103.1$ & $-335.6$ & $-310.1$ & $-623.9$ \\
  & TuRBO & $\boldsymbol{-54.7}$ & $\boldsymbol{-155.0}$ & $-466.0$ & $-554.3$ & $-1053.0$ \\
  & HCI-GIBO & $-40.9$ & $-88.7$ & $-318.1$ & $-296.4$ & $-371.1$ \\
  & Sobol random & $-1.9$ & $21.1$ & $-74.2$ & $61.9$ & $-88.8$ \\
\hline
\end{tabular}
\label{tab:ranks_results_oom_cum}
\end{table}

\FloatBarrier

\begin{figure}[h]
\includegraphics[width = \textwidth]{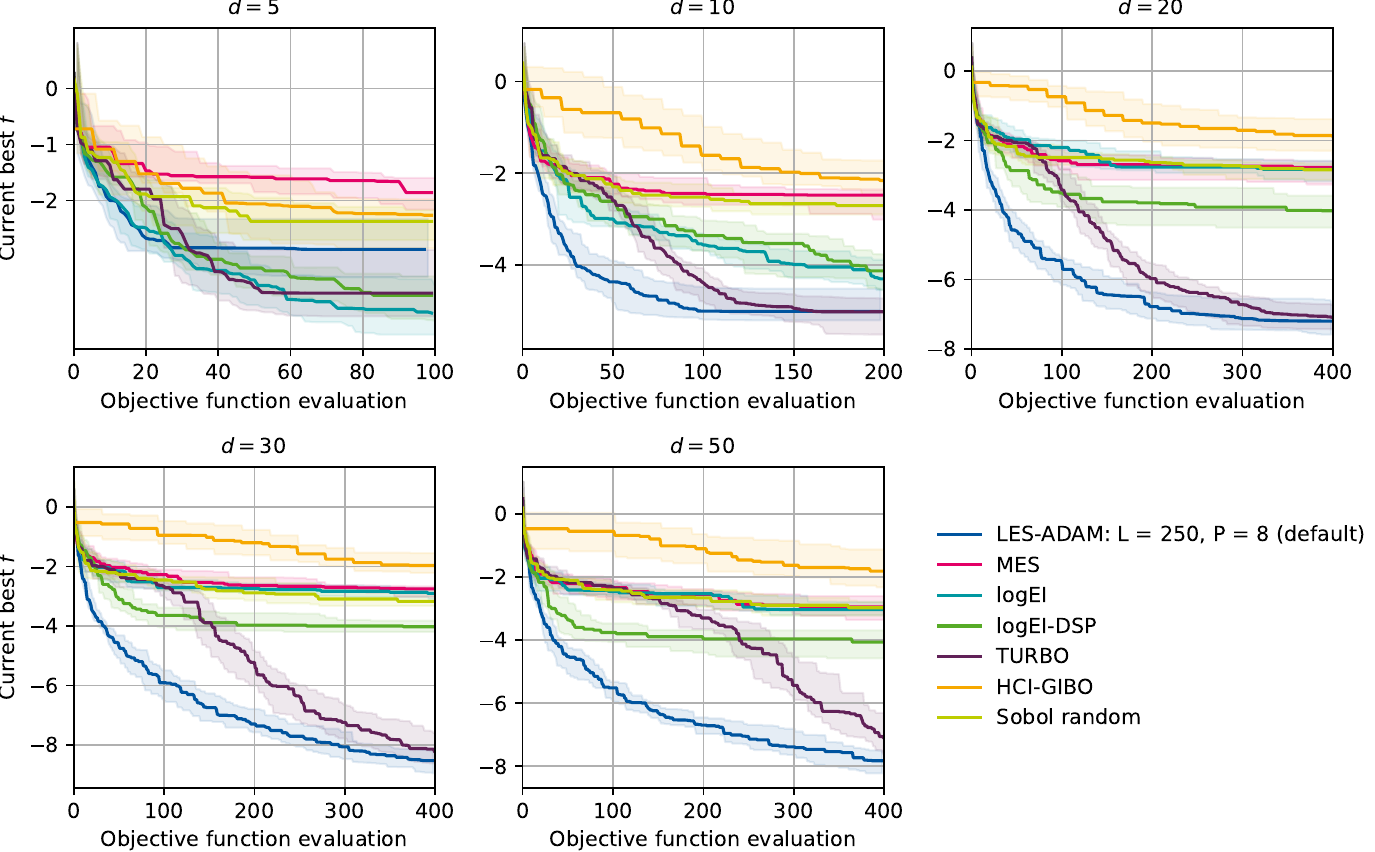}
\caption{\label{fig:apdx_oom_1} \textbf{Out of model comparison, complexity - high:} Median, 25-, and 75-percent quantiles - detailed results} 
\end{figure}

\begin{figure}[h]
\includegraphics[width = \textwidth]{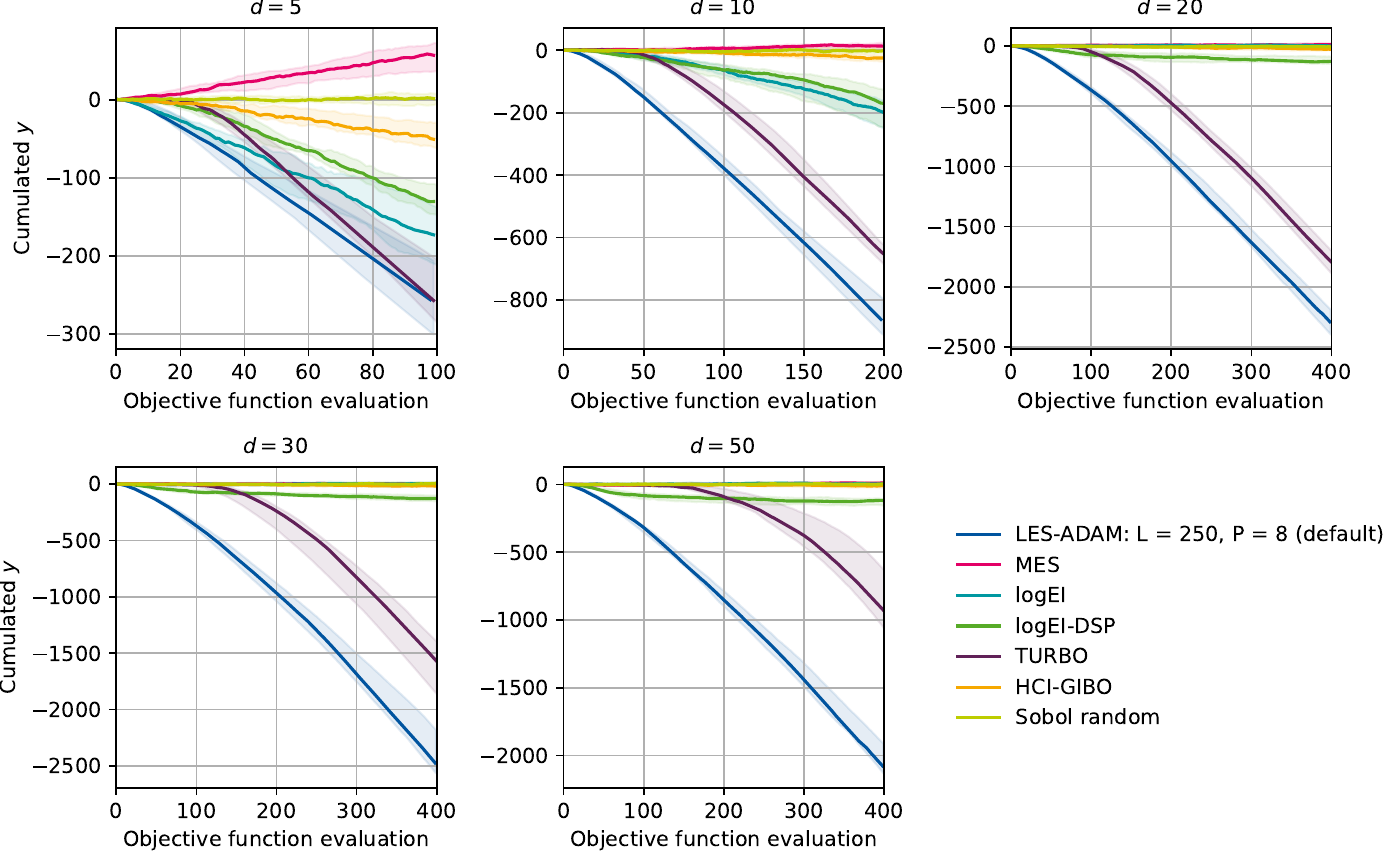}
\caption{\label{fig:apdx_oom_2} \textbf{Out of model comparison, complexity - high:} Median, 25-, and 75-percent quantiles - detailed results} 
\end{figure}

\begin{figure}[h]
\includegraphics[width = \textwidth]{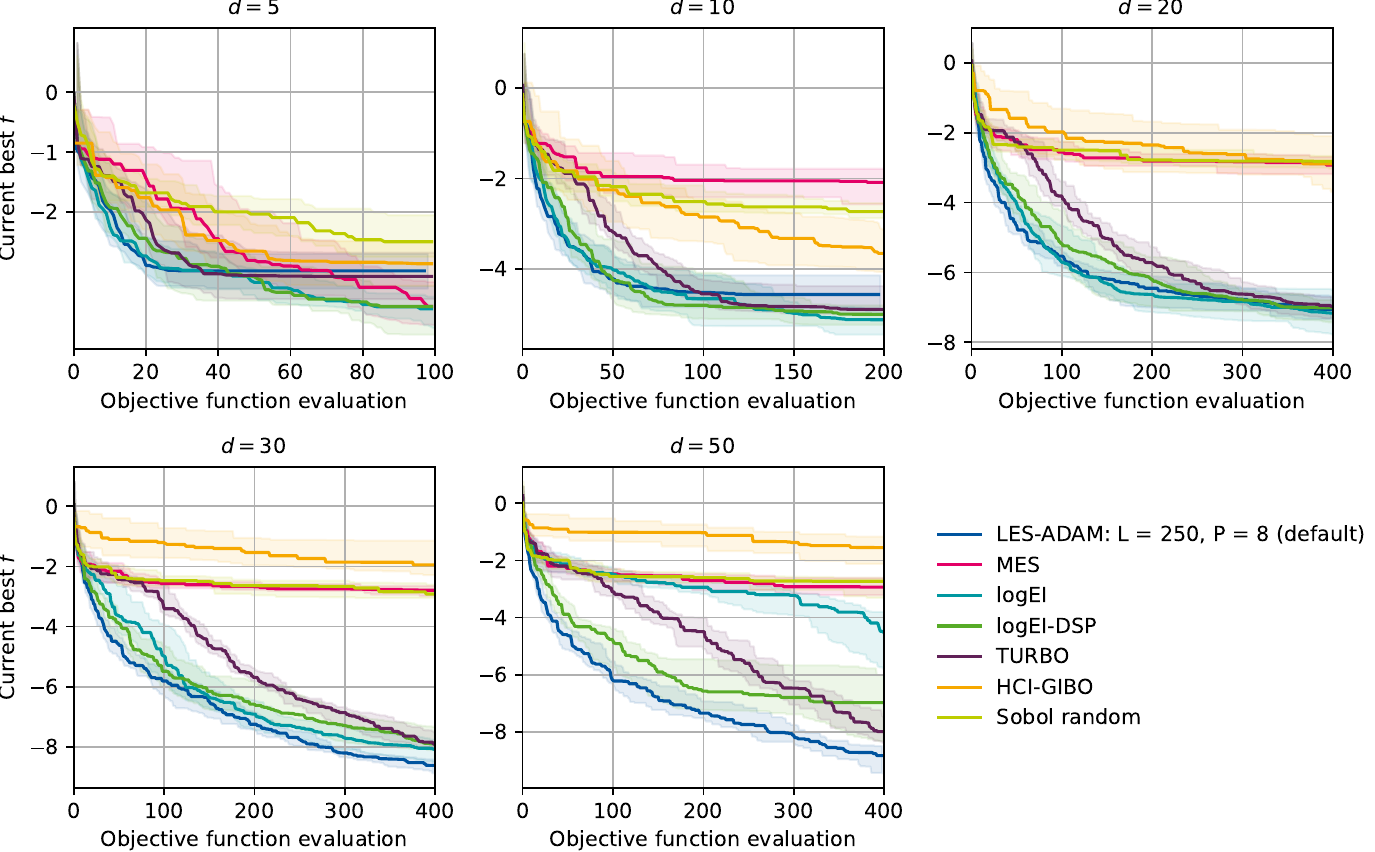}
\caption{\label{fig:apdx_oom_3} \textbf{Out of model comparison, complexity - medium:} Median, 25-, and 75-percent quantiles - detailed results} 
\end{figure}

\begin{figure}[h]
\includegraphics[width = \textwidth]{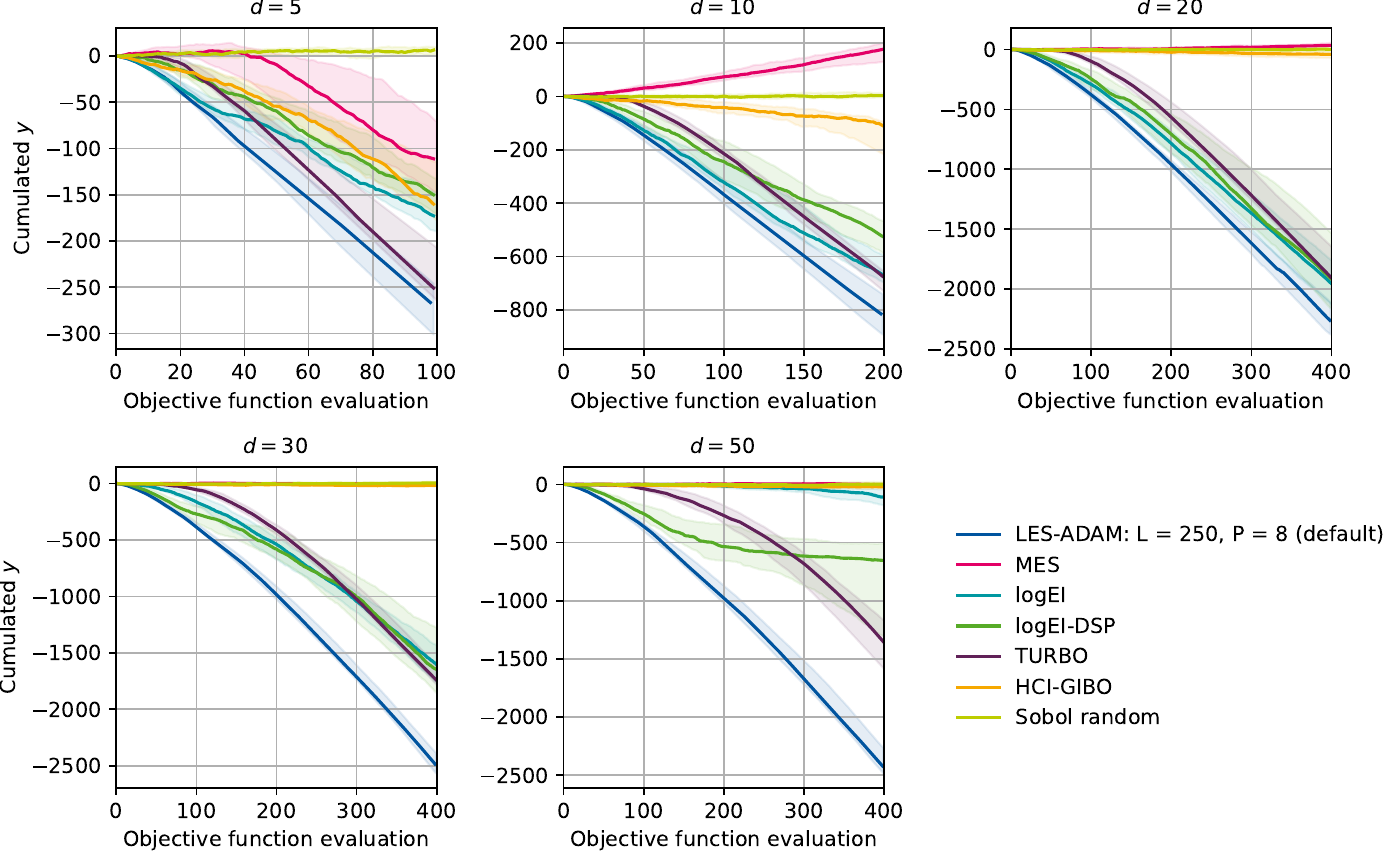}
\caption{\label{fig:apdx_oom_4} \textbf{Out of model comparison, complexity - medium:} Median, 25-, and 75-percent quantiles - detailed results} 
\end{figure}

\begin{figure}[h]
\includegraphics[width = \textwidth]{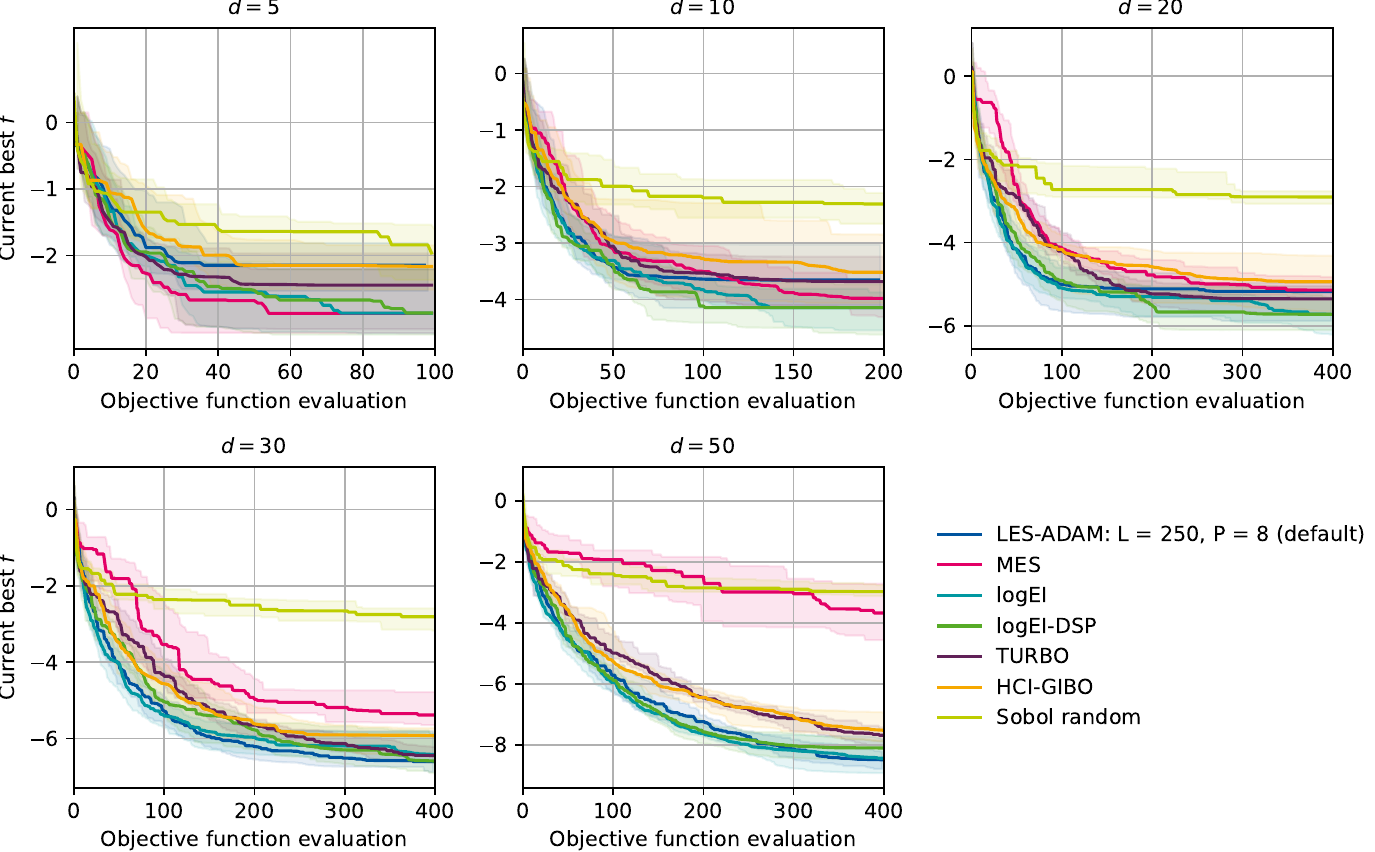}
\caption{\label{fig:apdx_oom_5} \textbf{Out of model comparison, complexity - low:} Median, 25-, and 75-percent quantiles - detailed results} 
\end{figure}

\begin{figure}[h]
\includegraphics[width = \textwidth]{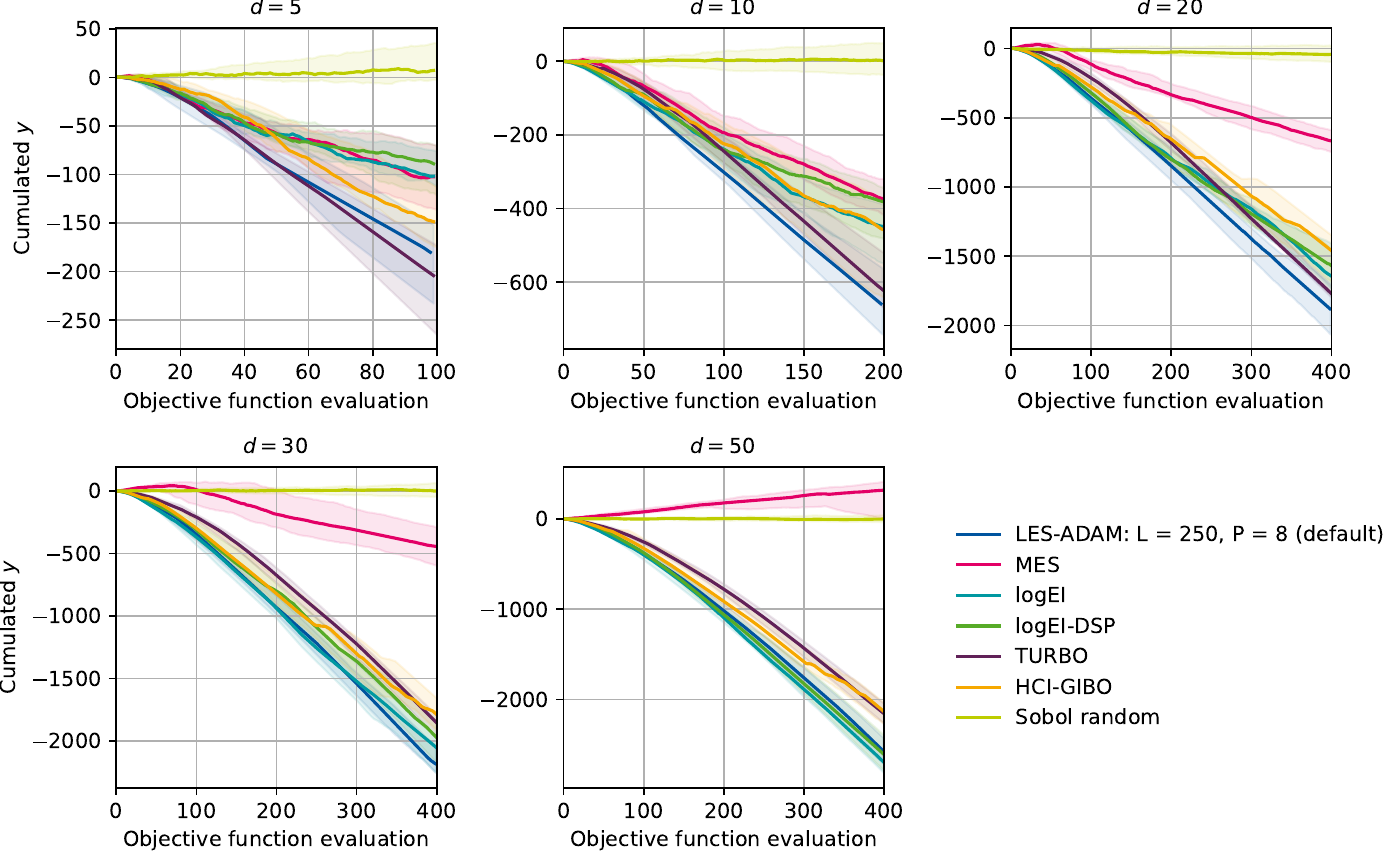}
\caption{\label{fig:apdx_oom_6} \textbf{Out of model comparison, complexity - low:} Median, 25-, and 75-percent quantiles - detailed results} 
\end{figure}

\begin{figure}[h]
\includegraphics[width = \textwidth]{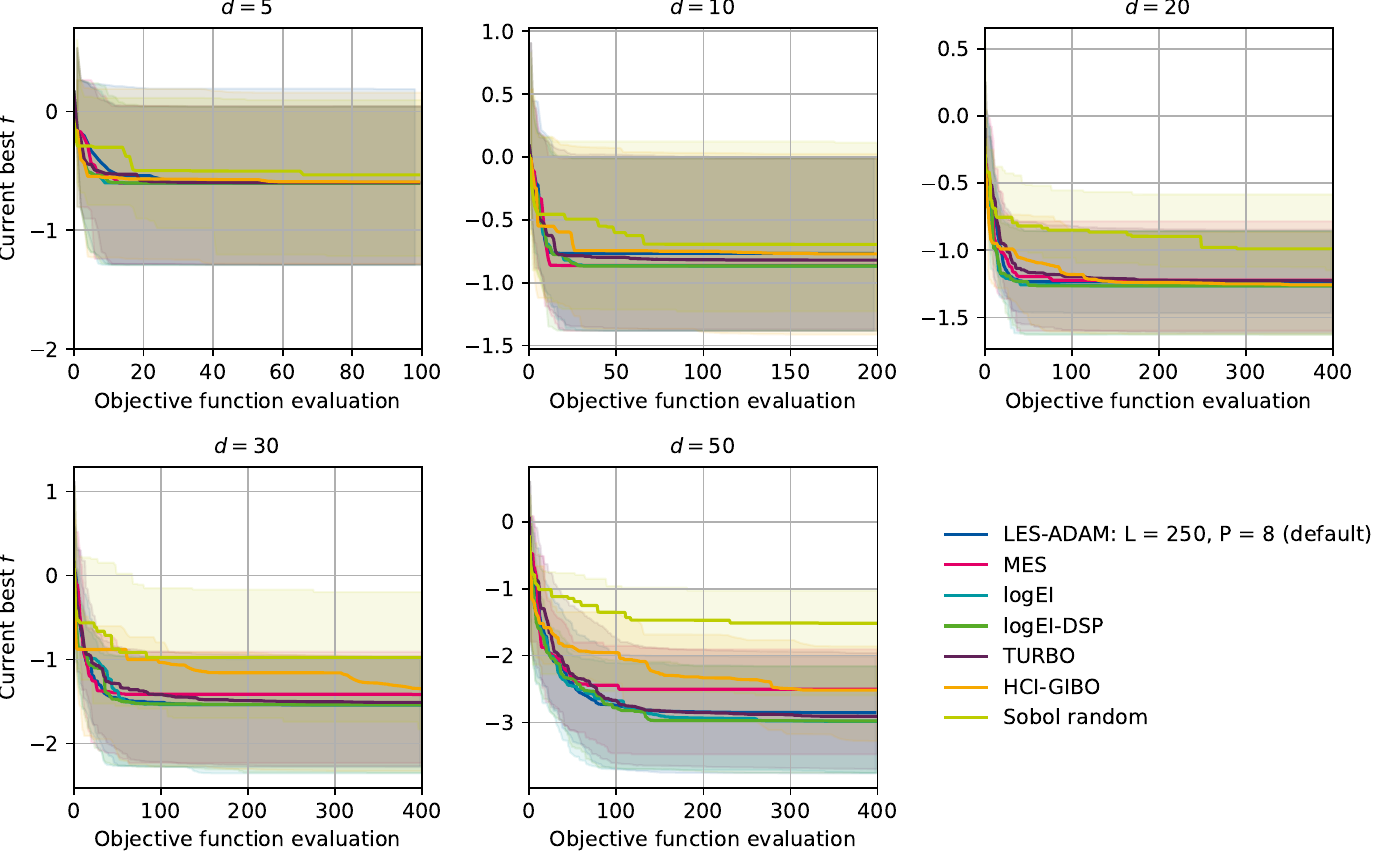}
\caption{\label{fig:apdx_oom_7} \textbf{Out of model comparison, complexity - extremely low:} Median, 25-, and 75-percent quantiles - detailed results} 
\end{figure}

\begin{figure}[h]
\includegraphics[width = \textwidth]{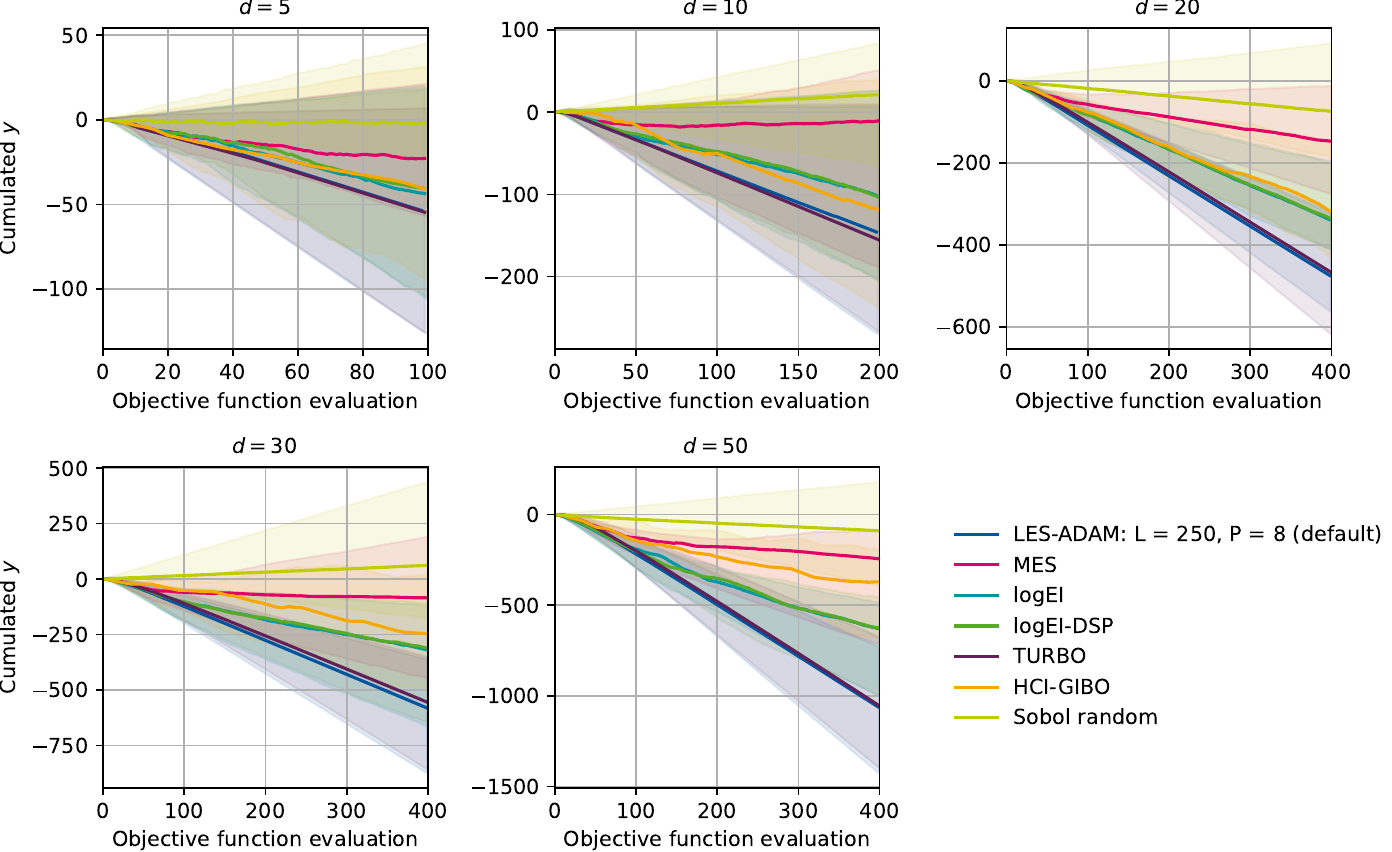}
\caption{\label{fig:apdx_oom_last} \textbf{Out of model comparison, complexity - extremely low:} Median, 25-, and 75-percent quantiles  - detailed results} 
\end{figure}

\FloatBarrier

\subsection{Additional Details and Results on Synthetic and Application-Oriented Objective Functions} \label{apdx:additionalres}

In total, we evaluate LES on nine benchmark functions (Fig.\ref{fig:analytic_target_comparison_full} and \ref{fig:analytic_target_comparison_cum}). On functions with a single local optimum (square function $f(x) = \params\params^\top$), all methods reliably identify the optimum, though LES and TuRBO achieve lower cumulative regret, underscoring the advantage of local search in this setting. In contrast, the 5-d Ackley function -- designed as a failure case for LES -- leads all methods, including LES and the global baselines, to perform poorly (Appx.~\ref{apdx:additionalres}). Surprisingly, for 30-d Ackley, LES and TuRBO outperform global methods. However, LES has a high run-to-run variance which indicates that some runs get stuck in local optima.

In the rover \cite{wang2018batched} and Mopta08 \cite{jones2008mopta} tasks, LES, logEI and TuRBO perform similarly with LES being best in the rover task and logEI being best in the Mopta08 task. 
In the lunar lander task \cite{brockman2016open}, LES is not competitive. The lunar lander task has multiple local optima where LES is getting stuck in some runs (see Fig.~\ref{fig:apdx_lunar}).

The lunar lander problem (Fig.~\ref{fig:apdx_lunar}) and the Ackley function both contain many local minima, which makes them particularly challenging for our local method. Interestingly, LES performs still best on the 30-dimensional Ackley function. Overall, LES achieves the lowest cumulative regret -- sometimes tied with other algorithms -- except on low-dimensional problems with many local optima (Ackley-$d=5$, Lunar). For logEI, high exploration costs occur only in low dimensions, which may be explained by its tendency to repeatedly sample the same location once it has found a (local) optimum. In terms of simple regret, LES matches the baselines except on the low-dimensional, multi-modal benchmarks (5-d Ackley and Lunar).

All BO algorithms use the hyperparameter as presented in Table \ref{tab:GPhyperparams_other}. Hyperparameters follow \cite{xu2025standard}, using a box hyperprior and length scale initialization scaled by $\sqrt{d}$ to favor low model complexity.
Observations are generated without noise. We use 20 seeds for the policy search tasks and 10 seeds for the synthetic functions.

\begin{table}[H]
  \caption{Model Hyperparameters for the synthetic and application-oriented objective functions}
  \label{tab:GPhyperparams_other}
  \centering
  \begin{tabular}{lll}
    \toprule
    Name     & Description     & Value \\
    \midrule
    $k(\cdot,\cdot)$    & kernel & SE-ARD  \\
    $p(l)$              & length-scale hyper prior & None  \\
    $\sigma_{\mathrm{n}}$  & observation noise & fixed at $0.001$\\ 
    $\sigma_{\mathrm{k}}$  &GP output scale & variable \\
    $l_{\mathrm{max}}$  & length scale upper bound & $ \sqrt{d}$\\ 
    $l_{\mathrm{min}}$  & length scale lower bound & $0.05$\\
    $l_{\mathrm{init}}$ & length scale initialization & $0.2 \sqrt{d}$\\
     & hyperparameter optimization frequency & after every sample\footnote{Except for the GIBO variants, where we optimize the hyperparameter only after each step.}\\
      & standardize data & yes\\
    \bottomrule
  \end{tabular}
\end{table}

\footnotetext[4]{Except for the GIBO variants, where we optimize the hyperparameter only after each step.}

\begin{figure}[h]
\centering
\includegraphics[width = 0.5\textwidth]{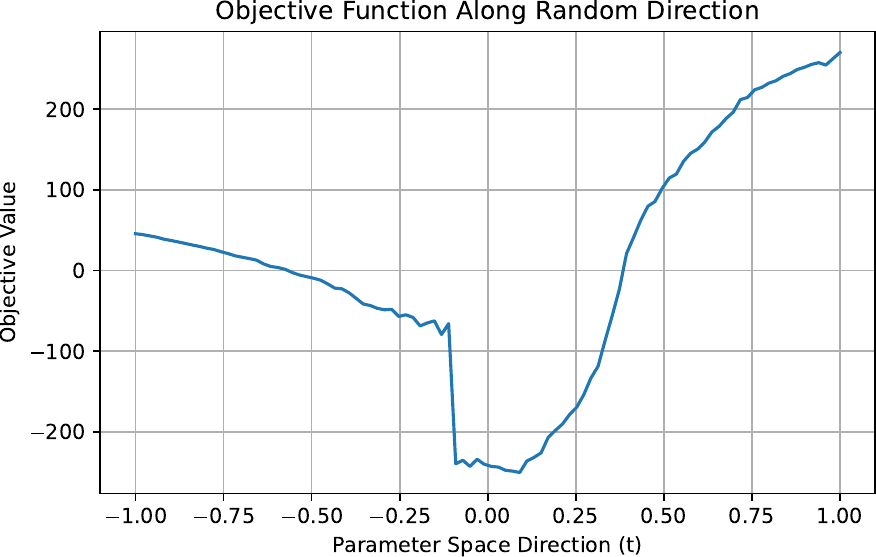}
\caption{\label{fig:apdx_lunar} A random slice through the deterministic lunar lander objective function. Although the objective is deterministic, we see multiple noise-like local optima and a prominent step in the objective function landscape. Both properties are hard to model with a GP using an SE kernel with small observation noise.} 
\end{figure}

\begin{figure}[h]
\includegraphics[width = \textwidth]{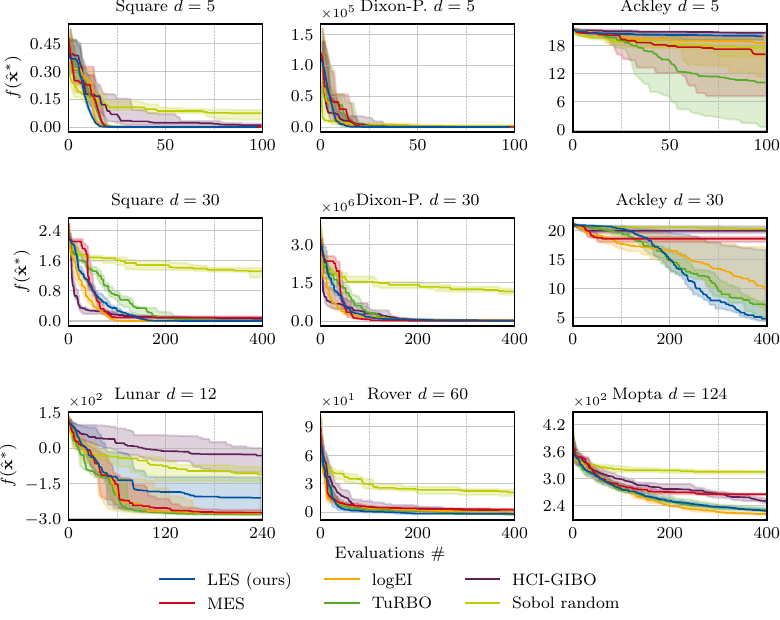}
\caption{\label{fig:analytic_target_comparison_full}Median, 25-, and 75-percent quantiles - synthetic and application-oriented objective functions} 
\end{figure}

\begin{figure}
\includegraphics[width = \textwidth]{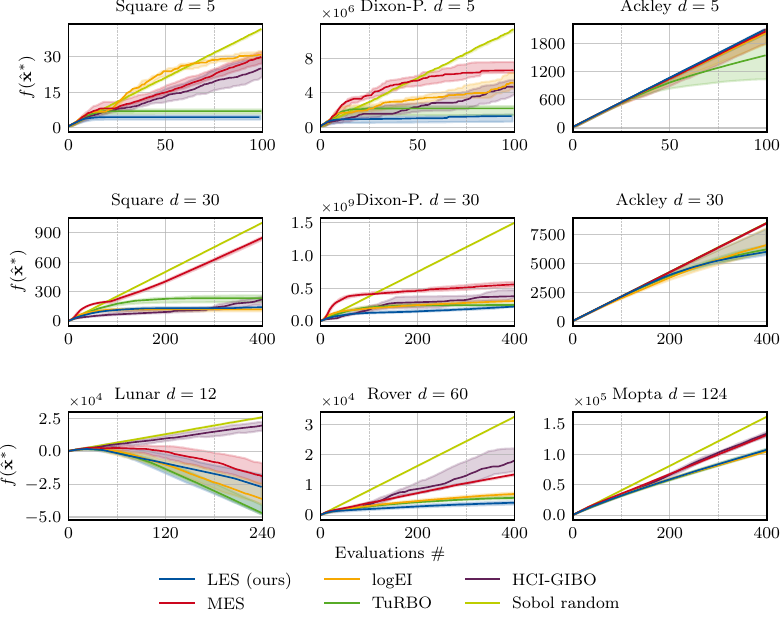}
\caption{\label{fig:analytic_target_comparison_cum} Median, 25-, and 75-percent quantiles of cumulative cost - synthetic and application-oriented objective functions} 
\end{figure}

\FloatBarrier

\subsection{Ablations on Approximation Accuracies and Runtime} \label{apdx:discretization}

Figure \ref{fig:ablation_discretization} illustrates the effect of the number of Monte Carlo samples $L$ and the number of equally spaced points $P$ taken from each descent sequence. We evaluate this in the out-of-model GP sample scenario with medium complexity (see Sec.~\ref{sec:res_gpsample}). As expected, more accurate approximations yield better performance, with the differences most pronounced in the $d=50$ case. While $L=250$ and $P=16$ performs best, we adopt $L=250$ and $P=8$ in our experiments as a compromise between runtime and accuracy.

We further verify that conditioning on function values instead of gradients in the descent sequence does not substantially harm performance. For runtime and memory reasons, gradient conditioning (LES-ADAM-Grad.-Cond.) was only run with coarse discretizations, up to 300 samples and excluding $d=50$.      

\begin{figure}[H]
\includegraphics[width = \textwidth]{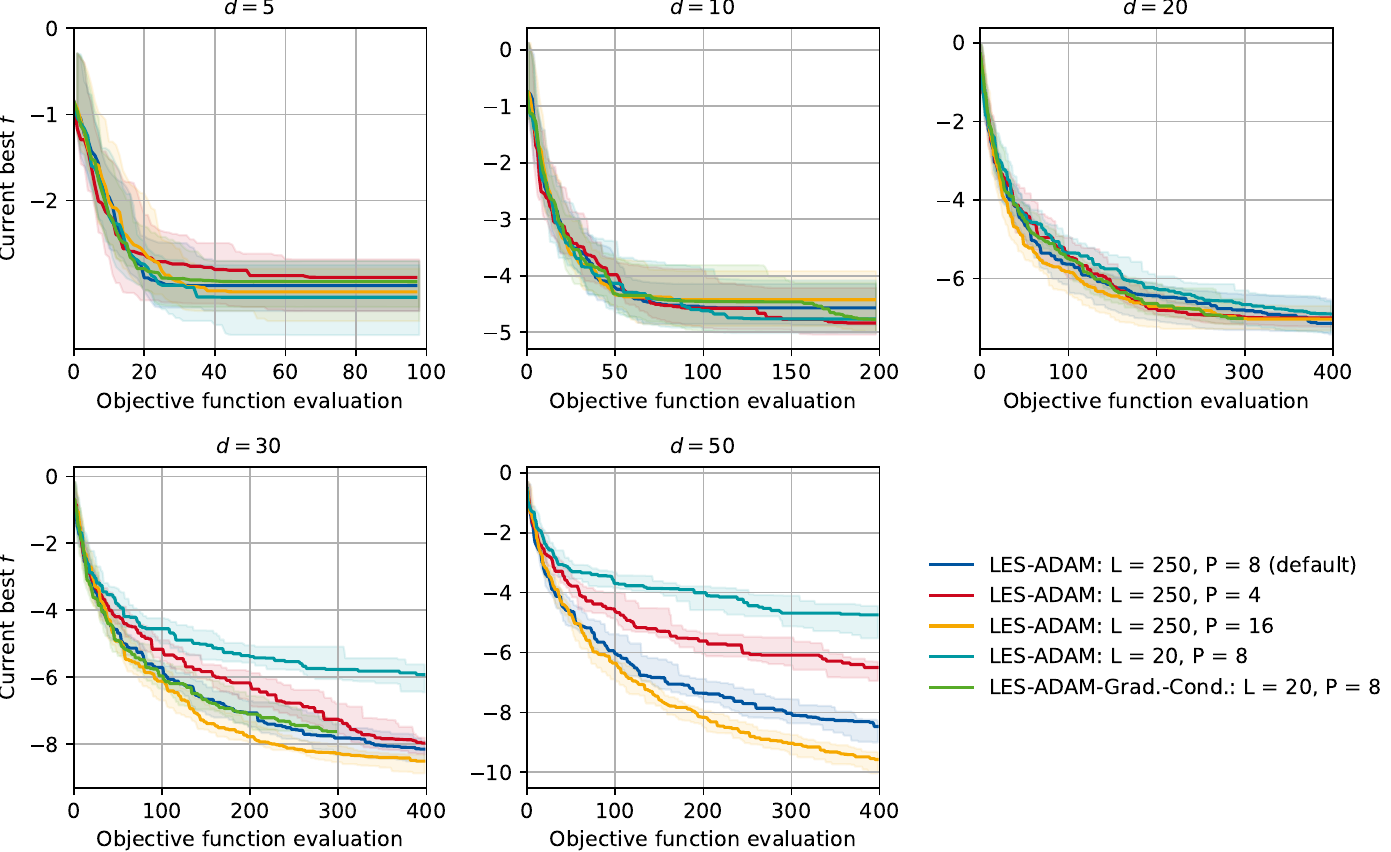}
\caption{\label{fig:ablation_discretization} Comparing different acquisition function approximation accuracies - the better the approximation (larger value of $P$ and $L$) the better the results  (Out-Of-Model Comparison on GP-Samples - medium complexity).} 
\end{figure}

The computational cost of LES is closely related to the chosen discretization, i.e., the values of $L$ and $P$. To compare the influence of the acquisition function choice on overall runtime we evaluate it in the within-model comparison case. Table \ref{tab:computation_cost} shows the results for medium complexity. For comparison, we also include the runtime of our baselines. Additionally, Figure \ref{fig:ablation_runtime} shows the average runtime per iteration.   

Our proposed approximation of LES requires roughly 10 times the wall-clock time compared to TuRBO, with an average of ~17 seconds per iteration to select the next query. Notably, BoTorch's logEI has a similar runtime.

As a caveat, runtime depends strongly on several factors, including settings for acquisition function optimization. We used the default configurations from BoTorch tutorials for all baselines and did not optimize any baseline or our implementation for speed.

\begin{table}[H]
\centering
\caption{Average computation time per iteration of LES-ADAM for the medium complexity within model comparison, i.e., without GP hyperparameter optimization. Results are given in seconds and are averaged over 20 seeds.  }\label{tab:computation_cost}
\begin{tabular}{lccccc}\\\toprule  
& $d = 5$ & $d = 10$ & $d = 20$ & $d = 30$ & $d = 50$ \\ \midrule 
LES: L = 250, P = 8 (default)  & 11.4  & 12.6 & 16.5 & 17.1 & 17.6  \\ 
LES: L = 250, P = 4  & 10.6  & 11.7 & 14.0 & 14.3 & 15.4  \\
LES: L = 250, P = 16  & 13.8  & 16.6 & 24.5 & 24.1 & 25.3  \\
LES: L = 20, P = 8  & 3.5  & 3.8 & 4.0 & 4.2 & 4.3  \\
\hline
MES  & 8.2  & 2.1 & 5.1 & 1.8 & 1.3  \\
TuRBO & 0.4 & 0.4 &  1.2 & 1.4 & 1.4  \\
logEI & 0.3 & 0.5 & 15.5 & 23.0 & 24.0  \\

\end{tabular}
\end{table}

\begin{figure}[H]
\includegraphics[width = \textwidth]{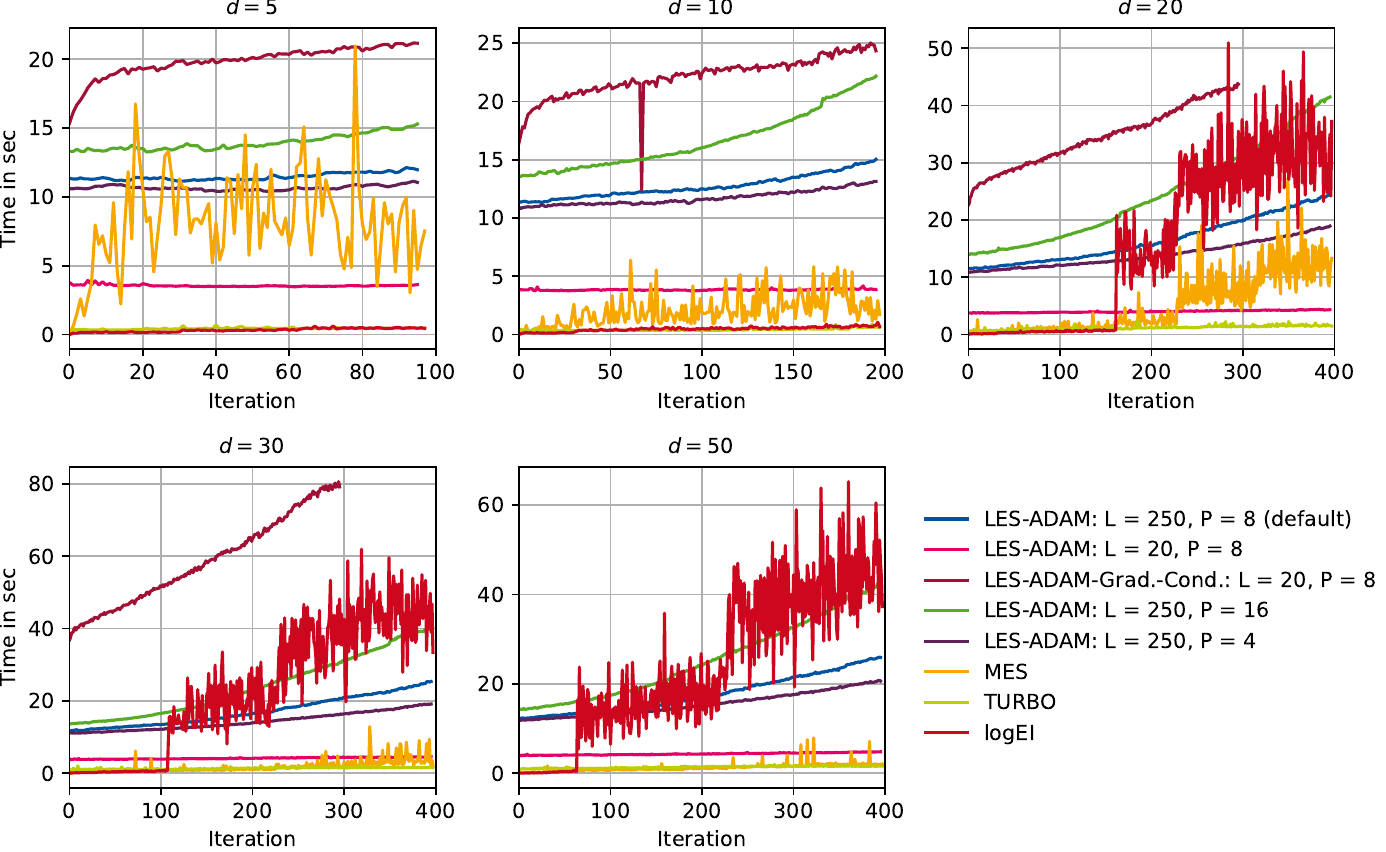}
\caption{\label{fig:ablation_runtime}Evaluating LES with different local optimizers (Out-Of-Model Comparison on GP-Samples - low complexity)} 
\end{figure}

\subsection{Comparing Different Iterative Optimizers} \label{apdx:other_les}

If not stated otherwise, the ADAM optimizer was employed as the local optimization algorithm throughout this work. However, the general framework is applicable to any kind of iterative optimization. Therefore, we evaluated the impact of different iterative optimization schemes on the overall performance. We use the out of model comparison GP sample scenarios (see Sec. \ref{sec:res_gpsample}).

We expect that the performance of different inner optimizers also depends on the kernel. Sample paths from a GP with a Matérn 1/2 kernel are not continuously differentiable, making LES-ADAM and LES-GD less suitable choices. In such cases, LES with CMA-ES or other zeroth-order optimizers (e.g., hill climbing or pattern search) may perform better. We leave a more thorough evaluation of the benefits of different local optimizers in LES for future work.

\paragraph{ADAM} For Adam we use $500$ local optimization steps a step size of $0.002$, and default Keras momentum hyperparameter ($\beta_1=0.9$, $\beta_2=0.999$). As an alternative, we also evaluate a LES-ADAM variant with less aggressive gradient smoothing ($\beta_1=0.5$). Instead of conditioning on the gradient observations, we condition on function values. We show in Appx. \ref{apdx:discretization} that this simplification is justified empirically.

\paragraph{Gradient Descent} For Gradient Descent (GD) we also use $500$ local optimization steps and a smaller learning rate of $0.0001$. Preliminary results with a larger step size (the same as in ADAM) has produced oscillating behavior. Similarly to ADAM we condition on function values instead of gradient observations.

\paragraph{CMAES} As a third optimizer we choose CMAES and run it for $50$ steps. We approximate the descent sequence by the mean of the parameter distribution. The $\sigma$ hyperparameter is set to $0.5$ and similar to ADAM and GD we warm-start CMAES from the best solution found so far. Due to the high computational cost of running CMAES on multiple GP samples in each iteration, we evaluate it only in the 5-d and 10-d case and use a coarse discretization. Note that CMAES is a zeroth-order optimization algorithm and therefore does not require the GP-samples to be differentiable.

Figures \ref{fig:ablation_optimizer_low} to \ref{fig:ablation_optimizer_high} show the results for low, medium, and high complexity. LES-CMAES performs better than the other algorithms for $d=5$ and low problem complexity. This may hint to a more global search behavior and may indicate that the optimizer's properties on the individual samples may carry over to the respective LES version. However, already at $d=10$ or higher complexity LES-CMAES falls behind. This can be attributed to the worse performing global optimization or to the descent sequence approximation using the mean of the population not being accurate enough. Both LES-ADAM version perform similar in all complexities. LES-GD performs worse than LES-ADAM in the low-complexity case. However, in the medium and high complexity cases LES-GD performs best. 

Overall, results highlight that investigating various local optimizers for different model properties is an interesting direction for future research.   

\vfill

\begin{figure}[H]
\includegraphics[width = \textwidth]{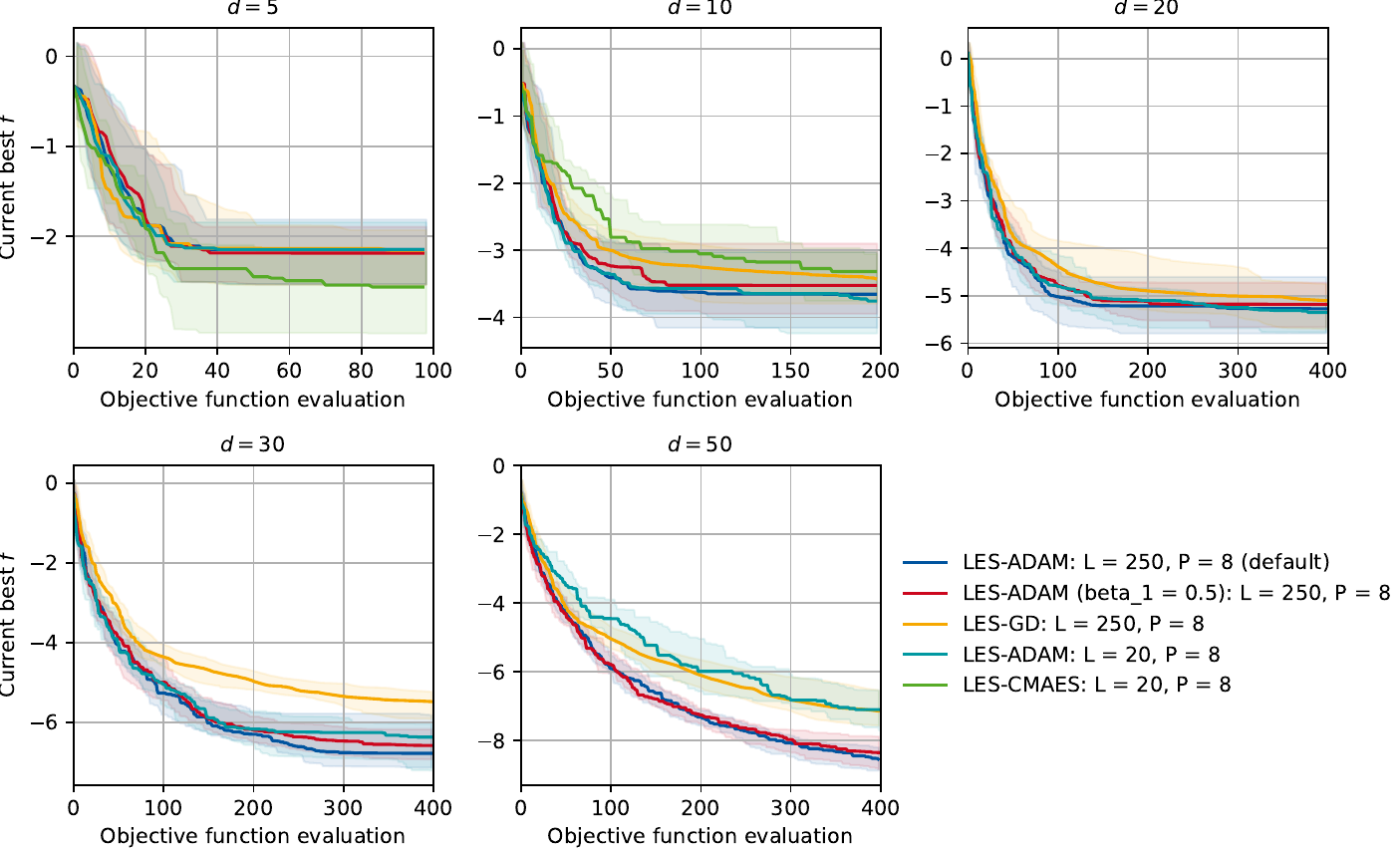}
\caption{\label{fig:ablation_optimizer_low}Evaluating LES with different local optimizers (Out-Of-Model Comparison on GP-Samples - low complexity)} 
\end{figure}

\begin{figure}[H]
\includegraphics[width = \textwidth]{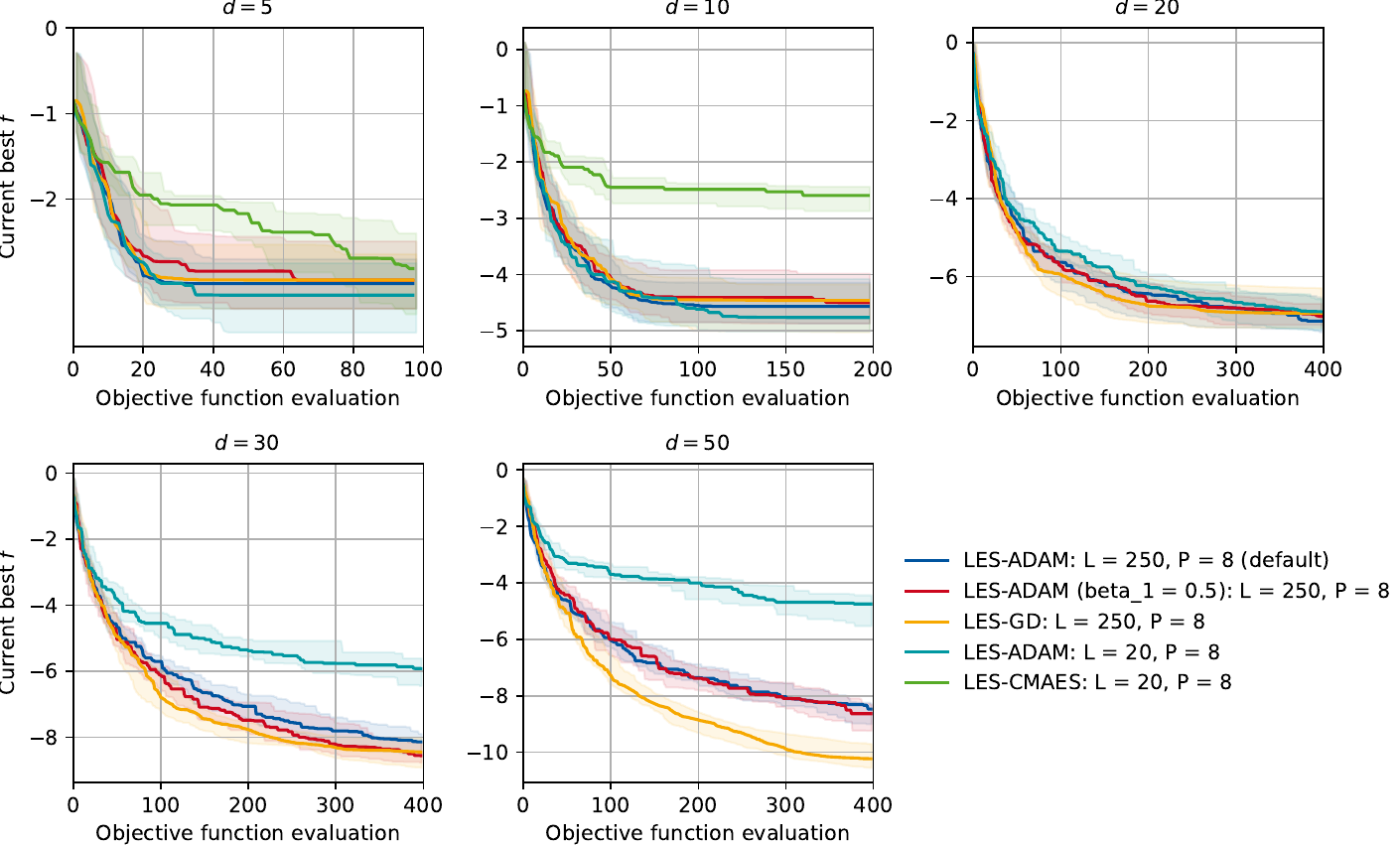}
\caption{\label{fig:ablation_optimizer_medium}Evaluating LES with different local optimizers (Out-Of-Model Comparison on GP-Samples - medium complexity)} 
\end{figure}

\begin{figure}[H]
\includegraphics[width = \textwidth]{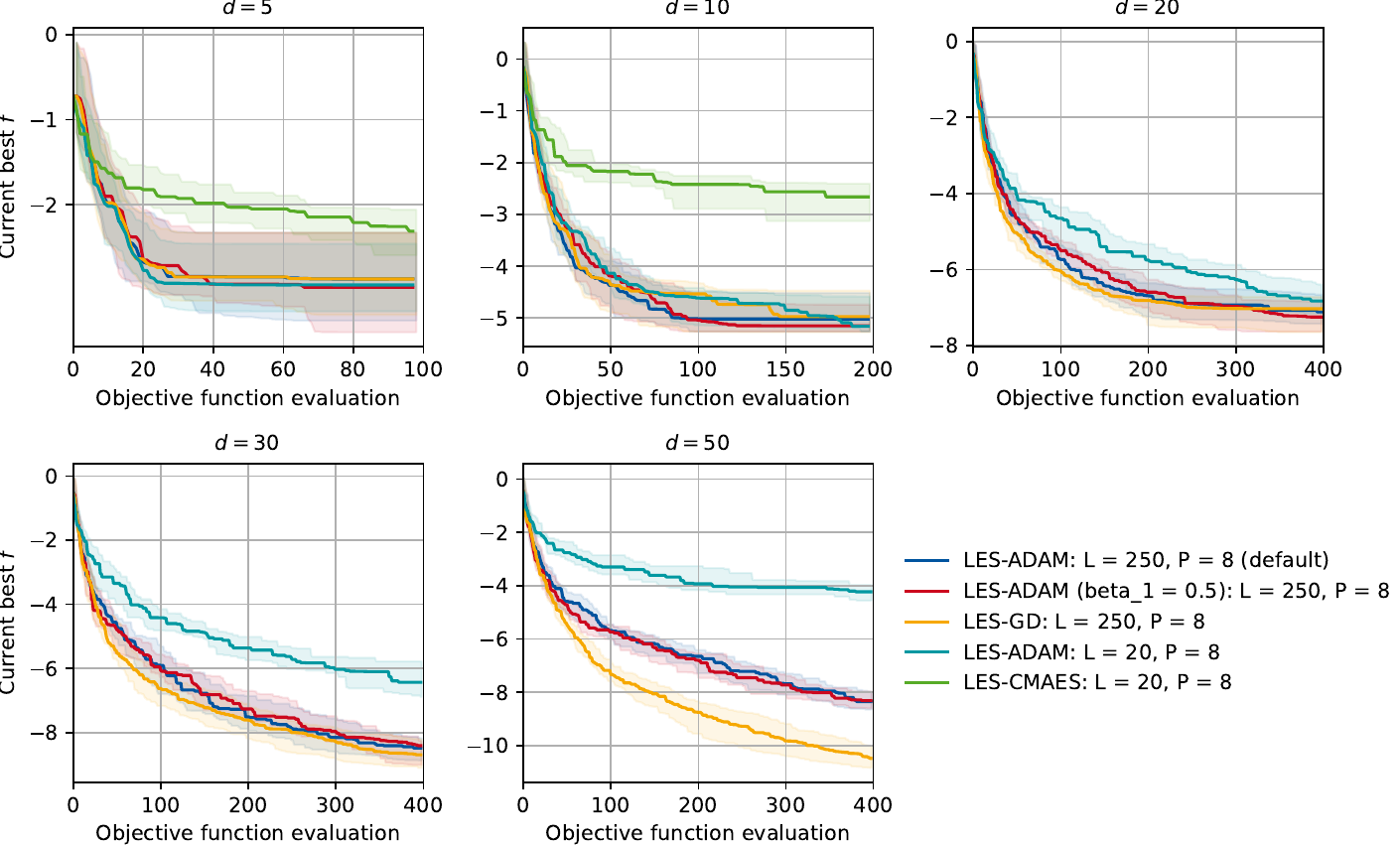}
\caption{\label{fig:ablation_optimizer_high}Evaluating LES with different local optimizers (Out-Of-Model Comparison on GP-Samples - high complexity)} 
\end{figure}
\newpage

\section{A Stopping Criterion for Local Entropy Search} \label{appdx:stopping}

\subsection{Method} \label{sec:stopping}

In practical applications, it is essential to determine when to terminate the optimization. In the previous section, we performed local optimization on multiple posterior samples. This enables us to adopt the Monte-Carlo-based stopping criterion proposed by \cite{wilson2024stopping} in a local setting without incurring significant additional cost, as we can directly reuse the samples generated during the acquisition step.
To achieve this we define a new notion of local regret:

\begin{definition}[Local simple regret]\label{def:local_simple_regret}
  Given the posterior GP $f_{t}$ at BO step $t$, the
  \emph{local simple regret} with respect to the model-based local optimum $f^{\star}_{t} = \sup_x f_t(x)$ of a candidate point $x\in\mathcal X$ is
  \begin{equation}
      r^{\mathcal O}_{t}(x)
      \;=\;
      f^{\star}_{t}-f_{t}(x).   \label{eq:local_simple_regret}
  \end{equation}
\end{definition}

Local regret formalizes how close we are to the best value reachable from the user’s initial guess by the optimizer $\mathcal{O}$. This is attractive when the global optimum is irrelevant or unattainable in practice.
We stop the optimization if the current solution is within $\varepsilon$ of the optimum with probability $\delta$, i.e., when it is  $(\varepsilon,\delta)$ locally optimal:
\begin{definition}[(\(\varepsilon,\delta\))-local optimality]\label{def:local_optimality}
  Fix tolerances \(\varepsilon>0\) and \(\delta\in(0,1)\).
  A point \(x\) observed up to step \(t\) is
  \emph{\((\varepsilon,\delta)\)-locally-optimal} (with respect to
  \(\params_0\) and optimizer \(\mathcal O\)) if
  \begin{equation}
      \Pr\!\Bigl[r^{\mathcal O}_{t}(x)\le\varepsilon\Bigr]
      \;\ge\;
      1-\delta.\label{eq:local_optimality}
  \end{equation}
\end{definition}
We estimate the probability of the regret being smaller than epsilon using Monte-Carlo sampling: 
\begin{equation}
\mathrm{Pr} \left(r_\boiter
\leq \varepsilon\right) 
\approx \frac{1}{L} \sum_{l=1}^L \mathds{1}\left(r_t^l \leq \varepsilon\right) = \frac{1}{L} \sum_{l=1}^L \mathds{1}\left( \obj^l(\hat{\params}^*_\boiter) - \obj^l(\params^{l,*}) \leq \varepsilon\right).
\end{equation}
We follow \cite{wilson2024stopping} and design a probabilistic stopping rule that leads to bounded \emph{local} regret \eqref{eq:local_simple_regret} with high (within-model) probability. Next, we restate the formal results from \cite{wilson2024stopping} for LES.
\begin{assumption}\label{ass:wilson}
\phantom{a}
\begin{compactitem}
    \item The search space is the unit hyper-cube $\mathcal{X}=[0,1]^{D}$.
    \item There exists a constant $L_{k} > 0$ so that $\forall\,x,x'\in\mathcal{X}, \, \bigl|k(x,x)-k(x,x')\bigr| \;\le\; L_{k}\,\|x-x'\|_{\infty}$. 
    \item The sequence of query locations $(x_t)$ is almost surely dense in $\mathcal{X}$.
\end{compactitem}
\end{assumption}
We show in Appx.~\ref{apdx:les_dense} that LES fulfills the third assumption for specific kernels.
\begin{theorem}[Proposition 2, \cite{wilson2024stopping}]%
\label{prop:wilson_P2}
Assume \ref{ass:wilson}.  Given a risk tolerance \(\delta>0\), define non-zero probabilities
\(\delta_{\mathrm{mod}}\) and \(\delta_{\mathrm{est}}\) such that
\(\delta_{\mathrm{mod}}+\delta_{\mathrm{est}}\le \delta\) and let
\(\bigl(\delta^{t}_{\mathrm{test}}\bigr)_{t\ge 0}\) be a positive
sequence so that
\(\sum_{t=0}^{\infty}\delta^{t}_{\mathrm{test}}\le \delta_{\mathrm{est}}\).
For any regret bound \(\varepsilon>0\), if the Monte-Carlo test of \cite[Alg.\,2]{wilson2024stopping} is run at each
step \(t\in\mathbb N_{0}\) with tolerance \(\delta^{t}_{\mathrm{test}}\)
to decide whether a point
$ 
$
satisfies
$
  \Pr\!\Bigl[r^{\mathcal O}_{t}(x)\le\varepsilon\Bigr]\;\ge\;1-\delta_{\mathrm{mod}},
$
then LES almost surely terminates and returns a solution with probabilistic regret that satisfies Definition~\ref{def:local_optimality}.
\end{theorem}

\subsection{Results}

The stopping times (Sec. \ref{sec:stopping}) for LES on the out-of-model comparison with low model complexity are in Tab. \ref{wrap-tab:1}. For $d\geq 30$ fewer than half the runs stop within the budget of 400 evaluations.
When compared to the results in \cite{wilson2024stopping} these results show that the local optimization needs fewer samples before stopping.
These results reinforce the intuition that reaching a local optimum is easier than reaching a global one -- even in black-box optimization problems.

%
\begin{table}[H]
\centering
\caption{Median stopping times with decision every 25 queries ($\delta = 0.05$) - out of model comparison.}\label{wrap-tab:1}
\begin{tabular}{cccc}\\
\toprule  
$\varepsilon$& $d = 5$ & $d = 10$ & $d = 20$ \\ \midrule 
 0.1  & 50  & 150 & 325 \\  \midrule
 0.01 & 50 & 175 &  350 
\end{tabular}
\end{table} 

Table \ref{tab:stopping_out_of_model} summarizes the results for the stopping rule in the within model comparison case with low problem complexity. For comparison, \cite{wilson2024stopping} reported an average stopping time after $100$ queries for the 4-d within-model case. Note that we choose the results of \cite{wilson2024stopping} for the low noise case, since it is most fitting to our experiments. The parameters reported in Table \ref{tab: LES hyperparameter} lead to a BO run being stopped after $k_{\mathrm{max}} = 248$ out of $L = 250$ samples show local regret smaller than $\varepsilon$.  

\begin{table}[H]
\centering
\caption{Number of queries until half of the runs are stopped ($\delta = 0.05$). Decision every 25 queries - within model comparison.}\label{tab:stopping_out_of_model}
\begin{tabular}{cccc}\\\toprule  
$\varepsilon$& $d = 5$ & $d = 10$ & $d = 20$ \\ \midrule 
 0.1  & 50  & 150 & 275 \\  \midrule
 0.01 & 50 & 175 &  300
\end{tabular}
\end{table} 
\FloatBarrier
\newpage

\section{Alternative Information-Theoretic Local Acquisition Functions}

We propose two additional information-theoretic local acquisition functions that are closely related to the local entropy search paradigm: Local Thompson sampling and local-optimum LES. Both are conceptually more straight forward and easier to compute than LES but perform worse.

\subsection{Local Thompson Sampling} \label{sec:local_TS}
In local Thompson sampling (L-TS) we sample only one path from the GP, which we then minimize locally using the ADAM optimizer and query at its minimum.
This method proves to be significantly more computationally efficient than the entropy search approach, as it avoids the need for the relatively costly Monte Carlo approximation outlined in equation \eqref{eq:LES_full}. 

We assess local Thompson sampling to better understand the benefits of considering the distribution over descent sequences at each iteration, as implemented in LES. We expect that L-TS may not perform as well as LES, since L-TS optimizes a single descent sequence in a greedy manner. In contrast, LES recognizes that multiple descent sequences exist and seeks to maximize information gain across all of them.

\subsection{Conditioning only on the Local Optimum} \label{apdx:alternative_conditioning}

In Appx. \ref{apdx:exactmaxgeneral}, we have shown that directly conditioning on the local optimum is not possible in general. That is, we cannot condition a GP on $O_{\params_0}^{*,l}$ in 
\begin{equation}
\begin{aligned}
 & \mathbb{E}_{f} \left[\mathrm{H}\left[p\left(y(\params) \mid \bodata_\boiter, O_{\params_0}^*\right)\right]\right] \approx \frac{1}{L}\sum_{l=1}^{L} \mathrm{H}\left[p\left(y(\params) \mid \bodata_\boiter, O_{\params_0}^{*,l}\right) \right]. \\
\end{aligned} 
\end{equation}
We cannot encode that the local optimum at location $\params^{*,l}$ was reached through a local optimizer from point $\params_0$. What we can encode is that $\params^{*,l}$ is a local optimum defined by a gradient of zero and a positive (known) Hessian:  
\begin{equation}
\begin{aligned}
   &\frac{1}{L}\sum_{l=1}^{L} \mathrm{H} \left[ p \left( y(\params)  \mid \bodata_{\boiter},O_{\params_0}^{,l}  \right) \right]  \\ \approx   
   &\frac{1}{L} \sum_{l=1}^{L} \mathrm{H}\left[p\left(y(\params) \mid \bodata_\boiter \cup (\params^{*,l}, f^l(\params^{*,l})),(\params^{*,l}, \nabla f^l(\params^{*,l})), (\params^{*,l}, \Delta f^l(\params^{*,l})))\right) \right]
\end{aligned} 
\end{equation}
Note that the exact values of the observations again are irrelevant for the conditional entropy.
This gives rise to the LES-ADAM Opt. Cond. acquisition function. 

We evaluate LES-ADAM Opt. Cond. because we aim to demonstrate that the sequence leading to the local optima contains valuable information; merely conditioning on the local optimum—the final point of this descent sequence—is insufficient. 

\subsection{Results}

Figure \ref{fig:apdx_other_innfo} shows that LES-ADAM outperforms the other information-theoretic approaches, with the performance gap widening in higher dimensions. The experiments are conducted in the out-of-model GP sample scenario with medium complexity (see Sec.~\ref{sec:res_gpsample}). Overall, the results indicate that considering multiple descent sequences per iteration, as well as the entire descent sequence rather than only the distribution of local optima, improves performance. 

\begin{figure}[h]
\includegraphics[width = \textwidth]{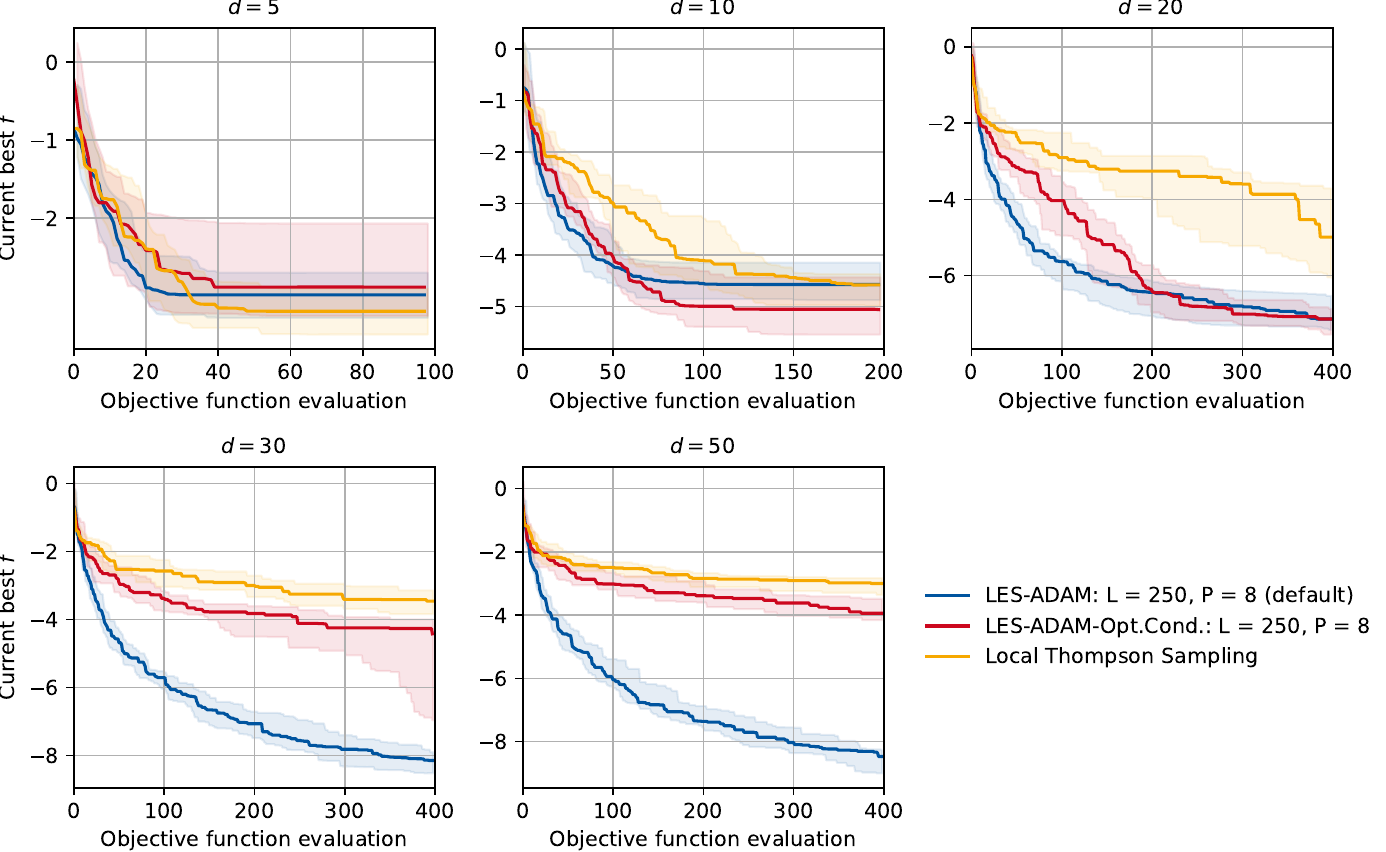}
\caption{\label{fig:apdx_other_innfo} Median, 25-, and 75-percent quantiles, out-Of-model comparison on GP-Samples - medium complexity).} 
\end{figure}
\FloatBarrier
\newpage

\section{Theoretical Results}\label{apdx:proofs}

This section contains some theoretical results in support of the main claims of the paper.
We use the notational shorthand $k_t(\cdot, \cdot) = k(\cdot, \cdot \mid \bodata_\boiter )$.

\subsection{LES queries are dense}\label{apdx:les_dense}

\begin{lemma}[Density of LES maximizers]\label{lemma:les_density_apdx}
Assume $\mathcal{X}=[0,1]^{D}$ and let  
$k_{\boiter}:\mathcal X\times\mathcal X\to\mathbb R$ be a continuous, positive-definite kernel that admits the no-empty-ball property \cite[Definition 3]{wilson2024stopping}, $\forall \params,\params' \not\in \mathcal{D}_\boiter \,\, k(\params,\params') > 0$, and the descent sequence $(\descp_n)$ contains at least one $\descp \not\in \mathcal{D}_\boiter$ almost surely.  
Fix a noise variance $\gamma^{2}\ge 0$ and denote by $k_{\boiter}$, $\sigma_{\boiter}^{2}$ the posterior covariance and predictive variance after $\boiter$ evaluations. 
Then, for every $\boiter\in\mathbb N$ and all $\params,\params'\in\mathcal X$,  
\begin{equation}
k_{\boiter}(\params,\params)>\;k_{\boiter}(\params',\params')=0
\;\Longrightarrow\;
\alpha_{\mathrm{LES},\boiter}(\params)>\alpha_{\mathrm{LES},\boiter}(\params')
\label{eq:les_dense}
\end{equation}

Consequently the sequence $(\params_\boiter)$ of maximizers  
$\params_{\boiter}\in\arg\max_{\params\in\mathcal X}\alpha_{\mathrm{LES},\boiter}(\params)$  
form a dense sequence in $\mathcal X$ almost surely.
\end{lemma}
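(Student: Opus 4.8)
The plan is to pass from the entropy form of the acquisition in \eqref{eq:LES_full} to its information-gain form. Writing $\sigma_t^2(\params)=k_t(\params,\params)+\gamma^2$ for the predictive variance and $k_t^{l}(\params,\params)$ for the posterior variance at $\params$ after additionally conditioning on the $l$-th descent sequence $\DS_{\params_0}^{l}$, the $\tfrac12\log(2\pi e\,\cdot)$ constants cancel and
\begin{equation}
\alpha_{\mathrm{LES},t}(\params)=\frac{1}{L}\sum_{l=1}^{L}\frac{1}{2}\log\frac{k_t(\params,\params)+\gamma^{2}}{k_t^{l}(\params,\params)+\gamma^{2}}.
\end{equation}
Two elementary facts drive everything. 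First, conditioning never increases variance, so $0\le k_t^{l}(\params,\params)\le k_t(\params,\params)$ and every summand is non-negative, giving $\alpha_{\mathrm{LES},t}\ge 0$. Second, conditioning on a single noisy sub-observation at a point $\descp$ obeys the rank-one identity $k_t^{\descp}(\params,\params)=k_t(\params,\params)-k_t(\params,\descp)^2/(k_t(\descp,\descp)+\gamma^2)$, which is a strict decrease exactly when $k_t(\params,\descp)\neq0$. I would first establish the pointwise inequality \eqref{eq:les_dense} and then bootstrap it to density.

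For \eqref{eq:les_dense}, the $\params'$ side is immediate: if $k_t(\params',\params')=0$ then $0\le k_t^{l}(\params',\params')\le 0$ for all $l$, each log-ratio is $\log 1=0$, and $\alpha_{\mathrm{LES},t}(\params')=0$. For the $\params$ side I use both hypotheses. Almost surely at least one sampled descent sequence contains an iterate $\descp\notin\mathcal D_t$; since $k_t(\params,\params)>0$ the point $\params$ is itself not yet pinned down, and the positivity / no-empty-ball assumption yields $k_t(\params,\descp)>0$ (and $k_t(\descp,\descp)+\gamma^2>0$, so the rank-one denominator is positive). Hence conditioning the corresponding $k_t^{l}$ on the sub-observation at $\descp$ strictly lowers the variance at $\params$, that summand is strictly positive, the rest are non-negative, and $\alpha_{\mathrm{LES},t}(\params)>0=\alpha_{\mathrm{LES},t}(\params')$. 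This proves \eqref{eq:les_dense} (as an almost-sure statement, inherited from the freshness of the descent iterate).

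The density claim then follows by contradiction. The content of \eqref{eq:les_dense} is that any point of positive posterior variance strictly beats any point of zero variance, so each maximiser $\params_t$ has positive posterior variance. Suppose on an event of positive probability $(\params_t)$ is not dense; then there is an open ball $B\subset\mathcal X$ containing no query point. By the no-empty-ball property \cite[Def.~3]{wilson2024stopping}, observations that never enter $B$ cannot drive the posterior variance inside $B$ to zero, so it stays bounded below by some $c>0$ for all $t$. On the other hand, by compactness of $\mathcal X$ the query sequence has a convergent subsequence $\params_{t_j}\to\params_\infty$; since these queries accumulate at $\params_\infty$, the no-empty-ball property forces $k_{t_j}(\params_{t_j},\params_{t_j})\to0$, and the uniform upper bound $\alpha_{\mathrm{LES},t}(\params)\le\tfrac12\log\big((k_t(\params,\params)+\gamma^2)/\gamma^2\big)$ (from $k_t^{l}\ge0$) gives $\alpha_{\mathrm{LES},t_j}(\params_{t_j})\to0$. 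Since $\params_{t_j}$ maximises the acquisition, this contradicts the persistence of strictly positive attainable information gain inside the unvisited region $B$, so $(\params_t)$ is dense almost surely.

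I expect the main obstacle to be precisely the last step: turning the persistent variance $c>0$ inside $B$ into a uniform positive lower bound on $\sup_\params\alpha_{\mathrm{LES},t}(\params)$ that contradicts $\alpha_{\mathrm{LES},t_j}(\params_{t_j})\to0$. Unlike a pure variance criterion, the acquisition weights variance reduction through correlations $k_t(\params,\descp)$ with the (random) descent iterates, so one must argue — via the no-empty-ball property, kernel positivity, and the rank-one identity — that an unvisited high-variance region always admits a query there whose information gain is bounded away from zero. Making that lower bound quantitative, and handling the $\gamma^2>0$ normalisation consistently in both the lower and upper acquisition bounds, is the technical heart; the pointwise inequality \eqref{eq:les_dense} is comparatively routine once rank-one reduction and descent-sequence freshness are in hand.
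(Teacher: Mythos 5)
Your treatment of the implication \eqref{eq:les_dense} matches the paper's: the $\params'$ side follows because $k_t(\params',\params')=0$ forces $k_t(\params',\cdot)\equiv 0$ by positive semidefiniteness, so no conditioning changes the predictive variance and the acquisition vanishes; the $\params$ side follows because the almost-sure presence of a fresh iterate $\descp\notin\mathcal D_t$ together with kernel positivity makes the variance-update quadratic form strictly positive, so at least one summand of the Monte-Carlo average is strictly positive. (The paper uses the full multivariate update where you use the rank-one identity; this is immaterial. You are in fact slightly more careful than the paper in saying ``one summand strictly positive, the rest non-negative'' rather than ``each logarithm is positive.'')

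The divergence, and the gap, is in the density conclusion. The paper does not prove this step at all: it invokes \cite[Proposition 4]{wilson2024stopping}, whose hypotheses are exactly the preference property \eqref{eq:les_dense} together with the no-empty-ball and Lipschitz-kernel assumptions, and whose conclusion is density of the maximizers. Your attempted self-contained contradiction argument does not close, for the reason you yourself flag: the implication \eqref{eq:les_dense} only compares positive-variance points against \emph{exactly-zero}-variance points, so persistence of variance $c>0$ inside the unvisited ball $B$ does not yield a uniform positive lower bound on $\sup_{\params\in B}\alpha_{\mathrm{LES},t}(\params)$ --- the information gain at a high-variance point can still be arbitrarily small if the sampled descent sequences carry little correlation with it, so $\alpha_{\mathrm{LES},t_j}(\params_{t_j})\to 0$ produces no contradiction. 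A secondary issue is your step ``$\params_{t_j}\to\params_\infty$ and NEB force $k_{t_j}(\params_{t_j},\params_{t_j})\to 0$'': NEB gives vanishing variance at the adherent point $\params_\infty$, and transferring this to the moving points $\params_{t_j}$ requires the Lipschitz bound $|k(x,x)-k(x,x')|\le L_k\|x-x'\|_\infty$ from Assumption~\ref{ass:wilson}, which you never invoke. The correct and intended move is simply to verify the hypothesis of the cited proposition (which is what \eqref{eq:les_dense} does) and let that proposition deliver density, rather than re-deriving it.
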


\begin{proof}
If $k_{\boiter}(\params',\params')=0$, positive-semidefiniteness implies $k_{\boiter}(\params',\descp)=0$ for every $\descp\in\mathcal X$.  
Hence, $\sigma^{2}_{\boiter}(\params')=\sigma^{2}_{\boiter\,|\,X_{\mathrm{DS}}}(\params')=\gamma^{2}$ for every Monte-Carlo draw $X_{\mathrm{DS}}$, so 
$\alpha_{\mathrm{LES},\boiter}(\params')=0$. 

Take $\params$ with $k_{\boiter}(\params,\params)>0$.  
Because the the descent sequence $(\descp_n)$ contains at least one $\descp \not\in \mathcal{D}_\boiter$ almost surely the posterior covariance is not zero the covariance vector  $k_{t}(\params,X_{\mathrm{DS}})$ is non-zero.  
The GP variance update gives
\begin{equation}
    \sigma^{2}_{t\,|\,X_{\mathrm{DS}}}(\params)
= \sigma^{2}_{t}(\params)
  - k_{t}(\params,X_{\mathrm{DS}})
    \bigl[K_{t}(X_{\mathrm{DS}},X_{\mathrm{DS}})+\gamma^{2}I\bigr]^{-1}
    k_{t}(X_{\mathrm{DS}},\params),
\end{equation}
and the quadratic form on the right is strictly positive, hence  
$\sigma^{2}_{t\,|\,X_{\mathrm{DS}}}(\params)<\sigma^{2}_{t}(\params)$.  
Therefore each logarithm inside $\alpha_{\mathrm{LES},t}(\params)$ is positive and their expectation is strictly positive $\alpha_{\mathrm{LES},t}(\params)>0$.

Density of the query points now follow from \cite[Proposition 4]{wilson2024stopping}.
\end{proof}

\subsection{Gradient descent paths under a GP prior with a squared exponential kernel}\label{apdx:gds_se}

In this section we zoom in on a specific instantiation of LES that samples candidate points by running gradient-descent \eqref{eq:gd_seq} on functions drawn from a squared-exponential (SE) GP prior. This section explains why that particular pairing is a sensible starting point. We show, with a suitably step size, a gradient descent sequence starting from an initial design can reach any subset of the domain with positive prior probability. In addition, for any finite horizon the distribution of such sequences has full support on the corresponding product space and can therefore realize any finite sequence.
Importantly, these reachability and support guarantees ensure that no part of the search space is ruled out by construction in this setting.

\begin{assumption}[Design domain]\label{ass:domain}
  The search space
  $\mathcal X\subset\mathbb{R}^{d}$ is non-empty, compact, convex, and has non-empty interior.
\end{assumption}

We formalize the assumptions as follows.

\begin{assumption}[Step size]\label{ass:stepsize}
Let $\mathcal{X}\subset\mathbb{R}^{D}$ be compact.  
For every realized objective $f\in\mathcal C^{1}(\mathcal{X})$ denote by $L(f)$ the global Lipschitz constant of its gradient.
Choose a step size $\eta(f)>0$ satisfying
\[
  \eta(f)\,L(f)\;<\;1.
\]
With this choice the gradient–descent map
\(
  \Phi_{f}(\params)\;=\;\params-\eta(f)\,\nabla f(\params)
\)
is a strict contraction on $\mathcal{X}$.
\end{assumption}

\begin{assumption}[Squared–exponential prior]\label{ass:gp}
  The objective is a random draw from
  \(
      p(f) \sim GP\!\bigl(0,k_{\mathrm{SE}}\bigr),
  \)
  so that $f\in C^{\infty}(\mathcal X)$ almost surely and the associated RKHS is dense in $C^{\infty}(\mathcal X)$ with the $C^{1}$-norm.
\end{assumption}

\begin{lemma}[Open--set reachability of GD paths]\label{lem:open_set_reach}
  Let Assumption~\ref{ass:domain}, \ref{ass:stepsize}, \ref{ass:gp} hold.
  Define the gradient–descent iterates as in \eqref{eq:gd_seq} so that
  \begin{equation} 
      \descp_{0}\in\mathcal X,\qquad
      \descp_{n+1}=\descp_{n}-\eta\nabla f(\descp_{n}),\quad n\ge 0 .
  \end{equation}

  Then for every non-empty open set $U\subset\mathcal X$
  \begin{equation}\label{eq:reach_prob}
      \Pr_{f}\!\Bigl[
          (\descp_n)\cap U\neq\varnothing
      \Bigr]\;>\;0.
  \end{equation}
\end{lemma}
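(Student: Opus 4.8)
The plan is to reduce the probabilistic statement \eqref{eq:reach_prob} to a positive-probability statement about a $C^{1}$-ball around a single, explicitly constructed \emph{steering} objective, and then to invoke the support characterization of the SE Gaussian prior furnished by the squared--exponential prior assumption. First I would fix any target $u^{*}$ in the non-empty open set $U$ and take the smooth quadratic $g(\params)=\tfrac12\lVert\params-u^{*}\rVert^{2}$. Its gradient-descent map is the affine contraction $\Phi_{g}(\params)=(1-\eta)\params+\eta\,u^{*}$, whose iterate $\descp_{n+1}=(1-\eta)\descp_{n}+\eta\,u^{*}$ is a convex combination of $\descp_{0}\in\mathcal X$ and $u^{*}\in\mathcal X$; by convexity of $\mathcal X$ (the design-domain assumption) the iterates remain in $\mathcal X$ and converge geometrically to $u^{*}$. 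Since $g$ is $C^{\infty}$ with Hessian $\mathbf I$, any fixed $\eta\in(0,1)$ satisfies the step-size/contraction requirement. Consequently there is a finite horizon $N$ with $\descp_{N}(g)\in U$.

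Second, I would establish stability of the first $N$ iterates under $C^{1}$ perturbations of the objective. Point-evaluation of the gradient, $f\mapsto\nabla f(\descp)$, is a bounded linear functional on $C^{1}(\mathcal X)$, so by induction each iterate $\descp_{k}(f)$ is a continuous function of $f$ in the $C^{1}$-norm. Because $U$ is open and $\descp_{N}(g)\in U$, there exists a $C^{1}$-ball $B$ centred at $g$ such that every $f\in B$ has its first $N$ iterates well-defined and contained in $\mathcal X$ (they remain near the trajectory of $g$, which lies in $\mathcal X$) and satisfies $\descp_{N}(f)\in U$. Note that only \emph{finite-horizon} continuity is needed here, so I do not require the perturbed map $\Phi_{f}$ to be a global contraction. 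This yields the inclusion $\{f\in B\}\subseteq\{(\descp_{n})\cap U\neq\varnothing\}$.

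Third, I would show $\Pr_{f}[f\in B]>0$. Viewing the SE prior as a centred Gaussian (Radon) measure on the separable Banach space $C^{1}(\mathcal X)$, its topological support equals the $C^{1}$-closure of its Cameron--Martin space, i.e.\ of the RKHS. The squared--exponential prior assumption states that this RKHS is $C^{1}$-dense in $C^{\infty}(\mathcal X)$, and $C^{\infty}(\mathcal X)$ is in turn $C^{1}$-dense in $C^{1}(\mathcal X)$; hence the support is all of $C^{1}(\mathcal X)$ and every non-empty $C^{1}$-open set carries positive mass. Since $g\in C^{\infty}$ lies in the support and $B$ is a $C^{1}$-ball about $g$, we obtain $\Pr_{f}[f\in B]>0$, and therefore $\Pr_{f}[(\descp_{n})\cap U\neq\varnothing]\ge\Pr_{f}[f\in B]>0$, which is exactly \eqref{eq:reach_prob}.

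The main obstacle is the measure-theoretic step: making the support statement rigorous requires fixing the right function-space topology so that (i) the iteration map is continuous, (ii) the SE measure is a genuine Radon Gaussian measure, and (iii) the support-equals-closure-of-RKHS theorem applies in that same $C^{1}$-norm. The only other technical bookkeeping is verifying that the perturbed iterates do not exit $\mathcal X$ over the finite horizon, which follows from continuity together with the convexity and non-empty interior of $\mathcal X$, but should be stated explicitly.
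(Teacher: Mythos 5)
Your proposal is correct and follows essentially the same route as the paper's proof: construct a smooth steering objective whose finite-horizon gradient-descent trajectory lands in $U$, use $C^{1}$-continuity of the finite iteration map $f\mapsto\descp_{N}(f)$ to obtain an open $C^{1}$-ball of functions sharing that property, and conclude positive probability from the RKHS-density and Gaussian-support argument. The one place you go beyond the paper is that you exhibit the steering function explicitly as the quadratic $\tfrac12\lVert\params-u^{*}\rVert^{2}$ (whose iterates are convex combinations of $\descp_{0}$ and $u^{*}$ and hence remain in $\mathcal X$), whereas the paper merely asserts that a suitable $f^{\star}$ with $\descp_{m}(f^{\star})=u$ ``can be constructed''; you also explicitly flag the need to keep the perturbed iterates inside $\mathcal X$, a point the paper leaves implicit.
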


\begin{proof}
Fix $U\neq\varnothing$ open and choose
$u\in U$ and a horizon $m\in \mathbb{N}$.

Since the map $\Phi_f(\descp)=\descp-\eta\nabla f(\descp)$ is a contraction (Assumption~\ref{ass:stepsize}) one can constructs a smooth function $f^\star\in C^{\infty}(\mathcal X)$ such that
\begin{equation}\label{eq:hit_u}
 \descp_{m}(f^\star)=u,
 \qquad 
 \|\nabla f^\star\|_{\mathrm{Lip}}\le L_\star .
\end{equation}

Since the RKHS $\mathcal H_{k_{\mathrm{SE}}}$ of the squared–exponential kernel is dense in $C^{\infty}(\mathcal X)$, every $C^{1}$–ball $B_\varepsilon(f^\star)=\{f:\|f-f^\star\|_{C^{1}}<\varepsilon\}$ contains at least one element of $\mathcal H_{k_{\mathrm{SE}}}$. By the Gaussian–measure support theorem
\cite[Lem.~5.1]{vaart2008reproducing}, every open set that intersects $\mathcal H_{k_{\mathrm{SE}}}$ has positive prior probability, hence
\begin{equation}\label{eq:support}
  \Pr\!\bigl[\|f-f^\star\|_{C^{1}}<\varepsilon\bigr] \;>\; 0 .
\end{equation}

The mapping
$
  C^{1}(\mathcal X)\ni g \;\longmapsto\;
  \Phi^{(m)}_{g}(\params_{0})
$
is continuous in the $C^{1}$-norm when $\eta L_\star<1$.
Hence there exists $\varepsilon>0$ such that
$
\|f-f^\star\|_{C^{1}}<\varepsilon
\;\Longrightarrow\;
\params_{m}(f)\in U .
$

Combining this with \eqref{eq:support} gives $\Pr\bigl[\params_{m}(f)\in U\bigr]>0$, which implies \eqref{eq:reach_prob}.
\end{proof}

\begin{corollary}[Full support of finite GD paths]\label{lem:full_support}
  Fix an integer horizon $N\ge 0$ and define
  \begin{equation}
    \optim: C^{1}(\mathcal X)\longrightarrow\mathcal X^{N+1},
    \qquad
    \optim(f):=(\params_{0},\params_{1},\dots,\params_{N}),
  \end{equation}
  where $(\params_{t})$ are obtained by gradient descent.
  Under Assumptions~\ref{ass:domain}–\ref{ass:gp}, the
  push-forward measure
  $\optim_{\sharp}\bigl[GP(0,k_{\mathrm{SE}})\bigr]$
  has full support on $\mathcal X^{N+1}$; i.e.,
  for every open cylinder set
  $U_{0}\times\cdots\times U_{N}\subset\mathcal X^{N+1}$
  with all $U_{t}$ open and non-empty,
  \begin{equation}
    \Pr_{f}\!\Bigl[
      (\params_{0},\dots,\params_{N}) \in U_{0}\times\cdots\times U_{N}
    \Bigr] \;>\; 0 .
  \end{equation}
\end{corollary}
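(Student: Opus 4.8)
The plan is to reduce the statement to the single-path construction already used in Lemma~\ref{lem:open_set_reach} and then reuse its measure-theoretic machinery verbatim. Write $\Phi_f(\params)=\params-\eta\nabla f(\params)$ and let $\Psi_N(f):=\bigl(\params_0,\Phi_f(\params_0),\dots,\Phi_f^{(N)}(\params_0)\bigr)=\optim(f)$ denote the finite-horizon path map. Since each coordinate is a finite composition of the map $(\params,g)\mapsto \params-\eta\nabla g(\params)$, which is jointly continuous in $\params$ and in $g$ with respect to the $C^1$-norm, the map $\Psi_N:C^1(\mathcal X)\to\mathcal X^{N+1}$ is $C^1$-continuous. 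Consequently $\Psi_N^{-1}(U_0\times\cdots\times U_N)$ is open in $C^1(\mathcal X)$, so it suffices to exhibit a single reference objective $f^\star$ lying in this preimage, i.e.\ one whose length-$(N{+}1)$ descent path lands strictly inside the cylinder. Once such an $f^\star$ is found, the argument closes exactly as in Lemma~\ref{lem:open_set_reach}: the open $C^1$-ball around $f^\star$ meets the RKHS $\mathcal H_{k_{\mathrm{SE}}}$ by $C^1$-density (Assumption~\ref{ass:gp}), and by the Gaussian support theorem \cite[Lem.~5.1]{vaart2008reproducing} every open set meeting the RKHS carries positive prior mass, giving $\Pr_f[\Psi_N(f)\in U_0\times\cdots\times U_N]>0$. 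Because the first coordinate is the deterministic start $\params_0$, this proves full support on the slice $\{\params_0\}\times\mathcal X^{N}$; positivity for a cylinder requires $\params_0\in U_0$, which I take as the intended reading of the statement.

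It remains to construct $f^\star$, which is the heart of the argument. Using that $\mathcal X$ is a convex body (Assumption~\ref{ass:domain}), I first pick interior target points $u_t\in U_t\cap\operatorname{int}\mathcal X$, with $u_0=\params_0$. To force the prescribed orbit $\params_t=u_t$ under $\Phi_{f^\star}$ I need $\Phi_{f^\star}(u_t)=u_{t+1}$, i.e.\ $\nabla f^\star(u_t)=\eta^{-1}(u_t-u_{t+1})=:g_t$ for $t=0,\dots,N-1$. This is a finite Hermite interpolation problem for a gradient field: build a smooth $f^\star$ taking the prescribed gradient values $g_t$ at the finitely many distinct nodes $u_t$. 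I would realize it as $f^\star=\sum_{t=0}^{N-1}\chi_t(\cdot)\,\langle g_t,\cdot-u_t\rangle$, where $\chi_t$ is a smooth cutoff equal to $1$ near $u_t$ and supported in a small ball about $u_t$, the balls chosen pairwise disjoint; then $\nabla f^\star(u_t)=g_t$, $f^\star\in C^\infty(\mathcal X)$, and the descent path visits each $U_t$.

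The main obstacle is reconciling this construction with the step-size condition of Assumption~\ref{ass:stepsize}, namely $\eta\,L(f^\star)<1$. Because $|g_t|$ scales like $1/\eta$ while the cutoffs localize the gradient over small balls, the Lipschitz constant $L(f^\star)$ grows as $\eta$ shrinks, so $\eta$ and $f^\star$ cannot be chosen independently. My plan is to decouple them by fixing the node geometry and the jumps first and only then selecting the bump widths and $\eta$ jointly: for a fixed horizon the finite path is well defined for any $\eta>0$, and the contraction requirement — which in Assumption~\ref{ass:stepsize} serves to guarantee convergence of the \emph{infinite} sequence — only needs to hold along the realized finite path, which I would arrange by choosing the targets $u_t\in U_t$ so that successive jumps $|u_t-u_{t+1}|$ decrease and by keeping all iterates inside $\mathcal X$ via convexity. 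Verifying that a single $\eta$ simultaneously yields $\eta L(f^\star)<1$, keeps $\Phi_{f^\star}$ mapping $\mathcal X$ into itself, and leaves a full $C^1$-ball of nearby objectives with the same qualitative behaviour is the one genuinely technical step; everything downstream of it is the continuity-plus-support argument inherited from Lemma~\ref{lem:open_set_reach}.
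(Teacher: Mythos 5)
Your overall architecture (continuity of the finite-horizon path map, a single reference objective $f^\star$, $C^1$-density of the RKHS, and the Gaussian support theorem) is the same machinery the paper uses, but you package it differently: the paper proves the corollary by induction on $N$, invoking Lemma~\ref{lem:open_set_reach} once per coordinate to extend $f^\star$ and shrink $\varepsilon$, whereas you attempt a one-shot Hermite-type interpolation that pins the entire orbit at once. Your remark that full support can only hold on the slice $\{\params_0\}\times\mathcal X^{N}$ (i.e.\ one must read $\params_0\in U_0$) is correct and is glossed over by the paper's ``trivial'' base case. The real divergence is the construction of $f^\star$, and that is exactly where your argument does not close.

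The step you flag as ``the one genuinely technical step'' is not merely technical --- as proposed, it fails. With disjoint bumps $\chi_t$ supported in balls $B(u_t,r_t)$, disjointness forces $r_t<\|u_t-u_{t+1}\|=\eta\|g_t\|$, while $\nabla f^\star$ must fall from $g_t$ at $u_t$ to $0$ on $\partial B(u_t,r_t)$, so $L(f^\star)\ge \|g_t\|/r_t>1/\eta$ and hence $\eta L(f^\star)>1$ for \emph{every} choice of radii; shrinking successive jumps does not help because the obstruction is local to each node. Worse, no alternative construction can rescue the statement for arbitrary cylinders: whenever $\eta(f)L(f)<1$, the map $\Phi_f=\mathrm{id}-\eta\nabla f$ satisfies $\langle\Phi_f(x)-\Phi_f(y),\,x-y\rangle\ge(1-\eta L)\|x-y\|^{2}>0$, so every realizable orbit obeys the acute-angle condition $\langle\descp_{t+2}-\descp_{t+1},\,\descp_{t+1}-\descp_{t}\rangle>0$; a cylinder with $U_2$ a small ball around $\params_0$ and $U_1$ bounded away from it therefore has probability zero, so the corollary for $N\ge 2$ is incompatible with Assumption~\ref{ass:stepsize}. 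The paper's own inductive step hits the same wall: it needs to pass from $U_{N-1}$ to $U_N$ in exactly one GD step while keeping $\eta L(f^\star)<1$, which is precisely the forbidden transition, hidden inside the unproven claim \eqref{eq:hit_u}. Your instinct that the contraction requirement is the crux, and that it should only be imposed along the realized finite path, is the right diagnosis --- but adopting it amounts to amending Assumption~\ref{ass:stepsize}, at which point your bump construction does go through for any fixed $\eta>0$ rather than completing the proof of the statement as written.
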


\begin{proof}
  Proceed inductively on $N$.

  \emph{Base case $N=0$:} trivial because $\params_{0}$ is fixed.

  \emph{Inductive step:}
  Assume the claim holds up to horizon $N-1$. 
  Given open $U_{0},\dots,U_{N}$, the induction hypothesis provides a function $f^\star$ and $\varepsilon>0$ such that
  $\|f-f^\star\|_{C^{1}}<\varepsilon$ implies $(\params_{0},\dots,\params_{N-1})\in U_{0}\times\cdots\times U_{N-1}$.
  Apply Lemma~\ref{lem:open_set_reach} with starting point
  $\params_{N-1}(f^\star)$ and target open set $U_{N}$ to obtain a further refinement $\varepsilon'$.
  Choose $\delta=\min\{\varepsilon,\varepsilon'\}$ and use the support property \eqref{eq:support} to conclude the probability is positive.
\end{proof}

As long as $k_{\boiter}(\params,\params) > 0 \;\; \forall \params \in \mathcal{X}$ the same applies to the posterior.
The proofs are identical because conditioning on finitely many points does not change the RKHS nor the support of the measure; it only shifts the mean.

In summary, Lemmas~\ref{lem:open_set_reach} and~\ref{lem:full_support} together show that, in the SE-prior/GD instantiation of LES, the candidate-generation mechanism is fully expressive: no open region is inaccessible and no finite GD sequence is excluded.

\end{document}